\def\eqref#1{equation~\ref{#1}}
\def\1{\bm{1}}
\def\vzero{{\bm{0}}}
\def\vf{{\bm{f}}}
\def\vh{{\bm{h}}}
\def\vu{{\bm{u}}}
\def\vx{{\bm{x}}}
\def\vy{{\bm{y}}}
\def\vz{{\bm{z}}}
\def\mA{{\bm{A}}}
\def\mB{{\bm{B}}}
\def\mC{{\bm{C}}}
\def\mG{{\bm{G}}}
\def\mI{{\bm{I}}}
\def\mL{{\bm{L}}}
\def\mM{{\bm{M}}}
\def\mQ{{\bm{Q}}}
\def\mU{{\bm{U}}}
\def\mX{{\bm{X}}}
\def\mY{{\bm{Y}}}
\def\mZ{{\bm{Z}}}
\DeclareMathAlphabet{\mathsfit}{\encodingdefault}{\sfdefault}{m}{sl}
\SetMathAlphabet{\mathsfit}{bold}{\encodingdefault}{\sfdefault}{bx}{n}
\def\sR{{\mathbb{R}}}
\DeclareMathOperator{\Tr}{Tr}
\theoremstyle{plain}
\newtheorem{theorem}{Theorem}
\newtheorem{lemma}{Lemma}
\theoremstyle{definition}
\theoremstyle{remark}
\def\boxk{\scalebox{0.7}{\boxed{\!\kappa\!}}}
\newcommand{\myparagraph}[1]{\noindent\textbf{#1.}}
\newcolumntype{P}[1]{>{\centering\arraybackslash}p{#1}}
\def\ie{i.e.}
\def\eg{e.g.}
\def\bLambda{{\bm{\Lambda}}}
\def\rank{\mathop{\min}\{d,N\}}
\DeclareMathOperator{\Diag}{Diag}
\DeclareMathOperator{\diag}{diag}
\DeclareMathOperator{\tr}{tr}
\def\c{\boldsymbol{c}}
\def\bbR{\mathbb{R}}
\def\mcb{\color{black}}
\def\mcr{\color{red}}
\title{Exploring a Principled Framework for Deep Subspace Clustering}
\author{Xianghan Meng\textsuperscript{\textdagger}, Zhiyuan Huang\textsuperscript{\textdagger} \& Wei He\\
Beijing University of Posts and Telecommunications, Beijing 100876, P.R. China\\
\texttt{\{mengxianghan,huangzhiyuan,wei.he\}@bupt.edu.cn} \\
\And
Xianbiao Qi \& Rong Xiao \\
Intellifusion, Shenzhen, P.R. China\\
\AND
Chun-Guang Li\thanks{Corresponding author. \textsuperscript{\textdagger} These two authors are equally contributed.}\\
Beijing University of Posts and Telecommunications, Beijing 100876, P.R. China\\
\texttt{lichunguang@bupt.edu.cn} \\
}
\begin{document}

\maketitle

\begin{abstract}
Subspace clustering is a classical unsupervised learning task, built on a basic assumption that high-dimensional data can be approximated by a union of subspaces (UoS). Nevertheless, the real-world data are often deviating from the UoS assumption. To address this challenge, state-of-the-art deep subspace clustering algorithms attempt to jointly learn UoS representations and self-expressive coefficients. However, the general framework of the existing algorithms suffers from a catastrophic feature collapse and lacks a theoretical guarantee to learn desired UoS representation. In this paper, we present a Principled fRamewOrk for Deep Subspace Clustering (PRO-DSC), which is designed to learn structured representations and self-expressive coefficients in a unified manner. Specifically, in PRO-DSC, we incorporate an effective regularization on the learned representations into the self-expressive model, prove that the regularized self-expressive model is able to prevent feature space collapse, and demonstrate that the learned optimal representations under certain condition lie on a union of orthogonal subspaces. Moreover, we provide a scalable and efficient approach to implement our PRO-DSC and conduct extensive experiments to verify our theoretical findings and demonstrate the superior performance of our proposed deep subspace clustering approach. 
The code is available at: \url{https://github.com/mengxianghan123/PRO-DSC}.

\end{abstract}

\section{Introduction}
\label{Sec:Introduction}

Subspace clustering is an unsupervised learning task, aiming to partition high dimensional data that are approximately lying on a union of subspaces (UoS), and finds wide-ranging applications, such as motion segmentation~\citep{Costeira:IJCV98,Vidal:IJCV08,Rao:PAMI10}, hybrid system identification~\citep{Vidal:ACC04,Bako-Vidal:HSCC08}, image representation and clustering~\citep{Hong:TIP06,Lu:ECCV12}, 
genes expression clustering~\citep{McWilliams:DMKD14} and so on.

Existing subspace clustering algorithms can be roughly divided into four categories: iterative methods~\citep{Tseng:JOTA2000,Ho:CVPR03,Zhang:ICCVworkshop09}, algebraic geometry based methods~\citep{Vidal:PAMI05,Tsa:TPAMI17}, statistical methods~\citep{Fischler:ACM1981}, and spectral clustering-based methods~\citep{Chen:IJCV09,Elhamifar:CVPR09,Liu:ICML10,Lu:ECCV12,You:CVPR16-EnSC,Zhang:CVPR21-SENet}.
Among them, spectral clustering based methods gain the most popularity due to the broad theoretical guarantee and superior performance.

The vital component in spectral clustering based methods is a so-called \emph{self-expressive} model~\citep{Elhamifar:CVPR09,Elhamifar:TPAMI13}. 
Formally, given a dataset $\mathcal{X} \coloneqq \left\{ \vx_1,\cdots,\vx_N\right \}$ 
where $\vx_j \in \mathbb{R}^D$, 
self-expressive model expresses each data point $\vx_j$ by a linear combination of other points, \ie,
\begin{equation}
    \vx_j=\sum_{i\neq{}j}c_{ij}\vx_i,
\end{equation}
where $c_{ij}$ is the corresponding self-expressive coefficient. 
The most intriguing merit of the self-expressive model is that the solution of the self-expressive model under proper regularizer on the coefficients $c_{ij}$ is guaranteed to satisfy a subspace-preserving property, namely, \emph{ $c_{ij}\neq0$ only if $\vx_i$ and $\vx_j$ are in the same subspace}~\citep{Elhamifar:TPAMI13, Soltanolkotabi:AS12, Li:JSTSP18}. 
Having had the optimal self-expressive coefficients $\{c_{ij}\}_{i,j=1}^N$, the data affinity can be induced by $|c_{ij}|+|c_{ji}|$ for which spectral clustering is applied to yield the partition of the data.

Despite the broad theoretical guarantee, the vanilla self-expressive model still faces great challenges when applied to the complex real-world data that may not well align with the UoS assumption. 
Earlier works devote to address this deficiency by learning a linear transform of the data~\citep{Patel:CVPR13,Patel:JSTSP15} or introducing a nonlinear kernel mapping~\citep{Patel:ICIP14} under which the representations of the data are supposed to be aligned with the UoS assumption. 
However, there is a lack of principled mechanism to guide the learning of the linear transforms or the design of the nonlinear kernels to guarantee the representations of the data to form a UoS structure.

To handle complex real-world data, in the past few years, there is a surge of interests in designing deep subspace clustering frameworks, \eg, \citep{Ji:NIPS17-DSCNet, Peng:TIP18, Pan:CVPR18, Zhang:CVPR19, Dang:CVPR20, Peng:TNNLS20,Lv:TIP21,Wang:TIP23,Zhao:CPAL24}.
In these works, usually a deep neural network-based representation learning module is integrated to the self-expressive model, to learn the representations $\mZ\in\mathbb{R}^{d\times N}$ and the self-expressive coefficients $\mC=\{c_{ij}\}_{i,j=1}^N$ in a joint optimization framework. 
However, as analyzed in~\citep{Haeffele:ICLR21} that, the optimal representations $\mZ$ of these methods tend to catastrophically collapse into subspaces with dimensions much lower than the ambient space, which is detrimental to subspace clustering and there is no evidence that the learned representations form a UoS structure.

In this paper, we attempt to propose a Principled fRamewOrk for Deep Subspace Clustering (PRO-DSC), which is able to simultaneously learn structured representations and self-expressive coefficients. 
Specifically, in PRO-DSC, we incorporate an effective regularization on the learned representations into the self-expressive model and prove that our PRO-DSC can effectively prevent feature collapse. 
Moreover, we demonstrate that our PRO-DSC under certain condition can yield structured representations forming a UoS structure and provide a scalable and efficient approach to implement PRO-DSC. 
We conduct extensive experiments on the synthetic data and six benchmark datasets to verify our theoretical findings and the superior performance of our proposed approach.

\myparagraph{Contributions} The contributions of the paper are highlighted as follows. 
\begin{enumerate}[leftmargin=*]
    \item We propose a Principled fRamewOrk for Deep Subspace Clustering (PRO-DSC) that learns both structured representations and self-expressive coefficients simultaneously, in which an effective regularization on the learned representations is incorporated to prevent feature space collapse. 
    
    \item We provide a rigorous analysis for the optimal solution of our PRO-DSC, derive a sufficient condition that guarantees the learned representations to escape from feature collapse, and further demonstrate that our PRO-DSC under certain condition can yield structured representations of a UoS structure. 
         
    \item We conduct extensive experiments to verify our 
    theoretical findings and to demonstrate the superior performance of the proposed approach. 
\end{enumerate}

{\mcb 
To the best of our knowledge, this is the first principled framework for deep subspace clustering that is guaranteed to prevent feature collapse problem and is shown to yield the UoS representations. 
}

\section{Deep Subspace Clustering: A Principled Framework, Justification, and Implementation}
\label{sec:method}

In this section, we review the popular framework for deep subspace clustering, called Self-Expressive Deep Subspace Clustering (SEDSC) at first, then present our principled framework for deep subspace clustering and provide a rigorous characterization of the optimal solution and the property of the learned structured representations. 
Finally we describe a scalable implementation based on differential programming for the proposed framework. 
Please refer to Appendix~\ref{Sec:Proof of Main Result} for the detailed proofs of our theoretical results.

\subsection{Prerequisite}
\label{Sec:prerequisite}
To apply subspace clustering to complex real-world data that may not well align with the UoS assumption,
there has been a surge of interests in exploiting deep neural networks to learn representations and then apply self-expressive model to the learned representations, \eg, \citep{Peng:TIP18, Ji:NIPS17-DSCNet, Pan:CVPR18, Zhang:CVPR19, Dang:CVPR20, Peng:TNNLS20,Lv:TIP21,Wang:TIP23,Zhao:CPAL24}. 

Formally, the optimization problem of these SEDSC models can be formulated as follows:\footnote{Without loss of generality, we omit the constraint $\diag(\mC)=\vzero$ throughout the analysis.}
\begin{align}
\label{Eq:DSCNet problem}
\begin{aligned}
       \mathop{\min}_{\mZ,\mC}\quad \frac{1}{2}\|\mZ-\mZ\mC\|_F^2+\beta \cdot r(\mC) \quad 
    \mathrm{s.t.}\quad \|\mZ\|_F^2=N, 
\end{aligned}
\end{align}
where $\mZ \in \sR^{d \times N}$ denotes the learned representation, $\mC \in \sR^{N \times N}$ denotes the self-expressive coefficient matrix, and $\beta>0$ is a hyper-parameter. 
The following lemma characterizes the property of the optimal solution $\mZ$ for problem (\ref{Eq:DSCNet problem}). 

\begin{lemma}[\citealp{Haeffele:ICLR21}] 
    The rows of the optimal solution $\mZ$ for problem (\ref{Eq:DSCNet problem}) are the eigenvectors that associate with the smallest eigenvalues of $(\mI-\mC)(\mI-\mC)^\top$.
\end{lemma}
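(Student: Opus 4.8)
The plan is to fix the coefficient matrix $\mC$ and optimize only over $\mZ$, since the regularizer $\beta \cdot r(\mC)$ does not depend on $\mZ$; the resulting characterization then applies verbatim to the $\mZ$-component of any jointly optimal pair. First I would rewrite the data-fidelity term in trace form,
\[
\tfrac{1}{2}\|\mZ-\mZ\mC\|_F^2 = \tfrac{1}{2}\|\mZ(\mI-\mC)\|_F^2 = \tfrac{1}{2}\tr\!\big[\mZ\,\mM\,\mZ^\top\big], \qquad \mM := (\mI-\mC)(\mI-\mC)^\top,
\]
noting that $\mM$ is symmetric and positive semidefinite, so it admits an eigendecomposition $\mM = \mU\mLambda\mU^\top$ with $\mU$ orthogonal and $\mLambda = \diag(\lambda_1,\dots,\lambda_N)$, $\lambda_1 \le \cdots \le \lambda_N$. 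The problem over $\mZ$ thus reduces to minimizing $\tfrac{1}{2}\tr[\mZ\mM\mZ^\top]$ subject to $\|\mZ\|_F^2 = N$.

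My preferred route is a direct global argument rather than only first-order conditions. Substituting $\mY := \mZ\mU$ (an orthogonal change of basis acting on the columns) leaves the constraint invariant, $\|\mY\|_F^2 = \|\mZ\|_F^2 = N$, and diagonalizes the objective into $\tfrac{1}{2}\tr[\mY\mLambda\mY^\top] = \tfrac{1}{2}\sum_{i=1}^N \lambda_i\, p_i$, where $p_i := \|\mY_{:,i}\|^2 \ge 0$ is the squared norm of the $i$-th column of $\mY$. Since $\sum_i p_i = N$, the objective is (up to the factor $N$) a convex combination of the eigenvalues, so I would lower-bound it by $\tfrac{1}{2}\lambda_1 N$, with equality exactly when all the mass $p_i$ is supported on the indices attaining the minimal eigenvalue. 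Translating back, the columns of $\mY$, and hence the rows of $\mZ = \mY\mU^\top$, must lie in the eigenspace of the smallest eigenvalue of $\mM$, which is the claimed characterization.

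As a cross-check and to expose the mechanism, I would also record the first-order condition: introducing a single multiplier $\mu$ for the scalar constraint $\|\mZ\|_F^2 = N$ yields stationarity $\mZ\mM = \mu\mZ$, i.e.\ every nonzero row of $\mZ$ is an eigenvector of $\mM$ sharing the common eigenvalue $\mu$, and the objective at any such point equals $\tfrac{\mu N}{2}$; minimizing over admissible $\mu$ again selects $\mu = \lambda_1$.

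The main obstacle, and the conceptually important point, is that the feasible set constrains only the \emph{aggregate} Frobenius norm through a single scalar, so the stationarity condition ties all rows to one common eigenvalue rather than to $d$ distinct ones. Consequently, when $\lambda_1$ is simple the optimal $\mZ$ collapses to a rank-one matrix with every row proportional to a single eigenvector, and only when the smallest eigenvalue is degenerate can the rows span a higher-dimensional space. Handling this degeneracy carefully, and emphasizing that it is precisely this concentration of all mass on the smallest eigenspace that drives the feature collapse the paper sets out to prevent, is the part that requires the most care; by contrast, the trace rewriting and the linear-in-$p_i$ bound are routine.
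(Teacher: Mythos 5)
Your proposal is correct, and it reaches the lemma by a genuinely different route than the paper. The paper's proof splits the single constraint $\|\mZ\|_F^2=N$ into per-row constraints $\|\vz^{(i)}\|_2^2=\tau_i$ (justified as ``without loss of generality'' since the optimal value turns out to be independent of the $\tau_i$), introduces one Lagrange multiplier per row, and deduces from stationarity that every row is an eigenvector of $\mM=(\mI-\mC)(\mI-\mC)^\top$; it then compares objective values across these stationary configurations to select the smallest eigenvalue. Your argument instead diagonalizes $\mM=\mU\mLambda\mU^\top$, substitutes $\mY=\mZ\mU$, and observes that the objective becomes $\tfrac{1}{2}\sum_i\lambda_i p_i$ with $p_i=\|\mY_{:,i}\|^2\ge 0$, $\sum_i p_i=N$, so it is bounded below by $\tfrac{N}{2}\lambda_{\min}(\mM)$ with equality exactly when the mass $p_i$ concentrates on the minimal-eigenvalue indices; translating back, the rows of $\mZ=\mY\mU^\top$ lie in the smallest eigenspace. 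What your route buys is a self-contained \emph{global} optimality argument: it certifies the optimal value and characterizes the entire solution set in one step, without the paper's per-row constraint device and without relying on first-order conditions that are only necessary. What the paper's route buys is the explicit KKT/Lagrangian structure that its later results (the SDP reformulation and eigenvalue analysis behind Theorems 1 and 2) reuse, and the per-row multipliers make transparent that rows associated with \emph{different} eigenvalues are also stationary points, which your single-multiplier cross-check correctly shows cannot happen under the aggregate constraint. Two cosmetic points, neither a gap: your phrase ``the columns of $\mY$ \ldots must lie in the eigenspace'' should read that the columns of $\mY$ indexed by non-minimal eigenvalues must \emph{vanish} (the columns live in $\R^d$, not $\R^N$), after which the rows of $\mZ$ lie in the minimal eigenspace; and your equality case should note attainment is immediate (e.g., one row equal to $\sqrt{N}$ times a minimal eigenvector), so the lower bound is indeed the minimum.
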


In other words, the optimal representation $\mZ$ in SEDSC is restricted to an extremely ``narrow'' subspace whose dimension is much smaller than $d$, leading to an undesirable collapsed solution.
\footnote{The dimension equals to the multiplicity of the smallest eigenvalues of $(\mI-\mC)(\mI-\mC)^\top$.}

\subsection{Our Principled Framework for Deep Subspace Clustering}

In this paper, we attempt to propose a principled framework for deep subspace clustering that provably learns structured representations with maximal intrinsic dimensions.

To be specific, we try to optimize the self-expressive model (\ref{Eq:DSCNet problem}) while preserving the intrinsic dimension of the representation space. 
Other than using the $\text{rank}$, which is a common measure of the dimension, inspired by \citep{Fazel:ACC03-logdet,Ma:PAMI07,Yu:NIPS20,Liu:NIPS22-MEC}, we propose to prevent the feature space collapse by incorporating the $\log\det(\cdot)$-based concave smooth surrogate which is defined as follows: 
\begin{align}
\label{eq:tcr}
R(\mZ;\alpha)\coloneqq\log\det(\mI+\alpha\mZ^\top\mZ), 
\end{align}
where $\alpha>0$ is the hyper-parameter. 
Unlike the commonly used nuclear norm, which is a convex surrogate of the rank, the $\log\det(\cdot)$-based function is concave and differentiable, offers a tighter approximation and encourages learning subspaces with maximal intrinsic dimensions.\footnote{Please refer to \citep{Ma:PAMI07} for a packing-ball interpretation.}

By incorporating the maximization of $R(\mZ;\alpha)$ as a regularizer into the formulation of SEDSC in (\ref{Eq:DSCNet problem}), we have a Principled fRamewOrk for Deep Subspace Clustering (PRO-DSC):
\begin{align}
\label{Eq:Complete Problem}
\begin{aligned}
    \mathop{\min}_{\mZ,\mC}\quad -\frac{1}{2}\log\det\left(\mI+\alpha\mZ^\top\mZ\right)+\frac{\gamma}{2}\|\mZ-\mZ\mC\|_F^2+\beta \cdot r(\mC) \quad
    \mathrm{s.t.}\quad\|\mZ\|_F^2=N,
\end{aligned}
\end{align}
where $\gamma>0$ is a hyper-parameter.
Now, we will give our theoretical findings for problem (\ref{Eq:Complete Problem}).

\begin{theorem}[Eigenspace Alignment] 
\label{theorem:alignment}
Denote the optimal solution of PRO-DSC in (\ref{Eq:Complete Problem}) as $(\mZ_\star,\mC_\star)$, $\mG_\star\coloneqq {\mZ_\star}^\top\mZ_\star$ and $\mM_\star\coloneqq(\mI-\mC_\star)(\mI-\mC_\star)^\top$. Then $\mG_\star$ and $\mM_\star$ share eigenspaces, \ie, $\mG_\star$ and $\mM_\star$ can be diagonalized simultaneously by $\mU\in\mathcal{O}(N)$ where $\mathcal{O}(N)$ is an orthogonal group. 
\end{theorem}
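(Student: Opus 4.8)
The plan is to reduce the claim to showing that the two real symmetric matrices $\mG_\star$ and $\mM_\star$ commute, since any commuting pair of symmetric matrices is simultaneously diagonalizable by some orthogonal $\mU\in\mathcal{O}(N)$. The crucial observation is that this commutation should follow purely from the first-order stationarity of the objective in $\mZ$, without invoking the optimality in $\mC$. To set this up, I would first rewrite the self-expressive term through the Gram matrix: with $\mG=\mZ^\top\mZ$ and $\mM=(\mI-\mC)(\mI-\mC)^\top$, the cyclic property of the trace gives $\|\mZ-\mZ\mC\|_F^2=\tr(\mG\mM)$, so the relevant quantities live among symmetric $N\times N$ matrices.

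Next I would form the Lagrangian for the constraint $\|\mZ\|_F^2=N$,
\begin{align*}
\mathcal{L}=-\tfrac{1}{2}\log\det(\mI+\alpha\mZ^\top\mZ)+\tfrac{\gamma}{2}\|\mZ-\mZ\mC\|_F^2+\beta\, r(\mC)+\lambda\big(\tr(\mZ^\top\mZ)-N\big),
\end{align*}
and compute $\nabla_\mZ\mathcal{L}$. Using $\nabla_\mZ\log\det(\mI+\alpha\mZ^\top\mZ)=2\alpha\,\mZ(\mI+\alpha\mG)^{-1}$ together with $\nabla_\mZ\tfrac{\gamma}{2}\|\mZ(\mI-\mC)\|_F^2=\gamma\,\mZ\mM$, the stationarity condition at the optimum $(\mZ_\star,\mC_\star)$ reads $\mZ_\star\mA=0$, where
\begin{align*}
\mA\coloneqq-\alpha(\mI+\alpha\mG_\star)^{-1}+\gamma\mM_\star+2\lambda\mI
\end{align*}
is itself symmetric.

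The decisive step is to left-multiply $\mZ_\star\mA=0$ by $\mZ_\star^\top$, which converts the $d\times N$ stationarity equation into a relation among symmetric $N\times N$ matrices, $\mG_\star\mA=0$, i.e.
\begin{align*}
\gamma\,\mG_\star\mM_\star=\alpha\,\mG_\star(\mI+\alpha\mG_\star)^{-1}-2\lambda\,\mG_\star.
\end{align*}
Since $\mG_\star$ commutes with any function of itself, the product $\mG_\star(\mI+\alpha\mG_\star)^{-1}$ is symmetric, so the whole right-hand side is symmetric. Hence $\mG_\star\mM_\star$ is symmetric, and using $\mG_\star^\top=\mG_\star$, $\mM_\star^\top=\mM_\star$ this forces $\mG_\star\mM_\star=(\mG_\star\mM_\star)^\top=\mM_\star\mG_\star$. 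The two matrices therefore commute, and simultaneous orthogonal diagonalizability follows immediately.

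I expect the main difficulty to be rigor rather than algebra: one must justify that Lagrange stationarity is valid at the optimum (the constraint $\|\mZ\|_F^2=N$ is regular because $\mZ_\star\neq 0$), and one must get the matrix derivative of the $\log\det$ term right, in particular the commuting of $\mG_\star$ with $(\mI+\alpha\mG_\star)^{-1}$ that renders the right-hand side symmetric. A secondary point worth flagging is that the argument uses only optimality in $\mZ$, so the conclusion holds for any $\mC_\star$ completing an optimal pair $(\mZ_\star,\mC_\star)$.
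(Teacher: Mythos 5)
Your proposal is correct, and it takes a genuinely different route from the paper's. The paper fixes $\mC=\mC_\star$, recasts the subproblem in the Gram variable as a convex semidefinite program over $\{\mG\succeq\vzero,\ \tr(\mG)=N\}$, and runs the full KKT system: complementary slackness $\boldsymbol{\Delta}_\star\mG_{\star,\mC}=\vzero$ with the PSD dual variable $\boldsymbol{\Delta}_\star$ forces $\boldsymbol{\Delta}_\star$ and $\mG_{\star,\mC}$ to commute, and the stationarity equation then exhibits $\gamma\mM+\nu_\star\mI$ as explicitly diagonalized in their common eigenbasis $\mQ$. You instead stay in the original variable $\mZ$, use only Lagrange stationarity for the sphere constraint (correctly noting regularity, since $\mZ_\star\neq\vzero$), and convert the stationarity identity $\mZ_\star\mA=\vzero$ into the symmetric-matrix identity $\mG_\star\mA=\vzero$ by left-multiplying with $\mZ_\star^\top$; since $\mG_\star(\mI+\alpha\mG_\star)^{-1}$ and $\mG_\star$ are symmetric, $\gamma\,\mG_\star\mM_\star$ must be symmetric, hence $\mG_\star\mM_\star=\mM_\star\mG_\star$ (using $\gamma>0$), and the standard fact that commuting symmetric matrices are simultaneously orthogonally diagonalizable (the paper's auxiliary lemma) finishes the job. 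Your route is more elementary and in one respect more robust: it needs only first-order necessary conditions, so it applies to any stationary point and works regardless of how $d$ compares to $N$, whereas the paper's SDP reformulation silently drops the constraint $\mathrm{rank}(\mG)\le d$ that is implicit in $\mG=\mZ^\top\mZ$ when $d<N$, so its KKT characterization strictly speaking describes a relaxation of the original subproblem. What the paper's heavier convex machinery buys is reuse: convexity makes the KKT system sufficient as well as necessary, and the same setup is what later yields the explicit water-filling formula for the eigenvalues in Theorem 2 — something a stationarity-only argument like yours does not deliver, but which Theorem 1 does not require.
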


{\mcb
Note that Theorem \ref{theorem:alignment} provides a perspective from eigenspace alignment for analyzing the property of the optimal solution. 
Figure \ref{fig:theory validation}(a) and (b) show empirical evidences to demonstrate that alignment occurs during the training period, where $\mG_b = \mZ_b^\top \mZ_b$, $\mM_b =(\mI -\mC_b)(\mI-\mC_b)^\top$, $\mZ_b \in \bbR^{d \times n_b}$, $\mC_b \in \bbR^{n_b \times n_b}$ and $n_b$ is batch size, are computed in mini-batch training at different epoch.  
}

Next, we will analyze problem~(\ref{Eq:Complete Problem}) from the perspective of alternating optimization. 
When $\mZ$ is fixed, the optimization problem with respect to (w.r.t.) $\mC$ reduces to a standard self-expressive model, which has been extensively studied in~\citep{Soltanolkotabi:AS12,Pimentel:ICML16,Wang:JMLR16,Li:JSTSP18, Tsakiris:ICML18}. 
On the other hand, when $\mC$ is fixed, the optimization problem w.r.t. $\mZ$ becomes:
\begin{align}
\label{Eq:Primal Problem wrt Z}
\begin{aligned}
  \mathop{\min}_{\mZ}\quad-\frac{1}{2}\log\det\left(\mI+\alpha\mZ^\top\mZ\right)+\frac{\gamma}{2}\|\mZ-\mZ\mC\|_F^2 \quad
    \mathrm{s.t.}\quad \|\mZ\|_F^2=N,
\end{aligned}
\end{align}
which is a \emph{non-convex} optimization problem, whose optimal solution remains under-explored.

\begin{figure*}[tb]
\vskip -0.3in
\begin{center}    
    \begin{subfigure}[b]{0.25\linewidth}
           \includegraphics[trim=0pt 0pt 0pt 0pt, clip, width=\textwidth]{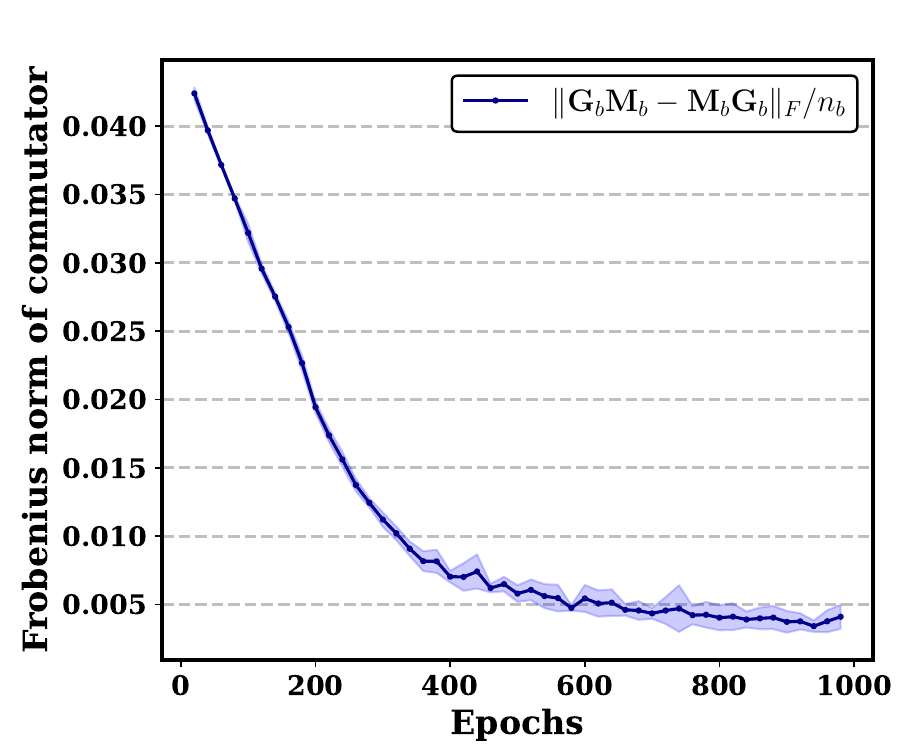}
           \caption{$\vphantom{\tilde\mC}\|\mG_b\mM_b-\mM_b\mG_b\|_F$}
           \label{fig:figure1-commutator}
    \end{subfigure}\hfill
    \begin{subfigure}[b]{0.25\linewidth}
           \includegraphics[trim=0pt 0pt 0pt 0pt, clip, width=\textwidth]{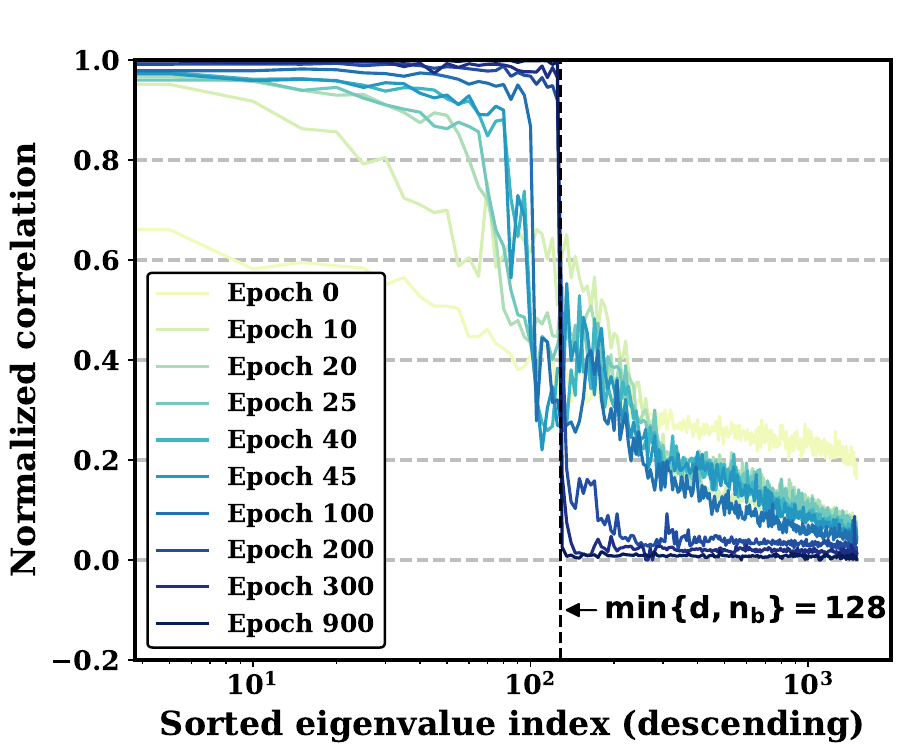}    \caption{$\vphantom{\tilde\mC}\langle\vu_j,\mG_b\vu_j/\|\mG_b\vu_j\|_2\rangle$}
           \label{fig:figure1-alignment}
    \end{subfigure}\hfill
     \begin{subfigure}[b]{0.25\linewidth}
           \includegraphics[trim=0pt 0pt 0pt 0pt, clip, width=\textwidth]{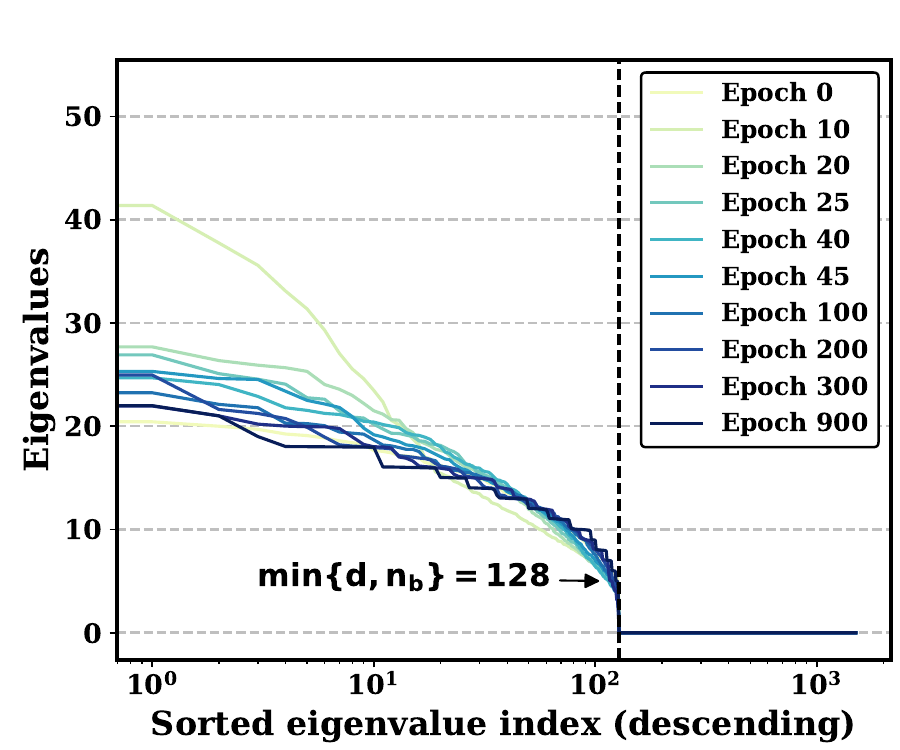}
           \caption{$\vphantom{\tilde\mC}\lambda(\mG_b)$}
           \label{fig:figure1-Eigenvalue_M}
    \end{subfigure}\hfill
     \begin{subfigure}[b]{0.25\linewidth}
           \includegraphics[trim=0pt 0pt 0pt 0pt, clip, width=\textwidth]{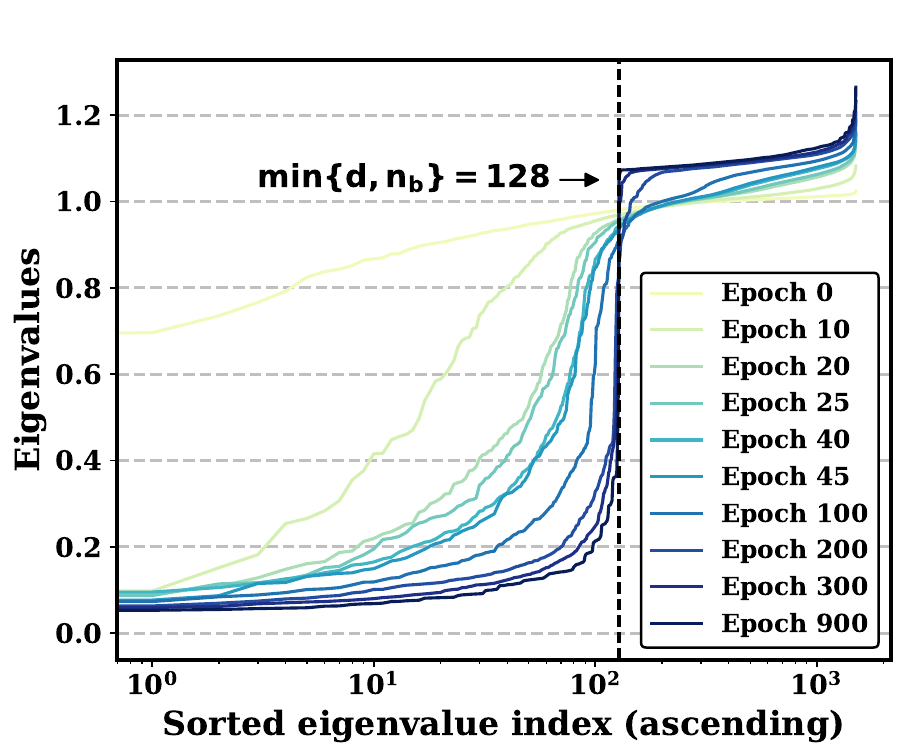}
           \caption{$\vphantom{\tilde\mC_b}\lambda(\mM_b)$}
           \label{fig:figure1-Eigenvalue_tildeC}
    \end{subfigure}
\caption{\mcb \textbf{Empirical Validation to Eigenspace Alignment and Noncollapse Representation in Mini-batch on CIFAR-100.} (a): Alignment error curve during the training period. 
(b): Eigenspace correlation curves measured via $\langle\vu_j,\frac{\mG_b\vu_j}{\|\mG_b\vu_j\|_2}\rangle$ for $j=1,\cdots,n_b$. 
(c) and (d): Eigenvalue curves. 
}
\label{fig:theory validation}
\end{center}
\vskip -0.1in
\end{figure*}

\begin{figure}[t]
\vskip 0.0in
    \centering
    \begin{subfigure}[b]{0.5\linewidth}
    \begin{subfigure}[b]{0.5\linewidth}
           \includegraphics[trim=0pt 40pt 0pt 30pt, clip, width=\textwidth]{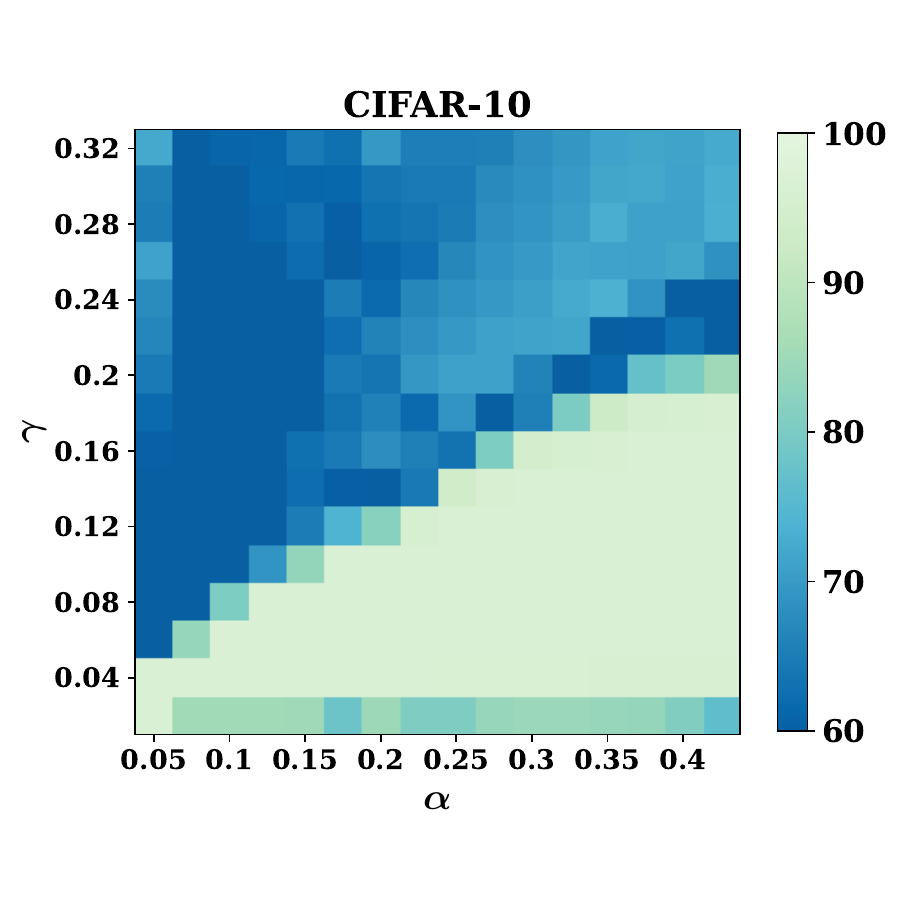}
    \end{subfigure}\hfill
    \begin{subfigure}[b]{0.5\linewidth}
           \includegraphics[trim=0pt 40pt 0pt 30pt, clip, width=\textwidth]{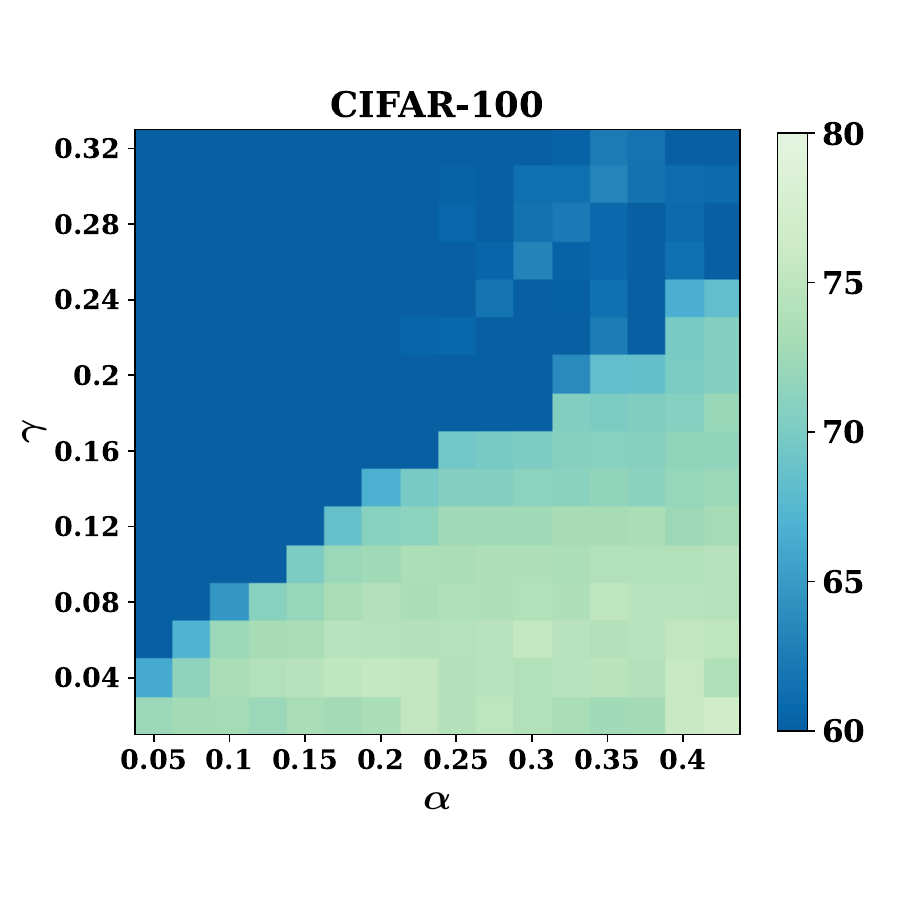}
    \end{subfigure}
    \caption{ACC (\%)}
    \end{subfigure}\hfill
    \begin{subfigure}[b]{0.5\linewidth}
     \begin{subfigure}[b]{0.5\linewidth}
           \includegraphics[trim=0pt 40pt 0pt 30pt, clip, width=\textwidth]{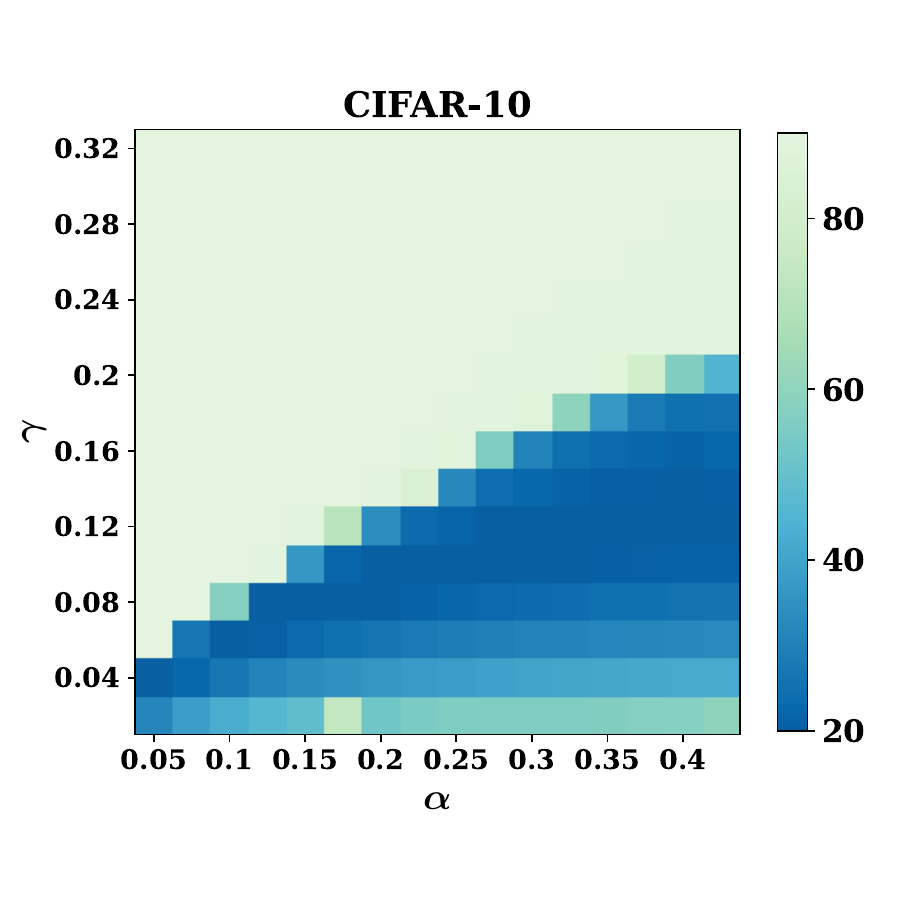}
    \end{subfigure}\hfill
     \begin{subfigure}[b]{0.5\linewidth}
           \includegraphics[trim=0pt 40pt 0pt 30pt, clip, width=\textwidth]{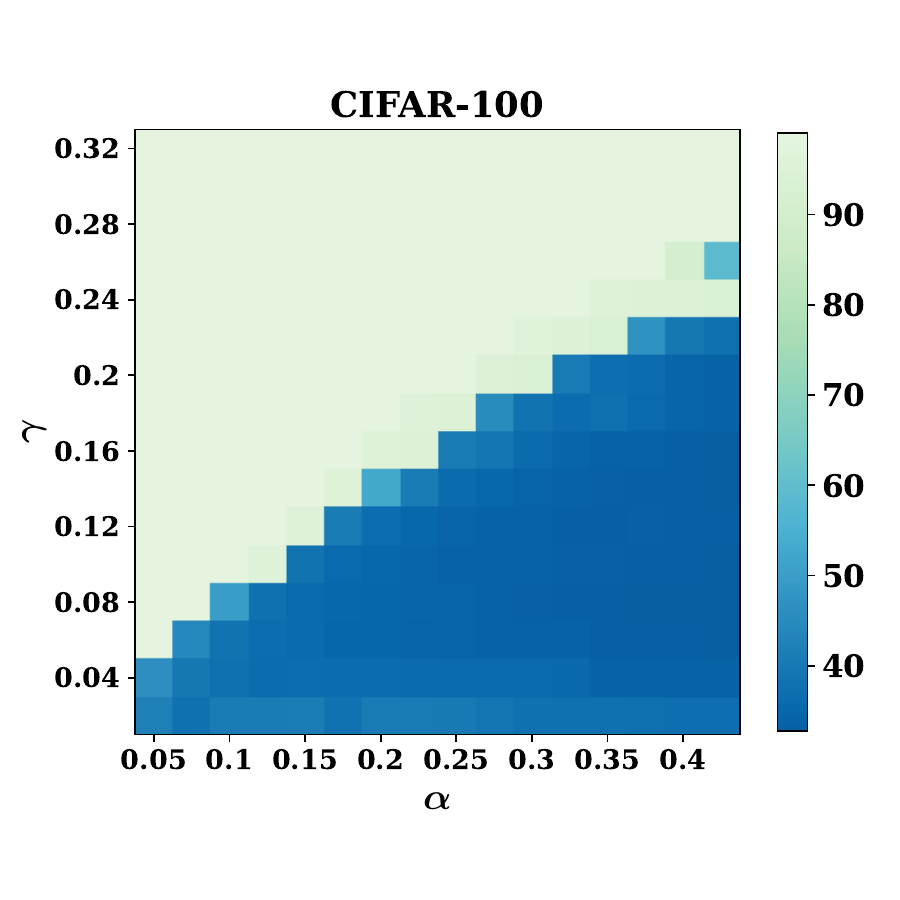}
    \end{subfigure}
   \caption{SRE (\%)}
    \end{subfigure}
\vskip -0.02in
\caption{\mcb \textbf{Empirical Validation to Noncollapse Representation on CIFAR-10 and CIFAR-100.} Clustering accuracy (ACC\%) and subspace-preserving representation error (SRE\%) are displayed under varying $\alpha$ and $\gamma$. When collapse occurs, both ACC and SRE dramatically degenerate. The perceivable phase transition phenomenon is consistent with the condition to avoid collapse.}
\label{fig:supp alpha gamma}
\vskip -0.2in
\end{figure}

In light of the fact that $\mG$ and $\mM$ converge to share eigenspaces, we decompose $\mG$ and $\mM$ to $\mU\Diag(\lambda_\mG^{(1)},\cdots,\lambda_\mG^{(N)})\mU^\top$ and $\mU\Diag(\lambda_\mM^{(1)},\cdots,\lambda_\mM^{(N)})\mU^\top$, respectively.  
Recall that $\mG\coloneqq\mZ^\top\mZ$, $\mM\coloneqq(\mI-\mC)(\mI-\mC)^\top$, by using the eigenvalue decomposition, we reformulate problem (\ref{Eq:Primal Problem wrt Z}) into a \emph{convex} problem w.r.t. $\{\lambda_{\mG}^{(i)}\}_{i=1}^{\rank}$(See Appendix~\ref{Sec:Proof of Main Result}) and have the following result.

\begin{theorem}[Noncollapse Representation] 
\label{Theorem:landscape}
Suppose that $\mG$ and $\mM$ are aligned in the same eigenspaces and $\gamma< \frac{1}{\lambda_{\max}(\mM)}\frac{\alpha^2}{\alpha+\min\left\{\frac{d}{N},1\right\}}$. Then we have: 
a) $\mathrm{rank}(\mZ_\star)=\rank$, and 
b) the singular values $\sigma_{\mZ_\star}^{(i)}=\sqrt{\frac{1}{\gamma\lambda_{\mM}^{(i)}+\nu_\star}-\frac{1}{\alpha}} \ $ for all $i=1,\ldots,\rank$, where $\mZ_\star$ and $\nu_\star$ are the optimal primal solution and dual solution, respectively. 
\end{theorem}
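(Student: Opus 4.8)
\textbf{Proof proposal for Theorem \ref{Theorem:landscape}.}

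The plan is to reduce the matrix problem (\ref{Eq:Primal Problem wrt Z}) to a finite-dimensional convex program in the eigenvalues of $\mG$ and then read off both claims from its KKT system. First I would invoke Theorem \ref{theorem:alignment}: since $\mG$ and $\mM$ share an eigenbasis $\mU\in\mathcal{O}(N)$, I write $\mG=\mU\Diag(\lambda_\mG^{(1)},\dots,\lambda_\mG^{(N)})\mU^\top$ and $\mM=\mU\Diag(\lambda_\mM^{(1)},\dots,\lambda_\mM^{(N)})\mU^\top$ with $\lambda_\mM^{(1)}\le\dots\le\lambda_\mM^{(N)}$. Then $\log\det(\mI+\alpha\mZ^\top\mZ)=\sum_i\log(1+\alpha\lambda_\mG^{(i)})$, $\|\mZ-\mZ\mC\|_F^2=\tr(\mG\mM)=\sum_i\lambda_\mG^{(i)}\lambda_\mM^{(i)}$, and $\|\mZ\|_F^2=\sum_i\lambda_\mG^{(i)}=N$. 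Since $\mG=\mZ^\top\mZ$ with $\mZ\in\sR^{d\times N}$ has at most $\rank$ nonzero eigenvalues, and the product term is minimized (rearrangement inequality) by placing the nonzero $\lambda_\mG^{(i)}$ on the smallest $\lambda_\mM^{(i)}$, this is exactly the convex program in $\{\lambda_\mG^{(i)}\}_{i=1}^{\rank}$ referenced before the theorem: minimize $-\frac12\sum_{i=1}^{\rank}\log(1+\alpha\lambda_\mG^{(i)})+\frac\gamma2\sum_{i=1}^{\rank}\lambda_\mG^{(i)}\lambda_\mM^{(i)}$ subject to $\sum_i\lambda_\mG^{(i)}=N$, $\lambda_\mG^{(i)}\ge0$.

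Next I would write the KKT conditions. Introducing a multiplier $\tfrac{\nu}{2}$ for the trace constraint and $\eta_i\ge0$ for $\lambda_\mG^{(i)}\ge0$, stationarity reads $-\frac{\alpha}{2(1+\alpha\lambda_\mG^{(i)})}+\frac{\gamma\lambda_\mM^{(i)}}{2}+\frac\nu2-\eta_i=0$. Slater's condition holds (take $\lambda_\mG^{(i)}=N/\rank$), so KKT is necessary and sufficient. On the active set ($\lambda_\mG^{(i)}>0$, hence $\eta_i=0$) this solves to $\frac{\alpha}{1+\alpha\lambda_\mG^{(i)}}=\gamma\lambda_\mM^{(i)}+\nu_\star$, giving $\lambda_\mG^{(i)}=\frac{1}{\gamma\lambda_\mM^{(i)}+\nu_\star}-\frac1\alpha$; taking square roots yields part (b), $\sigma_{\mZ_\star}^{(i)}=\sqrt{\lambda_\mG^{(i)}}$. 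The same relation shows an index is active precisely when $\gamma\lambda_\mM^{(i)}+\nu_\star<\alpha$ (and then automatically $\gamma\lambda_\mM^{(i)}+\nu_\star>0$), while an inactive index satisfies $\gamma\lambda_\mM^{(i)}+\nu_\star\ge\alpha$.

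For part (a) I must show every index $1,\dots,\rank$ is active, i.e. $\gamma\lambda_\mM^{(i)}+\nu_\star<\alpha$; since $\lambda_\mM^{(i)}\le\lambda_{\max}(\mM)$ it suffices to prove $\gamma\lambda_{\max}(\mM)+\nu_\star<\alpha$. Letting $A$ be the active set with $|A|=k\le\rank$, I substitute the formula into $\sum_{i\in A}\lambda_\mG^{(i)}=N$ to obtain $\sum_{i\in A}\frac{1}{\gamma\lambda_\mM^{(i)}+\nu_\star}=N+\frac{k}{\alpha}$. If $\nu_\star\le0$, then $\gamma\lambda_{\max}(\mM)+\nu_\star\le\gamma\lambda_{\max}(\mM)<\alpha$ follows directly from the hypothesis (which forces $\gamma\lambda_{\max}(\mM)<\frac{\alpha^2}{\alpha+\min\{d/N,1\}}<\alpha$). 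If $\nu_\star>0$, I bound each summand by $\frac{1}{\nu_\star}$ using $\lambda_\mM^{(i)}\ge0$, getting $\nu_\star\le\frac{k\alpha}{N\alpha+k}\le\frac{\rank\,\alpha}{N\alpha+\rank}$ (monotone in $k$); combined with $\gamma\lambda_{\max}(\mM)<\frac{N\alpha^2}{N\alpha+\rank}=\alpha-\frac{\rank\,\alpha}{N\alpha+\rank}$, this again gives $\gamma\lambda_{\max}(\mM)+\nu_\star<\alpha$, using $\rank/N=\min\{d/N,1\}$ to match the stated threshold. In either case, were some $j\le\rank$ inactive it would satisfy $\gamma\lambda_\mM^{(j)}+\nu_\star\ge\alpha$, contradicting the bound; hence $k=\rank$ and $\mathrm{rank}(\mZ_\star)=\rank$.

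I expect the main obstacle to be making the dual-variable bound feed cleanly into the exact threshold: one must control $\nu_\star$ uniformly without knowing its sign a priori, and it is the case split $\nu_\star\le0$ versus $\nu_\star>0$ that makes the estimate $\gamma\lambda_{\max}(\mM)+\nu_\star<\alpha$ tight against $\frac{\alpha^2}{\alpha+\min\{d/N,1\}}$. A secondary point needing care is justifying that placing the nonzero eigenvalues of $\mG$ on the $\rank$ smallest eigenvalues of $\mM$ is optimal, so that the reduced convex program with index set $1,\dots,\rank$ is the correct one; I would settle this by a rearrangement argument before invoking KKT.
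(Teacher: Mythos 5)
Your proposal is correct and takes essentially the same route as the paper's proof: spectral reduction of problem (\ref{Eq:Primal Problem wrt Z}) to the convex eigenvalue program, a KKT/water-filling analysis yielding $\lambda_{\mG_\star}^{(i)}=\max\bigl\{0,\tfrac{1}{\gamma\lambda_\mM^{(i)}+\nu_\star}-\tfrac{1}{\alpha}\bigr\}$, and an upper bound on the dual variable $\nu_\star$ showing the stated hypothesis forces $\gamma\lambda_{\max}(\mM)+\nu_\star<\alpha$, hence full rank. The only cosmetic differences are that the paper bounds $\nu_\star$ from the full water-filling equation using $\lambda_{\min}(\mM)$ (obtaining $\nu_\star\le\tfrac{1}{N/\min\{d,N\}+1/\alpha}-\gamma\lambda_{\min}(\mM)$, with no sign case-split), whereas you bound it from the active-set equation with a split on the sign of $\nu_\star$, and your explicit rearrangement argument for placing the nonzero eigenvalues of $\mG$ on the smallest eigenvalues of $\mM$ makes rigorous a step the paper leaves implicit.
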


Theorem~\ref{Theorem:landscape} characterizes the optimal solution for problem~(\ref{Eq:Primal Problem wrt Z}). 
Recall that SEDSC in~(\ref{Eq:DSCNet problem}) yields a collapsed solution, where $\mathrm{rank}(\mZ_\star)\ll\mathop{\min}\{d,N\}$;
whereas the rank of the minimizers for PRO-DSC in~(\ref{Eq:Primal Problem wrt Z}) satisfies that $\mathrm{rank}(\mZ_\star)=\mathop{\min}\{d,N\}$. 
{\mcb In Figure~\ref{fig:theory validation}(c) and (d), we show the curves of the eigenvalues of $\mG_b$ and $\mM_b$, which are computed in the mini-batch training at different epoch, demonstrating that the learned representation does no longer collapse.} 
In Figure~\ref{fig:supp alpha gamma}, we show the subspace clustering accuracy (ACC) and subspace-representation error\footnote{For each column $\c_j$ in $\mC$, SRE is computed by $\frac{100}{N}\sum_j(1-\sum_i w_{ij}\cdot |c_{ij}|)/\| \c_j \|_1$, where $w_{ij}\in\{0,1\}$ is the ground-truth affinity.} 
(SRE) as a function of the parameters $\alpha$ and $\gamma$. 
The phase transition phenomenon around $\gamma< \frac{1}{\lambda_{\max}(\mM)}\frac{\alpha^2}{\alpha+\min\left\{\frac{d}{N},1\right\}}$ well illustrates the sufficient condition in Theorem \ref{Theorem:landscape} to avoid representation collapse.

\begin{figure*}[t]
\centering 
\begin{subfigure}[]{0.2\linewidth}
\includegraphics[trim=50pt 0pt 80pt 0pt, clip, width=\textwidth]{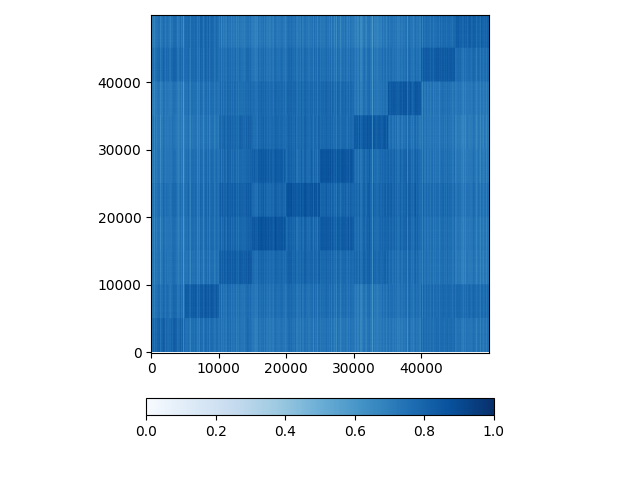}
\caption{$|\mX^\top \mX|$}
\end{subfigure}  
\begin{subfigure}[]{0.2\linewidth}
\includegraphics[trim=50pt 0pt 80pt 0pt, clip, width=\textwidth]{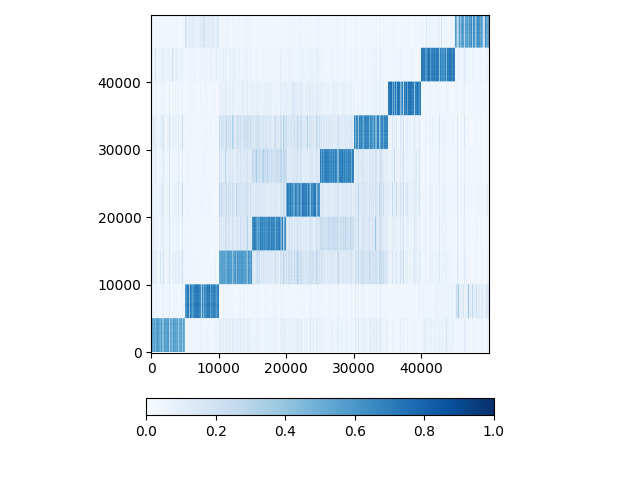}
\caption{$|\mZ^\top \mZ|$}
\end{subfigure} 
\begin{subfigure}[]{0.25\linewidth}
\includegraphics[trim=30pt 40pt 0pt 60pt, clip, width=\textwidth]{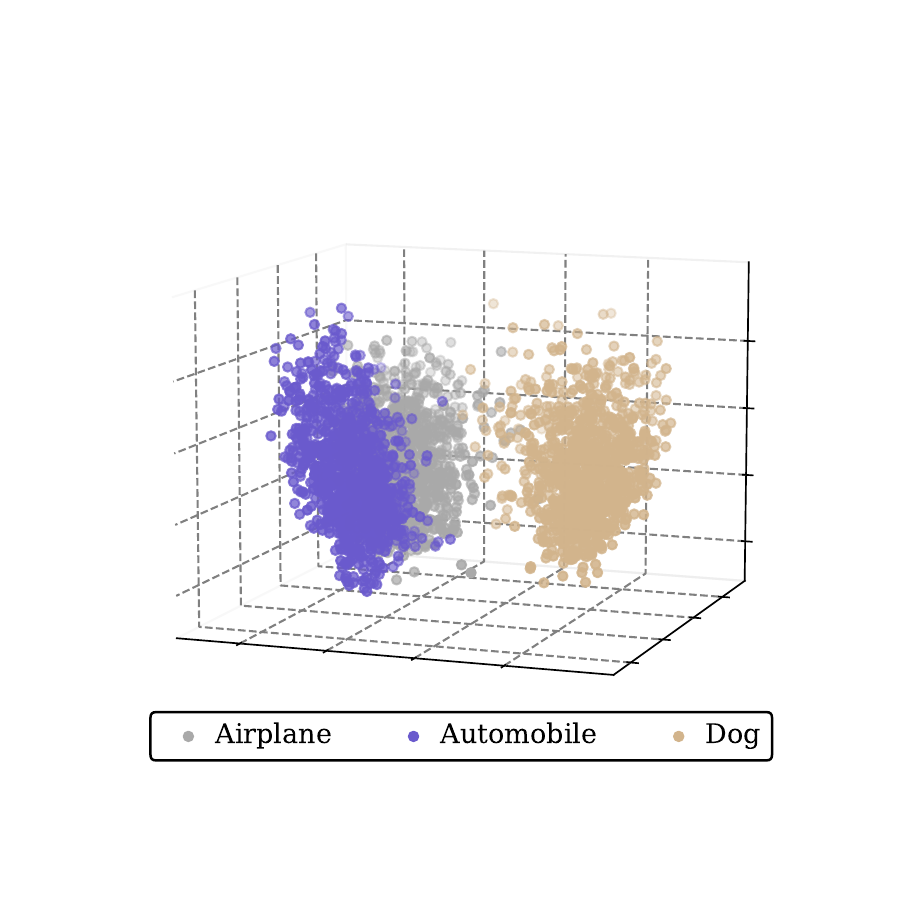}
\caption{$\mX_{(3)}$ via PCA}
\end{subfigure}
\label{Fig:orthogonal-CLIP}
\begin{subfigure}[]{0.25\linewidth}
\includegraphics[trim=30pt 40pt 0pt 60pt, clip, width=\textwidth]{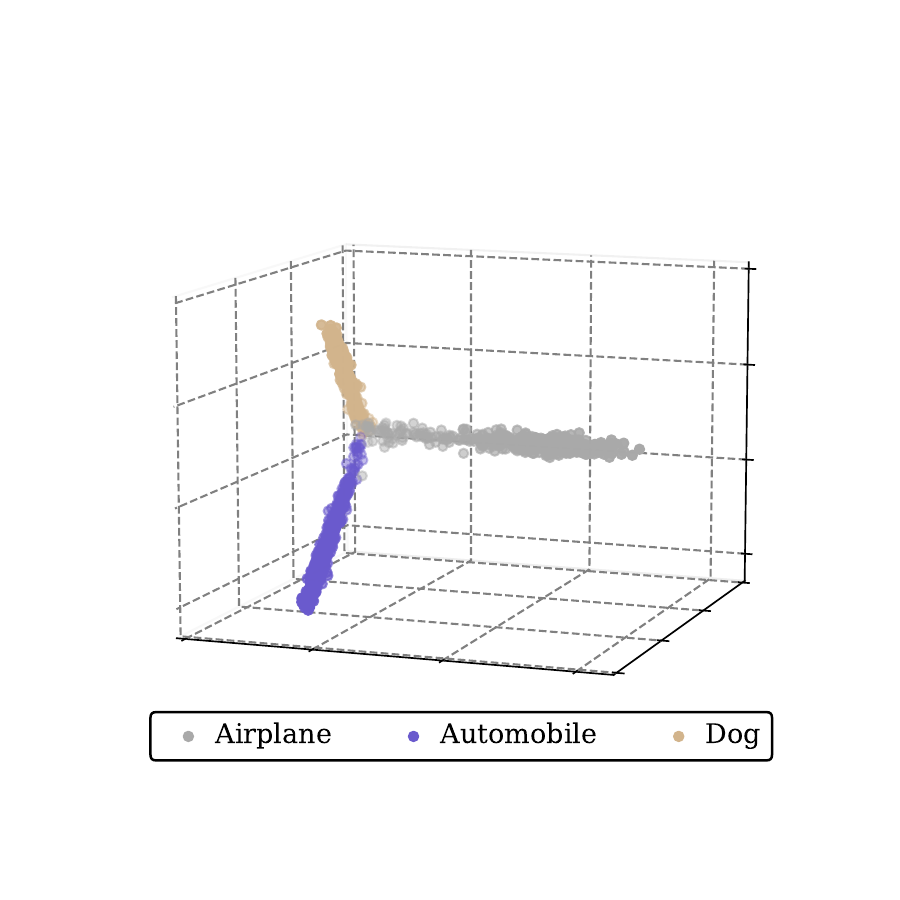}
\caption{$\mZ_{(3)}$ via PCA}
\end{subfigure}
\label{Fig:orthogonal-Z}
\caption{\textbf{Empirical Validation to Structured Representation on CIFAR-10.} Gram matrices for CLIP features $\mX$ and learned representations $\mZ$ are shown in (a) and (b); whereas Data visualization of the samples from three categories $\mX_{(3)}$ and $\mZ_{(3)}$ via PCA are shown in (c) and (d), respectively. 
}
\label{fig:Orthogonal}
\end{figure*}

Furthermore, from the perspective of joint optimizing $\mZ$ and $\mC$, the following theorem demonstrates that PRO-DSC promotes a union-of-orthogonal-subspaces representation $\mZ$ and block-diagonal self-expressive matrix $\mC$ under certain condition.

\begin{theorem}
\label{Theorem:orthogonal}
Suppose that the sufficient conditions to prevent feature collapse are satisfied. 
Without loss of generality, we further assume that the columns in matrix $\mZ$ are arranged into $k$ blocks according to a certain $N \times N$ permutation matrix $\boldsymbol \Gamma$, \ie, $\mZ =[\mZ_1,\mZ_2,\cdots,\mZ_k]$. 
Then the condition for that PRO-DSC promotes the optimal solution $(\mZ_\star,\mC_\star)$ to have desired structure (\ie, $\mZ_\star^\top \mZ_\star$ and $\mC_\star$ are both block-diagonal), is that $\langle (\mI- \mC)(\mI- \mC )^\top, \mG-\mG^\ast \rangle \rightarrow 0$, 
where $\mG^{\ast}\coloneqq \Diag\big(\mG_{11},\mG_{22}, \cdots, \mG_{kk} \big)$ and $\mG_{jj}$ is the block Gram matrix corresponding to $\mZ_j$.  
\end{theorem}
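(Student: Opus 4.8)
The plan is to rewrite the self-expressive residual entirely in terms of the two matrices appearing in the stated condition and then let a determinant inequality do the structural work. First I would expand $\|\mZ-\mZ\mC\|_F^2=\|\mZ(\mI-\mC)\|_F^2=\tr\!\big(\mZ^\top\mZ(\mI-\mC)(\mI-\mC)^\top\big)=\langle\mG,\mM\rangle$, with $\mG=\mZ^\top\mZ$ and $\mM=(\mI-\mC)(\mI-\mC)^\top$. Splitting $\mG=\mG^\ast+(\mG-\mG^\ast)$ into its block-diagonal part and its cross-block remainder yields $\langle\mG,\mM\rangle=\langle\mG^\ast,\mM\rangle+\langle\mG-\mG^\ast,\mM\rangle$, so the scalar $\langle\mM,\mG-\mG^\ast\rangle$ in the statement is precisely the contribution of the inter-cluster correlations to the self-expressive loss.

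The key ingredient is Fischer's inequality for the concave regularizer. Because the noncollapse hypotheses of Theorem~\ref{Theorem:landscape} guarantee that $\mI+\alpha\mG$ is positive definite, Fischer's inequality gives $\det(\mI+\alpha\mG)\le\prod_{j=1}^k\det(\mI+\alpha\mG_{jj})=\det(\mI+\alpha\mG^\ast)$, with equality \emph{iff} every off-block-diagonal block of $\mG$ vanishes. Hence the penalty $-\tfrac12\log\det(\mI+\alpha\mG)$ is minimized exactly at the block-diagonal Gram $\mG^\ast$: this is the force pushing the representation toward a union of mutually orthogonal subspaces. I would then compare the objective against a block-orthogonalized competitor $\mZ^\ast$ with Gram $(\mZ^\ast)^\top\mZ^\ast=\mG^\ast$, which is feasible since $\mG^\ast$ is positive semidefinite as the block-diagonal part of a PSD matrix and $\tr(\mG^\ast)=\tr(\mG)=N$ preserves the constraint. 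Keeping $\mC$ fixed so that $r(\mC)$ cancels, the objective gap is
\begin{align*}
f(\mZ,\mC)-f(\mZ^\ast,\mC)=\underbrace{-\tfrac12\big[\log\det(\mI+\alpha\mG)-\log\det(\mI+\alpha\mG^\ast)\big]}_{\ge\,0\ \text{(Fischer)}}+\tfrac{\gamma}{2}\,\langle\mG-\mG^\ast,\mM\rangle.
\end{align*}

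Under the condition $\langle\mM,\mG-\mG^\ast\rangle\to0$ the second term vanishes while the first remains nonnegative and is \emph{strictly} positive unless $\mG=\mG^\ast$; therefore the block-orthogonal representation strictly dominates any solution carrying nonzero cross-block Gram entries, forcing the optimum to obey $\mG_\star=\mG_\star^\ast$, i.e. $\mZ_\star^\top\mZ_\star$ is block-diagonal and the $k$ clusters occupy orthogonal subspaces. Block-diagonality of $\mC_\star$ then follows: once $\mZ_i^\top\mZ_j=\vzero$ for $i\ne j$, the fit $\langle\mG^\ast,\mM\rangle$ decouples across blocks, so any cross-block coefficient can only inflate $r(\mC)$ without reducing the residual, and the subspace-preserving property of the self-expressive model yields a block-diagonal $\mC_\star$. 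The converse is immediate, as $\mG_\star=\mG_\star^\ast$ makes $\langle\mM_\star,\mG_\star-\mG_\star^\ast\rangle=0$ identically, so the stated vanishing exactly characterizes the desired structure.

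The hard part will be the precise treatment of the limit ``$\to0$'' together with the equality case of Fischer's inequality: I must argue that when the inter-cluster self-expressive interaction vanishes the log-det gap is not merely nonnegative but strictly rules out $\mG\ne\mG^\ast$, which relies on the full-rank noncollapse guarantee so that $\mI+\alpha\mG$ stays positive definite and Fischer's equality condition is sharp. A secondary subtlety is confirming that the competitor $\mZ^\ast$ is genuinely realizable in $\sR^{d\times N}$ — this needs $\mathrm{rank}(\mG^\ast)\le d$, which is exactly the statement that $k$ orthogonal subspaces fit inside the ambient feature space — and that swapping $\mZ$ for $\mZ^\ast$ leaves $r(\mC)$ untouched, so the comparison isolates the two competing terms above.
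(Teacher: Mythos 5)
Your proof is correct at the same level of rigor as the paper's, but it reaches the key inequality by a different lemma. The paper's own proof never invokes Fischer's inequality: it works directly with $\tilde{\mathcal{L}}(\mG,\mC)=\frac{\gamma}{2}\tr(\mG\mM)-\frac{1}{2}\log\det(\mI+\alpha\mG)$ and applies the first-order convexity (gradient) inequality, once in $\mG$ and once in $\mC$, anchored at the block-diagonal pair $(\mG^\ast,\mC^\ast)$; the gradient cross-terms then vanish because a block-diagonal matrix is Frobenius-orthogonal to any matrix whose diagonal blocks are zero, i.e. $\langle(\mI+\alpha\mG^\ast)^{-1},\mG-\mG^\ast\rangle=0$ and $\langle\mG^\ast(\mI-\mC^\ast),\mC-\mC^\ast\rangle=0$, leaving exactly $\tilde{\mathcal{L}}(\mG,\mC)\ge\tilde{\mathcal{L}}(\mG^\ast,\mC^\ast)+\frac{\gamma}{2}\langle(\mI-\mC)(\mI-\mC)^\top,\mG-\mG^\ast\rangle$, after which the extended block-diagonal (EBD) property $r(\mC)\ge r(\mC^\ast)$ of \citet{Lu:TPAMI18} closes the argument. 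Your route replaces the convexity step for the log-det term by Fischer's inequality $\det(\mI+\alpha\mG)\le\det(\mI+\alpha\mG^\ast)$, which buys you something the paper only gets implicitly: the equality case (equality iff all off-diagonal blocks vanish) comes for free, so strictness of the domination when $\mG\neq\mG^\ast$ is immediate, whereas the paper's gradient inequality needs strict convexity of $-\log\det$ to draw the same conclusion. Conversely, the paper's approach buys two things: it treats both variables with one uniform trick, and it operates at the level of the SDP variable $\mG$ rather than an explicit competitor $\mZ^\ast\in\R^{d\times N}$, thereby sidestepping (or at least hiding) the realizability issue you rightly flag — when $d<N$ one can have $\mathrm{rank}(\mG^\ast)>\mathrm{rank}(\mG)=d$, so the block-diagonalized Gram need not be attainable by any feature matrix; the paper glosses over this too, so it is not a defect of your proof relative to theirs. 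Two small imprecisions on your side: positive definiteness of $\mI+\alpha\mG$ holds for any $\mG\succeq\vzero$ and needs no appeal to the noncollapse theorem, and your final appeal to "the subspace-preserving property" is not quite the right citation — what your own sentence about cross-block coefficients inflating $r(\mC)$ actually requires is the EBD property of the regularizer, which is the hypothesis the paper uses; your Pythagoras decoupling argument (cross-block coefficients cannot decrease $\|\mZ^\ast-\mZ^\ast\mC\|_F^2$ when the subspaces are orthogonal) is fine, but note it allows ties when a cross-block coefficient hits the null space of a block, so the regularizer is genuinely needed as the tie-breaker.
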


\begin{wrapfigure}{r}{0.55\textwidth}
  \vskip -0.1in
  \centering
  \includegraphics[trim=45pt 10pt 20pt 0pt, clip, width=0.5\textwidth]{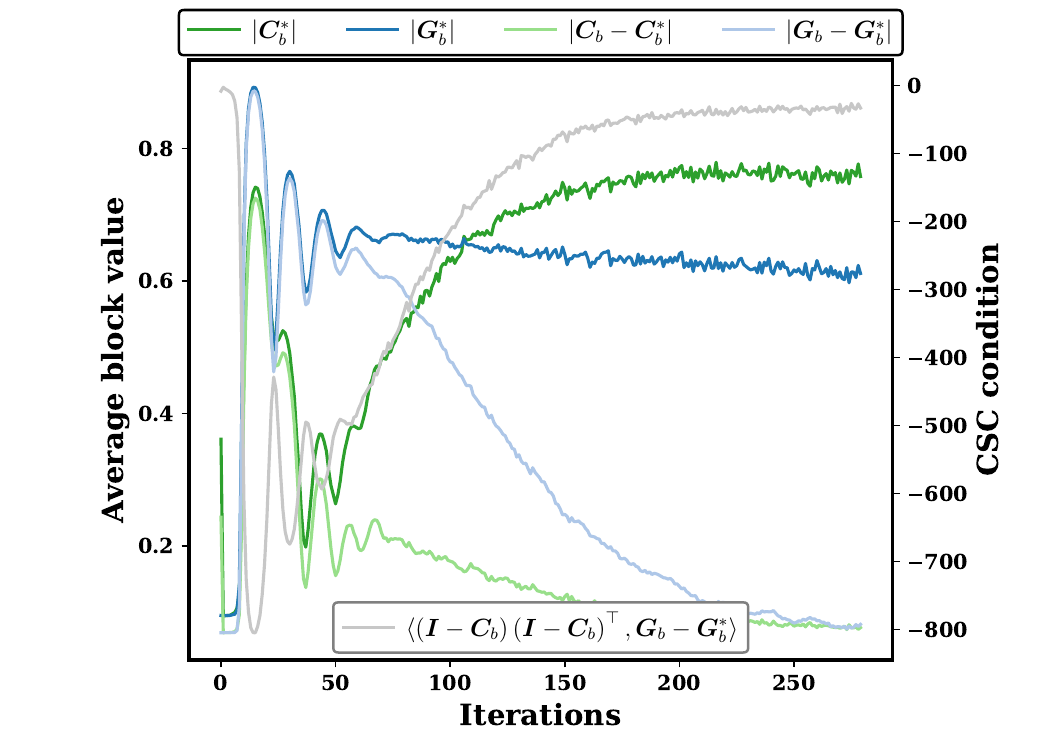}
  \caption{
  \textbf{Empirical validation to Theorem~\ref{Theorem:orthogonal} in Mini-batch on CIFAR-10.} 
  The mean curves of the absolute values of the in-block-diagonal entries (thick) and the off-block-diagonal entries (thin) are displayed along with the CSC condition (gray) during training PRO-DSC.
  }
  \label{Fig:block-diag by epoch}
\end{wrapfigure}

\myparagraph{Remark 1} Theorem~\ref{Theorem:orthogonal} suggests that our PRO-DSC is able to promote learning representations and self-expressive matrix with desired structures, \ie, the representations form a union of orthogonal subspaces and accordingly the self-expressive matrix is block-diagonal, when the condition $\langle \left(\mI-\mC\right)\left(\mI-\mC\right)^\top, \mG-\mG^\ast \rangle  \rightarrow 0$ is met. We call this condition \emph{a compatibly structured coherence} (CSC), which relates to the properties of the distribution of the representations in $\mZ$ and the self-coefficients in $\mC$. While it is not possible for us to give a theoretical justification when the CSC condition will be satisfied in general, we do have the empirical evidence that our implementation for PRO-DSC with careful designs does approximately satisfy such a condition and thus yields representations and self-expressive matrix with desired structure (See Figure~\ref{fig:Orthogonal}).\footnote{Please refer to Appendix~\ref{Sec:Validation to Theoretical Results} for more details about Figures~\ref{fig:theory validation}-\ref{Fig:block-diag by epoch}.}

{\mcb 
In Figure~\ref{Fig:block-diag by epoch}, we show the curves for the compatibly structured coherence (CSC) condition, and for the average values of the entries in $|\mG_b^\ast|$, $|\mG_b-\mG_b^\ast|$, $|\mC_b^\ast|$, $|\mC_b-\mC_b^\ast|$ computed in mini-batch during training PRO-DSC on CIFAR-10. As illustrated,  the CSC condition is progressively satisfied and consequently the average off-block values $|\mG_b-\mG_b^\ast|$ and $|\mC_b-\mC_b^\ast|$ gradually decrease, while the average in-block values $|\mG_b^\ast|$ and $|\mC_b^\ast|$ gradually increase, which empirically validates that PRO-DSC promotes block-diagonal $\mG_b$ and $\mC_b$. 
}

\subsection{Scalable Implementation}
Existing SEDSC models typically use autoencoders to learn the representations and learn the self-expressive matrix $\mC$ through an $N\times N$ fully-connected layer~\citep{Ji:NIPS17-DSCNet, Peng:TIP18,Pan:CVPR18, Zhang:CVPR19}. 
While such implementation is straightforward, there are two major drawbacks:
a) since that the number of self-expressive coefficients is quadratic to the number of data points, solving these coefficients suffers from expensive computation burden; 
b) the learning process is transductive, \ie, the network parameters cannot be generalized to unseen data.

To address these issues, similar to \citep{Zhang:CVPR21-SENet}, we reparameterize the self-expressive coefficients $c_{ij}$ by a neural network.
Specifically, the input data $\vx_i$ is fed into a neural network $\vh(\cdot;\boldsymbol{\Psi}): \mathbb{R}^D \rightarrow \mathbb{R}^d $ to yield normalized representations, \ie,
\begin{equation}
\label{eq:cluster embedding}
    \vy_i \coloneqq \vh(\vx_i;\boldsymbol{\Psi})/\|\vh(\vx_i;\boldsymbol{\Psi})\|_2,
\end{equation}
where $\boldsymbol{\Psi}$ denotes all the parameters in $\vh(\cdot)$.
Then, the parameterized self-expressive matrix $\mC_{\boldsymbol{\Psi}}$ is generated by:
\begin{equation}
\label{eq:sinkhorn proj}
    \mC_{\boldsymbol{\Psi}} \coloneqq\mathcal{P}(\mY^\top\mY),
\end{equation}
where $\mY \coloneqq[\vy_1,\ldots,\vy_N] \in \mathbb{R}^{d\times N}$ and $\mathcal{P}(\cdot)$ is the sinkhorn projection~\citep{Cuturi:NIPS13-sinkhorn}, which has been widely applied in deep clustering~\citep{Caron:NIPS20-SwAV,Ding:ICCV23}.\footnote{In practice, we set $\diag(\mC_{\boldsymbol{\Psi}})=\vzero$ to prevent trivial solution $\mC_{\boldsymbol{\Psi}}=\mI$.} 
To enable efficient representation learning, we introduce another learnable mapping $\vf(\cdot;\boldsymbol{\Theta}): \mathbb{R}^D \rightarrow \mathbb{R}^d $, for which 
\begin{equation}
\label{eq:representation}
    \vz_j \coloneqq \vf(\vx_j; \boldsymbol{\Theta})/\|\vf(\vx_j; \boldsymbol{\Theta})\|_2
\end{equation}
is the learned representation for the input $\vx_j$, where $\boldsymbol{\Theta}$ denotes the parameters in $\vf(\cdot)$ to learn the structured representation $\mZ_{\boldsymbol{\Theta}} \coloneqq[\vz_1,\ldots,\vz_N] \in \mathbb{R}^{d\times N}$.

Therefore, our principled framework for deep subspace clustering (PRO-DSC) in (\ref{Eq:Complete Problem}) can be reparameterized and reformulated as follows: 
\begin{alignat}{1}
\label{eq:deep-SENet-r}
    \mathop{\min}_{\boldsymbol{\Theta,\Psi}}\quad  & \mathcal{L}(\boldsymbol{\Theta}, \boldsymbol{\Psi})\coloneqq -\frac{1}{2}\log\det\left(\mI+\alpha\mZ_{\boldsymbol{\Theta}}^\top\mZ_{\boldsymbol{\Theta}}\right)+\frac{\gamma}{2}\left\|\mZ_{\boldsymbol{\Theta}}-\mZ_{\boldsymbol{\Theta}}\mC_{\boldsymbol{\Psi}}\right\|_F^2 + \beta \cdot r\left(\mC_{\boldsymbol{\Psi}}\right).
\end{alignat}

To strengthen the block-diagonal structure of self-expressive matrix, we choose the block-diagonal regularizer~\citep{Lu:TPAMI18} for $r(\mC_{\boldsymbol{\Psi}})$. 
To be specific, given the data affinity $\mA_{\boldsymbol{\Psi}}$, which is induced by default as $\mA_{\boldsymbol{\Psi}} \coloneqq\frac{1}{2}\left(\lvert\mC_{\boldsymbol{\Psi}}\rvert+\lvert\mC_{\boldsymbol{\Psi}}^\top\rvert\right)$, 
the block diagonal regularizer is defined as:
\begin{equation}
    r\left(\mC_{\boldsymbol{\Psi}}\right) \coloneqq \left\|\mA_{\boldsymbol{\Psi}}\right\|_{\boxk},
\label{eq:block-diagonal-prior-on-A}
\end{equation}
where $\left\|\mA_{\boldsymbol{\Psi}}\right\|_{\boxk}$ is the sum of the $k$ smallest eigenvalues of the Laplacian matrix of the affinity $\mA_{\boldsymbol{\Psi}}$.\footnote{Recall that the number of zero eigenvalues of the Laplacian matrix equals to the number of connected components in the graph~\citep{VonLuxburg:StatComp2007}.}

Consequently, the parameters in $\boldsymbol{\Theta}$ and $\boldsymbol{\Psi}$ of reparameterized PRO-DSC can be trained by Stochastic Gradient Descent (SGD) with the loss function $\mathcal{L}(\boldsymbol{\Theta}, \boldsymbol{\Psi})$ defined in (\ref{eq:deep-SENet-r}).
For clarity, we summarize the procedure for training and testing of our PRO-DSC in Algorithm~\ref{alg:algorithm}.

\myparagraph{Remark 2}
We note that all the commonly used regularizers with extended block-diagonal property for self-expressive model as discussed in~\citep{Lu:TPAMI18} can be used to improve the block-diagonal structure of self-expressive matrix. 
More interestingly, the specific type of the regularizers is not essential owning to the learned structured representation (Please refer to Table~\ref{ablation table} for details), and using a specific regularizer or not is also not essential since that the SGD-based optimization also induces some implicit regularization, \eg, low-rank~\citep{Gunasekar:NIPS17,Arora:NIPS19}.

\begin{algorithm}
\caption{Scalable \& Efficient Implementation of PRO-DSC via Differential Programming}
\label{alg:algorithm}
\begin{algorithmic}[1] 
\linespread{1.1} 
\item[\textbf{Input:}] Dataset $\mathcal{X}=\mathcal{X}_\text{train}\cup\mathcal{X}_\text{test}$, batch size $n_b$, hyper-parameters $\alpha,\beta,\gamma$, number of iterations $T$, learning rate $\eta$ 
\item[\textbf{Initialization:}] Random initialize the parameters ${\boldsymbol\Psi,\boldsymbol\Theta}$ in the networks $\vh(\cdot;\boldsymbol{\Psi})$ and $\vf(\cdot;\boldsymbol{\Theta})$
\item[\textbf{Training:}]
\For{$t=1,\dots,T$} 
    \State Sample a batch ${\boldsymbol{X}}_b\in \mathbb{R}^{D\times n_b} $ from $\mathcal{X}_\text{train}$ 
    
    \textit{\# Forward propagation}

    \State Compute self-expressive matrix $ \mC_b\in\mathbb{R}^{n_b\times n_b}$ by Eqs. (\ref{eq:cluster embedding}--\ref{eq:sinkhorn proj})
    \State Compute representations $ \mZ_b\in\mathbb{R}^{d\times n_b}$ by Eq. (\ref{eq:representation})
    
    \textit{\# Backward propagation}

    \State Compute gradients: $\nabla_{\boldsymbol{\Psi}} \coloneqq \frac{\partial \mathcal{L}}{\partial {\boldsymbol\Psi}} $, $\nabla_{\boldsymbol{\Theta}} \coloneqq \frac{\partial \mathcal{L}}{\partial {\boldsymbol\Theta}} $
    
    \State Update ${\boldsymbol \Psi}$ and ${\boldsymbol \Theta}$ via: ${\boldsymbol \Psi} \gets {\boldsymbol \Psi} - \eta \cdot \nabla_{\boldsymbol \Psi}$, ${\boldsymbol\Theta} \gets {\boldsymbol\Theta} - \eta \cdot  \nabla_{\boldsymbol{\Theta}}$
\EndFor
\item[\textbf{Testing:}]
    \State Compute self-expressive matrix $\mC_{\text{test}}$ by Eqs. (\ref{eq:cluster embedding}--\ref{eq:sinkhorn proj}) for $\mathcal{X}_\text{test}$
    \State Apply spectral clustering on the affinity $\mA_{\text{test}}$
\end{algorithmic}
\end{algorithm}

\section{Experiments}
\label{Sec:Experiments}
To validate our theoretical findings and to demonstrate the performance of our proposed framework, we conduct extensive experiments on synthetic data (Sec.~\ref{Sec:Experiments on Synthetic Data}) and real-world data (Sec.~\ref{Sec:Experiments on Real-World Data}).
Implementation details and more results are provided in Appendices~\ref{Sec:Experimental Details} and \ref{Sec: more experimental results}, respectively.

\subsection{Experiments on Synthetic Data}
\label{Sec:Experiments on Synthetic Data}
To validate whether PRO-DSC resolves the collapse issue in SEDSC and learns representations with a UoS structure, 
we first follow the procedure in~\citep{Ding:ICCV23} to generate two sets of synthetic data, as shown in the first column of Figure~\ref{fig:Synthetic Exp}, and then visualize in Figure~\ref{fig:Synthetic Exp}(b)-(e) the learned representations which are obtained from different methods on these synthetic data.

We observe that the SEDSC model overly compress all the representations to a closed curve on the hypersphere; whereas with increased weights (\ie, $\gamma\uparrow$) of the self-expressive term, the representations collapse to a few points. 
Our PRO-DSC yields linearized representations lying on orthogonal subspaces in both cases, confirming the effectiveness of our approach. 
Nevertheless, MLC~\citep{Ding:ICCV23} yields representations approximately on orthogonal subspaces.

\begin{figure*}[ht]
\vskip -0.1in
    \centering
    \begin{subfigure}[b]{0.18\linewidth}
           \includegraphics[trim=130pt 80pt 130pt 80pt, clip, width=\textwidth]{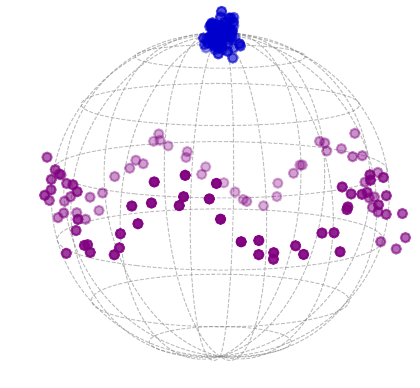}
    \end{subfigure}\hfill
    \begin{subfigure}[b]{0.18\linewidth}
           \includegraphics[trim=130pt 80pt 130pt 80pt, clip, width=\textwidth]{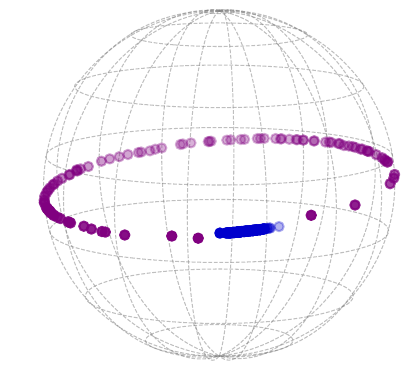}
    \end{subfigure}\hfill
    \begin{subfigure}[b]{0.18\linewidth}
           \includegraphics[trim=130pt 80pt 130pt 80pt, clip, width=\textwidth]{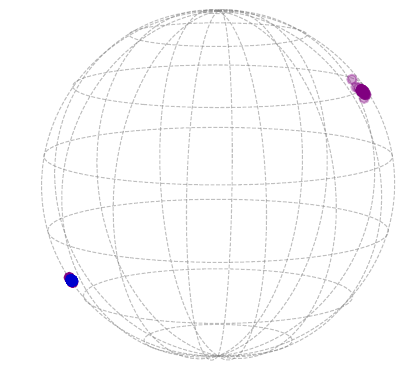}
    \end{subfigure}\hfill
    \begin{subfigure}[b]{0.18\linewidth}
        \includegraphics[trim=130pt 80pt 130pt 80pt, clip, width=\textwidth]{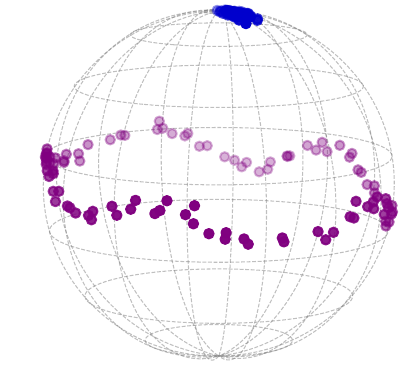}
        \end{subfigure}\hfill
    \begin{subfigure}[b]{0.18\linewidth}
           \includegraphics[trim=130pt 80pt 130pt 80pt, clip, width=\textwidth]{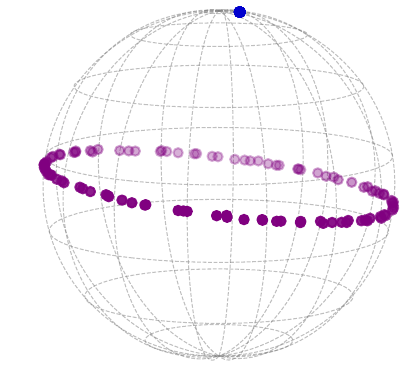}
    \end{subfigure}\\
    \begin{subfigure}[b]{0.18\linewidth}
           \includegraphics[trim=130pt 80pt 130pt 80pt, clip, width=\textwidth]{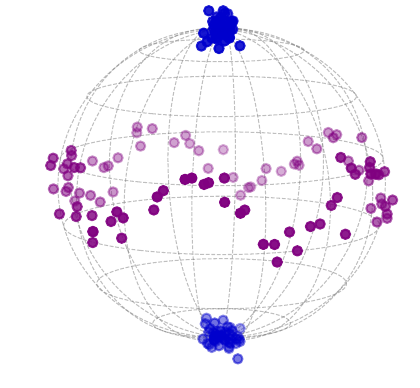}
            \caption{Input Data}
            \label{fig:syn-input data}
    \end{subfigure}\hfill
    \begin{subfigure}[b]{0.18\linewidth}
           \includegraphics[trim=130pt 80pt 130pt 80pt, clip, width=\textwidth]{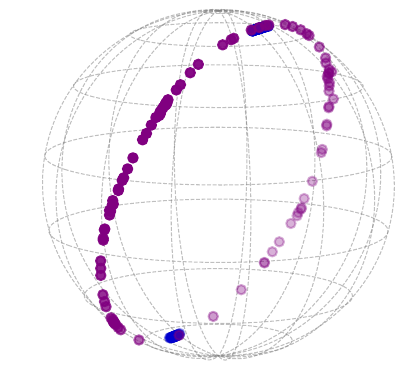}
            \caption{SEDSC}
            \label{fig:syn-DSCNet}
    \end{subfigure}\hfill
    \begin{subfigure}[b]{0.18\linewidth}
           \includegraphics[trim=130pt 80pt 130pt 80pt, clip, width=\textwidth]{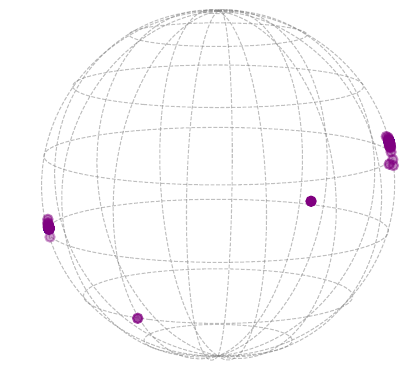}
            \caption{SEDSC ($\gamma\uparrow$)}
            \label{fig:syn-DSCNet-1}
    \end{subfigure}\hfill
    \begin{subfigure}[b]{0.18\linewidth}
        \includegraphics[trim=130pt 80pt 130pt 80pt, clip, width=\textwidth]{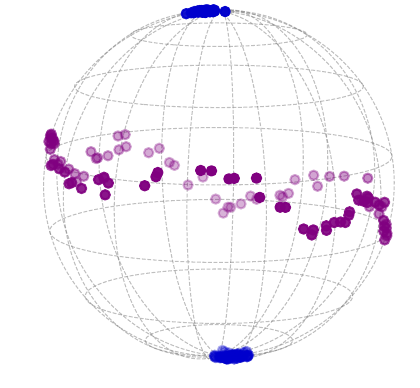}
        \caption{MLC}
        \label{fig:syn-MLC}
        \end{subfigure}\hfill
    \begin{subfigure}[b]{0.18\linewidth}
           \includegraphics[trim=130pt 80pt 130pt 80pt, clip, width=\textwidth]{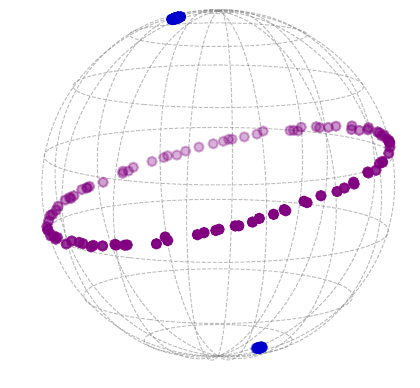}
            \caption{PRO-DSC}
            \label{fig:syn-PRO-DSC}
    \end{subfigure}
\caption{
Visualization Experiments on Synthetic Data.
}
\label{fig:Synthetic Exp}
\vskip -0.15in
\end{figure*}

\subsection{Experiments on Real-World Data}
\label{Sec:Experiments on Real-World Data}

To evaluate the performance of our proposed approach, 
we conduct experiments on six real-world image datasets, including CIFAR-10, CIFAR-20, CIFAR-100, ImageNet-Dogs-15, Tiny-ImageNet-200, and ImageNet-1k,  with the pretrained CLIP features\footnote{Please refer to Appendix~\ref{Sec: more experimental results} for the results on other pre-trained models.}~\citep{Radford:ICML21-CLIP}, and compare 
to several baseline methods, including classical clustering algorithms, \eg, $k$-means~\citep{MacQueen-1967} and spectral clustering~\citep{Shi:TPAMI00}, subspace clustering algorithm, \eg, EnSC~\citep{You:CVPR16-EnSC} and SENet~\citep{Zhang:CVPR21-SENet}, deep clustering algorithms, \eg, SCAN~\citep{Van:ECCV2020-SCAN}, TEMI~\citep{Adaloglou:BMVC23} and CPP~\citep{Chu:ICLR24}, and deep subspace clustering algorithms, \eg, DSCNet~\citep{Ji:NIPS17-DSCNet} and EDESC~\citep{Cai:CVPR22-EDESC}.
We measure clustering performance using clustering accuracy (ACC) and normalized mutual information (NMI), and report the experimental results in Table~\ref{tab:Benchmark-CLIP}, where the results of our PRO-DSC are averaged over 10 trials (with $\pm$std).
Since that for most baselines, except for TEMI, the clustering performance with the CLIP feature has not been reported, we conduct experiments using the implementations provided by the authors. For TEMI, we directly cited the results from~\citep{Adaloglou:BMVC23}.

\myparagraph{Performance comparison} 
As shown in Table~\ref{tab:Benchmark-CLIP}, our PRO-DSC significantly outperforms subspace clustering algorithms, \eg, SSCOMP, EnSC and SENet, and deep subspace clustering algorithms, \eg, DSCNet and EDESC. Moreover, our PRO-DSC obtains better performance than the state-of-the-art deep clustering and deep manifold clustering methods, \eg, SCAN, TEMI and CPP.

\myparagraph{Validation to the theoretical results}
To validate whether the alignment emerges and when representations collapse occurs during the training period, 
we compute $\mG_b = \mZ_b^\top \mZ_b$ and $\mM_b = (\mI - \mC_b)(\mI - \mC_b)^\top$ in mini-batch at different epoch during the training period, and then measure the alignment error via $\| \mG_b \mM_b - \mM_b \mG_b \|_F$ and the eigenspace correlation via $\langle\vu_j, \frac{\mG_b \vu_j}{\|\mG_b \vu_j \|_2}\rangle$ where $\vu_j$ is the $j$-th ending eigenvector\footnote{The eigenvectors are sorted according to eigenvalues of $\mM_b$ in ascending order.} of $\mM_b$ for $j=1,\cdots,n_b$, 
and plot the eigenvalues of $\mG_b$ and $\mM_b$, where $n_b$ is the sample size per mini-batch. 
Moreover, we also record empirical performance ACC and SRE on CIFAR-10 and CIFAR-100 under varying hyper-parameters $\alpha$ and $\gamma$ to validate the condition in Theorem \ref{Theorem:landscape} to avoid collapse. Experimental results are displayed in Figures~\ref{fig:theory validation} and \ref{fig:supp alpha gamma}. We observe that $\mG_b$ and $\mM_b$ are increasingly aligned and the representations will no longer collapse provided that the parameters are properly set. More details are provided in Section~\ref{Sec:Validation to Theoretical Results}.

\begin{table*}[t]
\caption{\textbf{Clustering performance Comparison on the CLIP features.} The best results are in bold and the second best results are underlined. ``OOM'' means out of GPU memory.}
\label{tab:Benchmark-CLIP}
\vskip -0.25in
\begin{center}
\resizebox{1.0\textwidth}{!} {
\begin{tabular}{lcccccccccccc}
\toprule
\multirow{2}{*}{Method} & \multicolumn{2}{c}{CIFAR-10} & \multicolumn{2}{c}{CIFAR-20} & \multicolumn{2}{c}{CIFAR-100} &  \multicolumn{2}{c}{TinyImgNet-200}& \multicolumn{2}{c}{ImgNetDogs-15}&\multicolumn{2}{c}{ImageNet-1k} \\
        & ACC   & NMI   & ACC   & NMI   & ACC   & NMI   &  ACC& NMI& ACC& NMI&ACC   & NMI \\
\midrule
 $k$-means& 83.5& 84.1& 46.9& 49.4& 52.8& 66.8&  54.1& 73.4& 52.7& 53.6&53.9&79.8\\
 SC& 79.8& 84.8& 53.3& 61.6& 66.4& 77.0&  62.8& 77.0& 48.3& 45.7&56.0&81.2\\
 SSCOMP& 85.5 & 83.0 & 61.4& 63.4 & 55.6 & 69.7 & 56.7 & 72.7 & 25.6 & 15.9 & 44.1 & 74.4\\
 EnSC& 95.4& 90.3& 61.0& 68.7& 67.0& 77.1&  \underline{64.5}& \underline{77.7}& 57.9& 56.0& 59.7 & \textbf{83.7}\\
SENet& 91.2& 82.5& 65.3& 68.6& 67.0& 74.7&  63.9& 76.6& 58.7& 55.3& 53.2& 78.1\\
 \midrule
 SCAN& 95.1& 90.3& 60.8& 61.8& 64.1& 70.8&  56.5& 72.7& 70.5& 68.2&54.4&76.8\\
TEMI& \underline{96.9}  & \underline{92.6}& 61.8  & 64.5  & 73.7  & 79.9  &  -& -& -& -&\underline{64.0}  &  -\\
 \midrule
 CPP& 96.8& 92.3& \underline{67.7}& \underline{70.5}& \underline{75.4} & \underline{82.0} &  63.4& 75.5& \underline{83.0}& \textbf{81.5}&62.0 & 82.1\\
 EDESC& 84.2& 79.3& 48.7& 49.1& 53.1& 68.6&  51.3& 68.8& 53.3& 47.9& 46.5& 75.5\\
 DSCNet&   78.5&   73.6&   38.6&  45.7&  39.2&   53.4&   62.3&   68.3& 40.5  & 30.1  &  OOM & OOM \\
  {\mcb Our PRO-DSC }& $\mcb\textbf{97.2}{\scriptstyle\pm 0.2}$  &$\mcb\textbf{92.8}{\scriptstyle\pm 0.4}$ &  $\mcb\textbf{71.6}{\scriptstyle\pm 1.2}$ &$\mcb\textbf{73.2}{\scriptstyle\pm 0.5}$&  $\mcb\textbf{77.3}{\scriptstyle\pm 1.0}$ & $\mcb\textbf{82.4}{\scriptstyle\pm 0.5}$  & $\mcb\textbf{69.8}{\scriptstyle\pm 1.1}$  &$\mcb\textbf{80.5}{\scriptstyle\pm 0.7}$& $\mcb\textbf{84.0}{\scriptstyle\pm 0.6}$  &$\mcb\underline{81.2}{\scriptstyle\pm 0.8}$& $\mcb\textbf{65.0}{\scriptstyle\pm 1.2}$ &$\mcb\underline{83.4}{\scriptstyle\pm 0.6}$\\
\bottomrule

\end{tabular}
}
\end{center}
\vskip -0.15in
\end{table*}

\myparagraph{Evaluation on learned representations} 
To quantitatively evaluate the effectiveness of the learned representations, we run $k$-means~\citep{MacQueen-1967}, spectral clustering~\citep{Shi:TPAMI00}, and EnSC~\citep{You:CVPR16-EnSC} on four datasets with three different features: a) the CLIP features, b) the representations learned via CPP, and c) the representations learned by our PRO-DSC.
Experimental results are shown in Figure~\ref{Fig:FH clustering performance} (and more results are given in Table~\ref{tab:feature acc} of Appendix \ref{Sec: more experimental results}). 
We observe that the representations learned by our PRO-DSC outperform the CLIP features and the CPP representations in most cases across different clustering algorithms and datasets.
Notably, the clustering accuracy with the representations learned by our PRO-DSC exceeds 90\% on CIFAR-10 and 75\% on CIFAR-100, whichever clustering algorithm is used. 
Besides, the clustering performance is further improved by using the learnable mapping $\vh(\cdot;\boldsymbol{\Psi})$, indicating a good generalization ability. 

\begin{figure*}[ht]
\vskip -0.05in
    \centering
    \begin{subfigure}[b]{0.25\linewidth}
           \includegraphics[trim=15pt 15pt 30pt 30pt, clip, width=\textwidth]{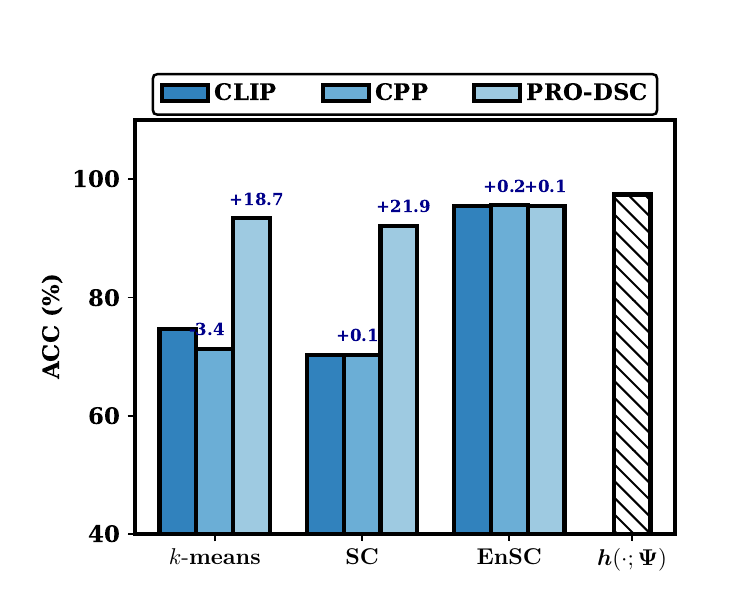}
            \caption{CIFAR-10}
            \label{fig:feature_acc-CIFAR10}
    \end{subfigure}\hfill
    \begin{subfigure}[b]{0.25\linewidth}
           \includegraphics[trim=15pt 15pt 30pt 30pt, clip, width=\textwidth]{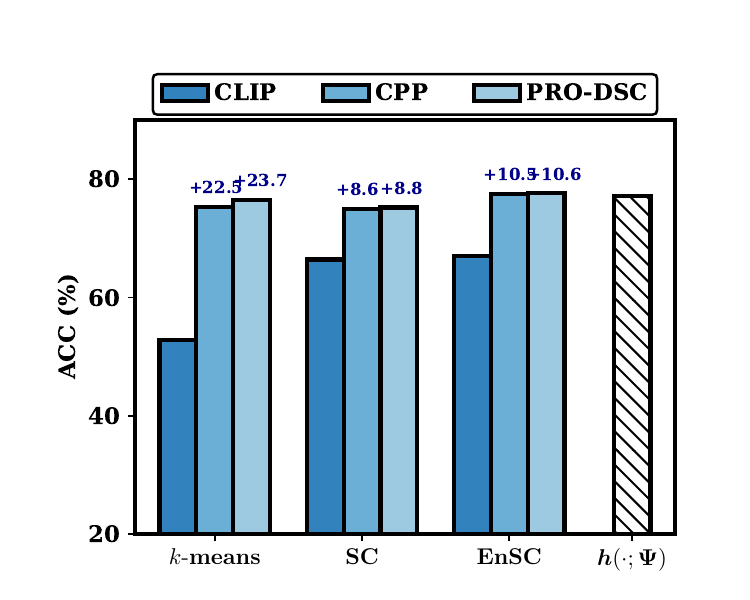}
            \caption{CIFAR-100}
            \label{fig:feature_acc-CIFAR100}
    \end{subfigure}\hfill
    \begin{subfigure}[b]{0.25\linewidth}
           \includegraphics[trim=15pt 15pt 30pt 30pt, clip, width=\textwidth]{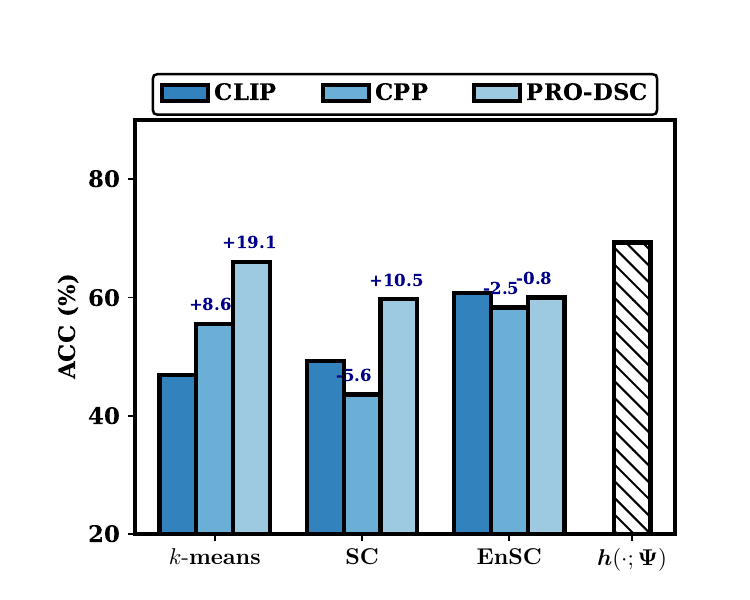}
            \caption{CIFAR-20}
            \label{fig:feature_acc-CIFAR20}
    \end{subfigure}\hfill
    \begin{subfigure}[b]{0.25\linewidth}
           \includegraphics[trim=15pt 15pt 30pt 30pt, clip, width=\textwidth]{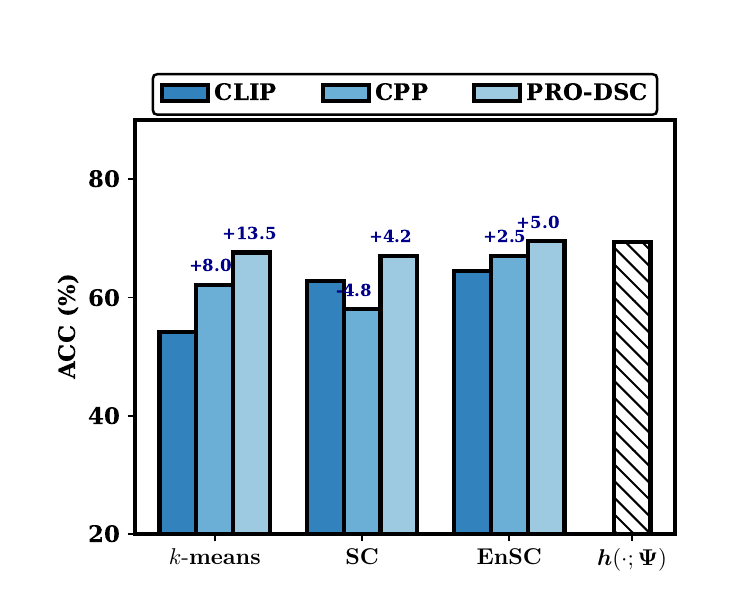}
            \caption{TinyImageNet-200}
            \label{fig:TinyImageNet-200}
    \end{subfigure}
\caption{\text{Clustering accuracy with CLIP features and learned representations.}}
\label{Fig:FH clustering performance}
\vskip -0.05in
\end{figure*}

\myparagraph{Sensitivity to hyper-parameters} 
In Figure~\ref{fig:supp alpha gamma}, we verify that our PRO-DSC yields satisfactory results when the conditions in Theorem~\ref{Theorem:landscape} to avoid collapse are met. 
Moreover, we evaluate the performance sensitivity to hyper-parameters $\gamma$ and $\beta$ by experiments on the CLIP features of CIFAR-10, CIFAR-100 and TinyImageNet-200 with varying $\gamma$ and $\beta$. In Figure~\ref{fig:sensi}, we observe that the clustering performance maintains satisfactory under a broad range of $\gamma$ and $\beta$. 

\begin{figure}[ht]
\vskip -0.01in
    \centering
    \begin{subfigure}[b]{0.32\linewidth}
           \includegraphics[trim=0pt 50pt 30pt 90pt, clip, width=\textwidth]{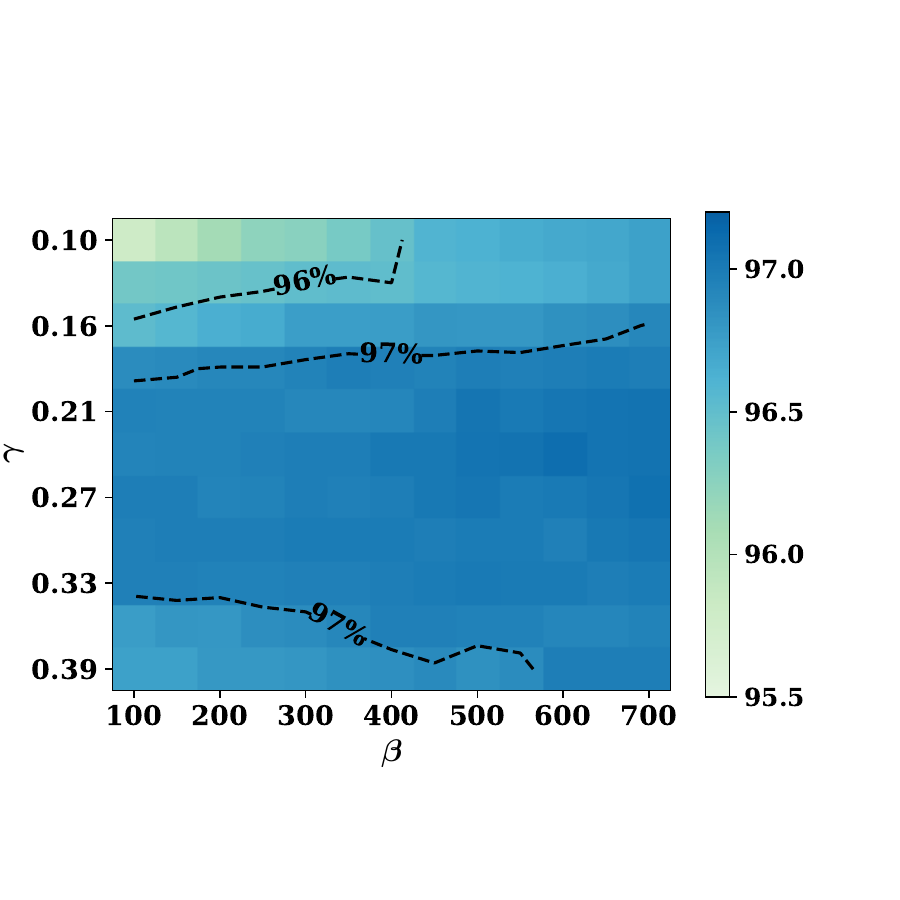}
            \caption{CIFAR-10}
            \label{fig:sensi-CIFAR10}
    \end{subfigure}\hfill
    \begin{subfigure}[b]{0.32\linewidth}
           \includegraphics[trim=0pt 50pt 30pt 90pt, clip, width=\textwidth]{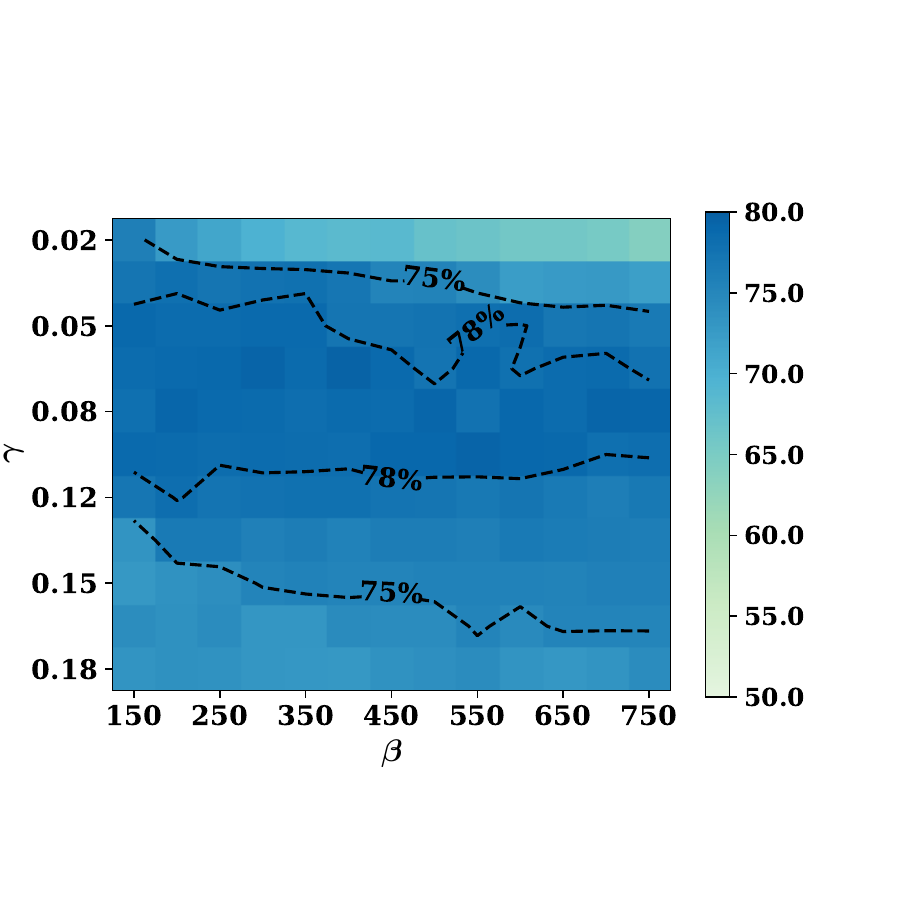}
            \caption{CIFAR-100}
            \label{fig:sensi-CIFAR100}
    \end{subfigure}\hfill
     \begin{subfigure}[b]{0.32\linewidth}
           \includegraphics[trim=0pt 50pt 30pt 90pt, clip, width=\textwidth]{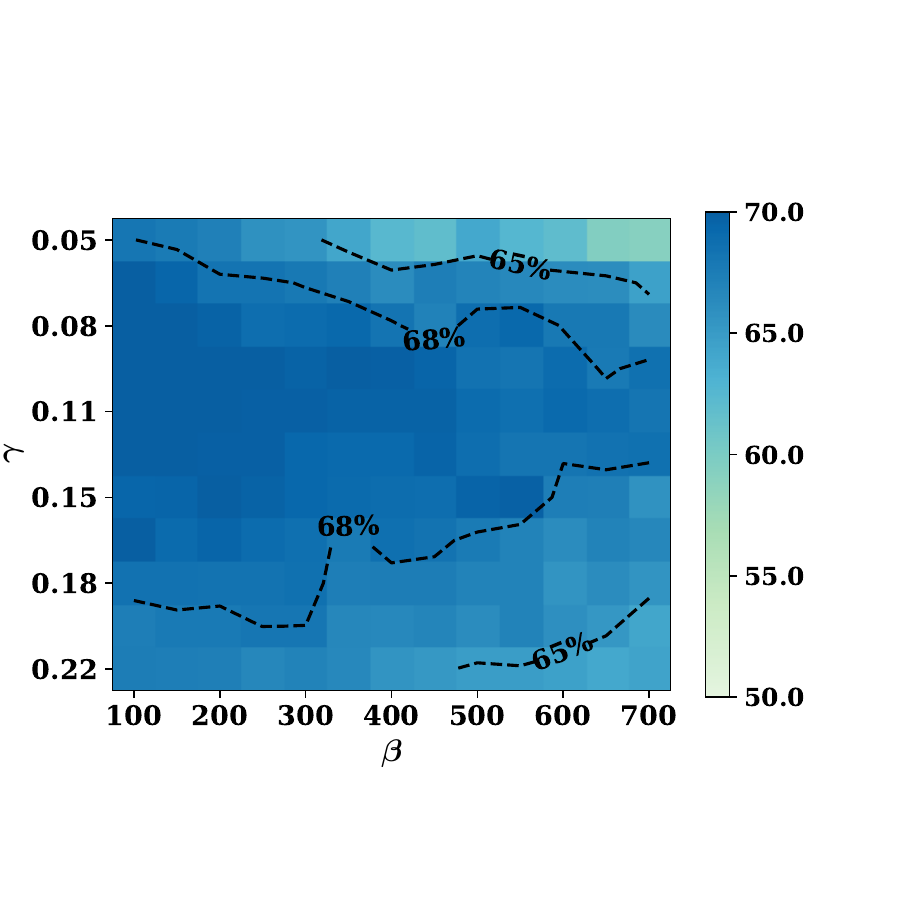}
            \caption{TinyImageNet}
            \label{fig:sensi-tiny}
    \end{subfigure}
\caption{\text{Evaluation on sensitivity to hyper-parameters  $\gamma$ and $\beta$ on three datasets. 
}}
\label{fig:sensi}
\vskip -0.05in
\end{figure}

{\mcb 
\myparagraph{Time and memory cost}
The most time-consuming operations in our PRO-DSC are computing the term involving $\log\det(\cdot)$ and the term $\left\|\mA\right\|_{\boxk}$ involving eigenvalue decomposition, respectively.
The time complexity for $\log\det(\cdot)$ is $\mathcal{O}(\min\{n_b^3, d^3\})$ due to the commutative property of $\log\det(\cdot)$ function~\citep{Yu:NIPS20}, and the time complexity for $\left\|\mA\right\|_{\boxk}$ is $\mathcal{O}(k n_b^2)$.\footnote{For an $N \times N$ matrix, the complexity of computing its $k$ eigenvalues by Lanczos algorithm is $\mathcal{O}(kN^2)$ and the complexity of computing its $\det(\cdot)$ is $\mathcal{O}(N^3)$.}
Therefore the overall time complexity of our PRO-DSC is $\mathcal{O}(k n_b^2 + \min\{n_b^3, d^3\})$.
Note that TEMI~\citep{Adaloglou:BMVC23} employs $H=50$ cluster heads during training, adding further time and memory costs and
CPP~\citep{Chu:ICLR24} involves computing $\log\det(\cdot)$ for $n_b+1$ times, leading to complexity $\mathcal{O}((n_b+1)\min\{n_b^3,d^3\})$.
The computation time and memory costs are shown in Table~\ref{Tab:Time consuming} for which all the experiments are conducted on a single NVIDIA RTX 3090 GPU and Intel Xeon Platinum 8255C CPU. 
We read that our PRO-DSC significantly reduces the time consumption, particularly for datasets with a large number of clusters.
}

\begin{table*}[h]
\vskip -0.01in
  \centering
  \caption{\text{Comparison on time (s) and memory cost (MiB). ``OOM'' means out of GPU memory.} 
  } 
  \vskip 0in
  \resizebox{.8\textwidth}{!} {
    \begin{tabular}{l|c|cc|cc|cc}
     \toprule
  \multirow{2}{*}{Methods}&  \multirow{2}{*}{Complexity}& \multicolumn{2}{c|}{CIFAR-10} & \multicolumn{2}{c|}{CIFAR-100}& \multicolumn{2}{c}{ImageNet-1k}\\
          &  & Time & Memory & Time & Memory & Time &Memory \\
        \midrule
 SEDSC& $\mathcal{O}(N^2d)$& - & OOM & - & OOM & - & OOM\\
 TEMI& $\mathcal{O}(Hn_bd^2)$& 6.9& \textbf{1,766}& 5.1& 2,394& 262.1&2,858\\
    CPP   &  $\mathcal{O}((n_b+1)\min\{n_b^3,d^3\})$& \textbf{3.5}& 3,802& 7.1& 10,374& 1441.2&22,433\\
    PRO-DSC &  $\mathcal{O}(k n_b^2 + \min\{n_b^3, d^3\})$& 4.5& 2,158& \textbf{4.0}& \textbf{2,328}& \textbf{90.0}&\textbf{2,335}\\
    \bottomrule
    \end{tabular}}
  \label{Tab:Time consuming}
  \vskip -0.01in
\end{table*}

\begin{table*}[h]
\vskip -0.05in
  \centering
  \caption{{\text{Ablation studies on different loss functions and regularizers.} 
  }}
  \label{ablation table}
  \resizebox{.80\textwidth}{!} {
        \begin{tabular}{l|P{0.8cm}P{0.8cm}P{0.8cm}P{0.8cm}P{0.8cm}P{0.8cm}|P{0.8cm}P{0.8cm}P{0.8cm}P{0.8cm}P{0.8cm}P{0.8cm}}
    \toprule
    \multirow{2}{*}{} & \multicolumn{6}{c|}{Loss Term} & \multicolumn{2}{c}{CIFAR-10} & \multicolumn{2}{c}{CIFAR-100} & \multicolumn{2}{c}{ImgNetDogs-15}\\
    & $\mathcal{L}_{1}$ &  $\mathcal{L}_{2}$ &  $\|\mA\|_{\boxk}$ & $\|\mC\|_1$ & $\|\mC\|_F^2$ & $\|\mC\|_*$ & ACC & NMI & ACC & NMI  & ACC & NMI\\
    \cmidrule(lr){1-2}\cmidrule(lr){2-7}\cmidrule(lr){8-9} \cmidrule(lr){10-11} \cmidrule(lr){12-13}
    \multirow{3}{*}{\rotatebox{90}{\footnotesize{Ablation}}} 
    &  & $\surd$ &  $\surd$ &  & & & 56.9 & 47.7 & 54.6 & 60.9 & 46.7 & 37.1\\
    &  $\surd$&  &  $\surd$ &  & & & 69.6 & 56.4 & 64.7 & 71.7 & 10.5 & 1.7\\
    & $\surd$ &  $\surd$ &  & & & & 97.0 & \textbf{93.0} & 74.6 & 80.9 & 80.9 & 78.8\\
    \cmidrule(lr){1-2} \cmidrule(lr){2-7}\cmidrule(lr){8-9} \cmidrule(lr){10-11} \cmidrule(lr){12-13}
    \multirow{3}{*}{\rotatebox{90}{\footnotesize{Regularizer}}} &  $\surd$ &  $\surd$ &  &  &  & $\surd$  &  97.0&  92.6&  75.2&  81.1 & 81.3& 79.1\\
        & $\surd$ &  $\surd$ &  &   & $\surd$ &  &  97.0&  92.6&  75.2&  80.9 & 80.9 & 78.8\\
    & $\surd$ &  $\surd$ &  &  $\surd$ &  &  &  96.7&  91.9&  76.4&  81.8 & 81.0 & 78.8\\
     & $\boldsymbol{\surd}$ &  $\boldsymbol{\surd}$ &  $\boldsymbol{\surd}$ &  & & & \textbf{97.2} & {92.8} & \textbf{77.3} & \textbf{82.4} & \textbf{84.0} & \textbf{81.2} \\
    \bottomrule
\end{tabular}
}
\vskip -0.02in
\end{table*}

\myparagraph{Ablation study}
To verify the effectiveness of each components in the loss function of our PRO-DSC, we conduct a set of ablation studies with the CLIP features on CIFAR-10, CIFAR-100, and ImageNetDogs-15, and report the results in Table~\ref{ablation table}, where $\mathcal{L}_{1} \coloneqq-\frac{1}{2}\log\det\left(\mI+\alpha\mZ_{\boldsymbol{\Theta}}^\top\mZ_{\boldsymbol{\Theta}}\right)$ and $\mathcal{L}_{2} \coloneqq\frac{1}{2}\|\mZ_{\boldsymbol{\Theta}}-\mZ_{\boldsymbol{\Theta}}\mC_{\boldsymbol{\Psi}}\|_F^2$.
The absence of the term $\mathcal{L}_{1}$ leads to catastrophic feature collapse (as demonstrated in Sec.~\ref{Sec:prerequisite}); whereas without the self-expressive $\mathcal{L}_{2}$, the model lacks a loss function for learning the self-expressive coefficients.
In both cases, clustering performance drops significantly.
More interestingly, when we replace the block diagonal regularizer $\|\mA\|_{\boxk}$ with $\|\mC\|_1$, $\|\mC\|_*$, and $\|\mC\|_F^2$ or even drop the explicit regularizer $r(\cdot)$, the clustering performance still maintains satisfactory. This confirms that the choice of the regularizer is not essential owning to the structured representations learned by our PRO-DSC.

\section{Related Work}
\label{Sec:Related Work}

\myparagraph{Deep subspace clustering}
To tackle with complex real world data, a number of Self-Expressive Deep Subspace Clustering (SEDSC) methods have been developed in the past few years, \eg, \citep{Ji:NIPS17-DSCNet, Peng:TIP18, Pan:CVPR18, Zhang:CVPR19, Zhang:ICML19, Dang:CVPR20, Peng:TNNLS20, Lv:TIP21, Cai:CVPR22-EDESC,Wang:TIP23}. 
The key step in SEDSC is to adopt a deep learning module to embed the input data into feature space. For example, deep autoencoder network is adopted in \citep{Peng:TIP18}, deep convolutional autoencoder network is used in \citep{Ji:NIPS17-DSCNet, Pan:CVPR18, Zhang:CVPR19}.  
Unfortunately, as pointed out in~\citep{Haeffele:ICLR21}, the existing SEDSC methods suffer from a catastrophic feature collapse and there is no evidence that the learned representations align with a UoS structure.
To date, however, a principled deep subspace clustering framework has not been proposed.

\myparagraph{Deep clustering}
Recently, most of state-of-the-art deep clustering methods adopt a two-step procedure: at the first step, self-supervised learning based pre-training,~\eg, SimCLR~\citep{Chen:ICML20-SimCLR}, MoCo~\citep{He:CVPR20-moco}, BYOL~\citep{Grill:NIPS20-BYOL} and SwAV~\citep{Caron:NIPS20-SwAV}, is adopted to learn the representations; and then deep clustering methods are incorporated to refine the representations, via, \eg, pseudo-labeling~\citep{Caron:ECCV18,Van:ECCV2020-SCAN,Park:CVPR21,Niu:TIP22-spice},  cluster-level contrastive learning~\citep{Li:AAAI21-CC}, local and global neighbor matching~\citep{Dang:CVPR21-NNM}, graph contrastive learning~\citep{Zhong:ICCV21-GCC}, self-distillation~\citep{Adaloglou:BMVC23}. Though the clustering performance has been improved remarkably, the underlying geometry structure of the learned representations is unclear and ignored.

\myparagraph{Representation learning with a UoS structure} 
The methods for representation learning that favor a UoS structure are pioneered in supervised setting, \eg, \citep{Lezama:CVPR18, Yu:NIPS20}. In \citep{Lezama:CVPR18}, a nuclear norm based geometric loss 
is proposed to learn representations that lie on a union of orthogonal subspaces; in \citep{Yu:NIPS20}, a principled framework called Maximal Coding Rate Reduction ($\text{MCR}^2$) 
is proposed to learn representations that favor the structure of a union of orthogonal subspaces~\citep{Wang:ICML24}. 
More recently, the $\text{MCR}^2$ framework is modified to develop deep manifold clustering methods, \eg,  NMCE~\citep{Li:Arxiv22}, MLC~\citep{Ding:ICCV23} and CPP~\citep{Chu:ICLR24}. 
In \citep{Li:Arxiv22}, the $\text{MCR}^2$ framework combines with constrastive learning to perform manifold clustering and representation learning; 
in \citep{Ding:ICCV23}, the $\text{MCR}^2$ framework combines with doubly stochastic affinity learning to perform manifold linearizing and clustering; 
and in~\citep{Chu:ICLR24}, the features from large pre-trained model (\eg, CLIP) are adopted to evaluate the performance of \citep{Ding:ICCV23}. 
While the $\text{MCR}^2$ framework has been modified in these methods for manifold clustering, 
none of them provides theoretical justification to yield structured representations. 
Though our PRO-DSC shares the same regularizer defined in Eq. (\ref{eq:tcr}) with MLC~\citep{Ding:ICCV23}, we are for the first time to adopt it into the SEDSC framework to attack the catastrophic feature collapse issue with theoretical analysis.

\section{Conclusion}
We presented a Principled fRamewOrk for Deep Subspace Clustering (PRO-DSC), which jointly learn structured representations and self-expressive coefficients. 
Specifically, our PRO-DSC incorporates an effective regularization into self-expressive model to prevent the catastrophic representation collapse with theoretical justification. 
Moreover, we demonstrated that our PRO-DSC is able to learn structured representations that form a desirable UoS structure, and also developed an efficient implementation based on reparameterization and differential programming. 
We conducted extensive experiments on synthetic data and six benchmark datasets to verify the effectiveness of our proposed approach and validate our theoretical findings.

\section*{Acknowledgments}
The authors would like to thank the constructive comments from anonymous reviewers. 
This work is supported by the National Natural Science Foundation of China under Grant~61876022.

\section*{Ethics Statement}
In this work, we aim to extend traditional subspace clustering algorithms by leveraging deep learning techniques to enhance their representation learning capabilities. Our research does not involve any human subjects, and we have carefully ensured that it poses no potential risks or harms. Additionally, there are no conflicts of interest, sponsorship concerns, or issues related to discrimination, bias, or fairness associated with this study. We have taken steps to address privacy and security concerns, and all data used comply with legal and ethical standards. Our work fully adheres to research integrity principles, and no ethical concerns have arisen during the course of this study.

\section*{Reproducibility Statement}
To ensure the reproducibility of our work, we have released the source code. Theoretical proofs of the claims made in this paper are provided in Appendix~\ref{Sec:Proof of Main Result}, and the empirical validation of these theoretical results is shown in Figures~\ref{fig:supp alpha gamma}-\ref{Fig:block-diag by epoch}, with further detailed explanations in Appendix~\ref{Sec:Validation to Theoretical Results}. All datasets used in our experiments are publicly available, and we have provided a comprehensive description of the data processing steps in Appendix~\ref{Sec:Experimental Details}. Additionally, detailed experimental settings and configurations are outlined in Appendix~\ref{Sec:Experimental Details} to facilitate the reproduction of our results.

\bibliography{./biblio/xhmeng,./biblio/yc,./biblio/cgli,./biblio/zhjj,./biblio/learning,./biblio/temp}
\bibliographystyle{iclr2025_conference}

\newpage
\appendix
\renewcommand{\thefigure}{\Alph{section}.\arabic{figure}}
\renewcommand{\thetable}{\Alph{section}.\arabic{table}}
\setcounter{figure}{0}
\setcounter{table}{0}
\renewcommand{\appendixname}{Appendix~\Alph{section}}
\begin{sc}
\begin{center}
    \Large Supplementary Material for ``Exploring a Principled Framework for Deep Subspace Clustering''
\end{center}
\end{sc}

The supplementary materials are divided into three parts. In Section~\ref{Sec:Proof of Main Result}, we present the proofs of our theoretical results. 
In Section~\ref{Sec: supp experiments}, we present the supplementary materials for experiments, including experimental details (Sec. \ref{Sec:Experimental Details}), empirical validation on our theoretical results (Sec. \ref{Sec:Validation to Theoretical Results}), and more experimental results (Sec. \ref{Sec: more experimental results}).
In Section~\ref{sec:limitation}, we discuss about the limitations and failure cases of PRO-DSC.

\section{Proofs of Main Results}
\label{Sec:Proof of Main Result}
\setcounter{theorem}{0}
\setcounter{lemma}{0}

{\mcb As a preliminary, we start by introducing a lemma from \citep{Haeffele:ICLR21} and provide its proof for the convenience of the readers. 
} 

\begin{lemma}[\citealp{Haeffele:ICLR21}]
    The rows of the optimal solution $\mZ$ for problem (\ref{Eq:DSCNet problem}) are the eigenvectors 
    that associate with the smallest eigenvalues of $(\mI-\mC)(\mI-\mC)^\top$.
\end{lemma}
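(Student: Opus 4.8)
The plan is to fix $\mC$ and observe that the regularizer $\beta \cdot r(\mC)$ is independent of $\mZ$, so characterizing the optimal $\mZ$ in problem (\ref{Eq:DSCNet problem}) reduces to minimizing the self-expressive term alone under the norm constraint. First I would rewrite the objective as a quadratic form: since $\|\mZ - \mZ\mC\|_F^2 = \|\mZ(\mI-\mC)\|_F^2 = \tr(\mZ\mM\mZ^\top)$ with $\mM \coloneqq (\mI-\mC)(\mI-\mC)^\top$ symmetric and positive semidefinite, the subproblem becomes
\begin{align}
\min_{\mZ}\ \tfrac{1}{2}\tr(\mZ\mM\mZ^\top)\quad \mathrm{s.t.}\quad \|\mZ\|_F^2 = N.
\end{align}
The feasible set is a sphere, hence compact, so a minimizer exists.

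Next I would decompose the trace across the rows of $\mZ$. Writing $\vz_1^\top,\dots,\vz_d^\top$ for these rows, one has $\tr(\mZ\mM\mZ^\top) = \sum_{i=1}^d \vz_i^\top\mM\vz_i$ and $\|\mZ\|_F^2 = \sum_{i=1}^d\|\vz_i\|_2^2$. Since $\mM$ is positive semidefinite, the Rayleigh bound gives $\vz_i^\top\mM\vz_i \ge \lambda_{\min}(\mM)\|\vz_i\|_2^2$ termwise, so under the constraint the objective is bounded below by $\tfrac{1}{2}\lambda_{\min}(\mM)\,N$, with equality precisely when each row $\vz_i$ lies in the eigenspace of the smallest eigenvalue of $\mM$. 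This already yields the claimed characterization.

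To pin down the stationarity condition and to exclude spurious critical points, I would alternatively introduce the Lagrangian $\tfrac{1}{2}\tr(\mZ\mM\mZ^\top) - \tfrac{\nu}{2}(\|\mZ\|_F^2 - N)$ and set its gradient to zero, which gives $\mM\vz_i = \nu\,\vz_i$ for every row. Thus all rows are eigenvectors of $\mM$ sharing one common eigenvalue $\nu$, the objective evaluates to $\tfrac{1}{2}\nu N$, and minimizing over admissible $\nu$ forces $\nu = \lambda_{\min}(\mM)$. The one step I would treat most carefully is the degenerate case where the smallest eigenvalue has multiplicity larger than one: the optimal rows then need not reduce to a single eigenvector but may span the whole bottom eigenspace, which is exactly why the statement speaks of the eigenvectors (plural) of the smallest eigenvalues and why the resulting collapse dimension matches that multiplicity.
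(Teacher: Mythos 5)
Your proof is correct, and its primary argument differs from the paper's in a useful way. The paper fixes the row norms $\|\vz^{(i)}\|_2^2=\tau_i$ (with $\sum_i\tau_i=N$) ``without loss of generality,'' introduces one Lagrange multiplier per row, derives from stationarity that every row must be an eigenvector of $\mM=(\mI-\mC)(\mI-\mC)^\top$, and only then compares objective values across candidate eigenvalues to select the smallest one, finally noting that the optimum $\tfrac{N}{2}\lambda_{\min}(\mM)$ is independent of the $\tau_i$'s. Your main argument instead certifies global optimality directly via the termwise Rayleigh bound $\vz_i^\top\mM\vz_i\ge\lambda_{\min}(\mM)\|\vz_i\|_2^2$, summed over rows under $\sum_i\|\vz_i\|_2^2=N$, with the equality case characterizing minimizers as exactly those $\mZ$ whose rows lie in the bottom eigenspace. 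This buys two things: it is a sufficiency argument rather than a necessary-conditions-plus-comparison argument (first-order stationarity alone never certifies a minimum of a nonconvex problem, so the paper's route inherently needs the final comparison step), and it sidesteps the slightly delicate ``WLOG fix $\tau_i$'' maneuver entirely. Your secondary Lagrangian argument is closer to the paper's, but with a single multiplier for the single constraint $\|\mZ\|_F^2=N$ rather than $d$ per-row multipliers; a small consequence you correctly exploit is that stationary points of the single-constraint problem force all nonzero rows to share one eigenvalue $\nu$, whereas the paper's per-row formulation admits stationary points mixing different eigenvalues that must then be ruled out by value comparison. Your attention to the multiplicity of $\lambda_{\min}(\mM)$ matches the paper's footnote that the collapse dimension equals that multiplicity.
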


\begin{proof}
    We note that:
\begin{align*}
    \|\mZ-\mZ\mC\|_F^2=\Tr\left(\mZ\left(\mI-\mC\right)\left(\mI-\mC\right)^\top\mZ^\top\right)=\sum_{i=1}^d\vz^{(i)}(\mI-\mC)(\mI-\mC)^\top\vz^{(i)\top},
\end{align*}
where $\vz^{(i)}$ is the $i^\text{th}$ row of $\mZ$, thus problem~(\ref{Eq:DSCNet problem}) is reformulated as: 
\begin{align}
\begin{aligned}
        \mathop{\min}_{\{\vz^{(i)}\}_{i=1}^{d},\mC}\quad&\frac{1}{2}\sum_{i=1}^d\vz^{(i)}(\mI-\mC)(\mI-\mC)^\top\vz^{(i)\top}+\beta \cdot r(\mC)\\
    \mathrm{s.t.}\quad&\|\mZ\|_F^2=N.
\end{aligned}
\end{align}
Without loss of generality, the magnitude of each row of $\mZ$ is assumed to be fixed, \ie, $\|\vz^{(i)}\|_2^2=\tau_i,~i=1,\ldots,d$, where $\sum_{i=1}^d \tau_i = N$. Then, the optimization problem becomes:
\begin{align}
\begin{aligned}
        \mathop{\min}_{\{\vz^{(i)}\}_{i=1}^{d},\mC}\quad&\frac{1}{2}\sum_{i=1}^d\vz^{(i)}(\mI-\mC)(\mI-\mC)^\top\vz^{(i)\top}+\beta \cdot r(\mC)\\
    \mathrm{s.t.}\quad&\|\vz^{(i)}\|_2^2=\tau_i,\ \ i=1, \ldots,d.
\end{aligned}
\label{eq:opt_row_Z}
\end{align}
The Lagrangian of problem~(\ref{eq:opt_row_Z}) is:
\begin{align}
    \mathcal{L}(\{\vz^{(i)}\}_{i=1}^{d},\mC,\{\nu_i\}_{i=1}^d)\coloneqq\frac{1}{2}\sum_{i=1}^d\vz^{(i)}(\mI-\mC)(\mI-\mC)^\top\vz^{(i)\top}+\beta \cdot r(\mC)+\frac{1}{2}\sum_{i=1}^d\nu_i(\|\vz^{(i)}\|_2^2-\tau_i),
\end{align}
where $\{\nu_i\}_{i=1}^d$ are the Lagrangian multipliers. 
The necessary conditions for optimal solution are: 
\begin{align}
    \begin{cases}
        \nabla_{\vz^{(i)}}\mathcal{L}=\vz^{(i)}(\mI-\mC)(\mI-\mC)^\top+\nu_i\vz^{(i)}=\vzero,\\
        \|\vz^{(i)}\|_2^2=\tau_i,\ \ i=1, \ldots,d,
    \end{cases}
\end{align}
which implies that the optimal solutions $\vz^{(i)}$ are eigenvectors of $(\mI-\mC)(\mI-\mC)^\top$.

By further considering the objective functions, the optimal solution $\vz^{(i)}$ should be the eigenvectors w.r.t. the \emph{smallest} eigenvalues of $(\mI-\mC)(\mI-\mC)^\top$ for all $i\in\{1,\ldots,d\}$.
The corresponding optimal value is $\frac{1}{2}\lambda_{\min}((\mI-\mC)(\mI-\mC)^\top)\sum_{i=1}^d \tau_i+\beta \cdot r(\mC)=\frac{N}{2}\lambda_{\min}((\mI-\mC)(\mI-\mC)^\top)+\beta \cdot r(\mC)$, which is irrelevant to $\{\tau_i\}_{i=1}^d$.

Therefore, we conclude that the rows of optimal solution $\mZ$ to problem (\ref{Eq:DSCNet problem}) are eigenvectors that associate with the smallest eigenvalues of $(\mI-\mC)(\mI-\mC)^\top$.

\end{proof}

\setcounter{lemma}{0}
\renewcommand{\thelemma}{A\arabic{lemma}}
\setcounter{proposition}{0}

\begin{lemma}
\label{Lemma:commute}
Suppose that matrices $\mA,\mB\in\mathbb{R}^{n\times n}$ are symmetric, then $\mA\mB=\mB\mA$ if and only if $\mA$ and $\mB$ can be diagonalized simultaneously by $\mU\in\mathcal{O}(n)$, where $\mathcal{O}(n)$ is an orthogonal group.
\end{lemma}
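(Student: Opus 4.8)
The plan is to prove both implications of the biconditional, disposing of the easy direction by a direct computation and reserving the spectral theorem for the converse.

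For the ``if'' direction, I would suppose $\mA = \mU \, \Diag(\va)\, \mU^\top$ and $\mB = \mU \, \Diag(\vb)\, \mU^\top$ for a common $\mU \in \mathcal{O}(n)$ and diagonal matrices $\Diag(\va), \Diag(\vb)$. Since diagonal matrices commute and $\mU^\top \mU = \mI$, one immediately gets $\mA\mB = \mU \, \Diag(\va)\Diag(\vb)\, \mU^\top = \mU \, \Diag(\vb)\Diag(\va)\, \mU^\top = \mB\mA$. This is a one-line verification.

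For the ``only if'' direction, I would assume $\mA\mB = \mB\mA$ and build the common eigenbasis in two steps. Because $\mA$ is symmetric, the spectral theorem furnishes an orthogonal decomposition $\mathbb{R}^n = \bigoplus_{\lambda} E_\lambda$ into the mutually orthogonal eigenspaces of $\mA$. The first key step is to show that $\mB$ leaves each $E_\lambda$ invariant: if $\mA\vv = \lambda\vv$, then $\mA(\mB\vv) = \mB(\mA\vv) = \lambda(\mB\vv)$, so $\mB\vv \in E_\lambda$. The second step is to observe that the restriction $\mB|_{E_\lambda}$ is symmetric on the invariant subspace $E_\lambda$, so by applying the spectral theorem again within $E_\lambda$ we obtain an orthonormal basis of $E_\lambda$ consisting of eigenvectors of $\mB$; every such vector lies in $E_\lambda$ and is therefore automatically an eigenvector of $\mA$ with eigenvalue $\lambda$. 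Finally, I would collect these bases across all distinct eigenvalues $\lambda$ of $\mA$; since the $E_\lambda$ are mutually orthogonal and together span $\mathbb{R}^n$, their union is an orthonormal basis of common eigenvectors, and arranging it as the columns of $\mU$ gives the desired $\mU \in \mathcal{O}(n)$ diagonalizing both $\mA$ and $\mB$.

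The main obstacle is the degenerate case of repeated eigenvalues of $\mA$. When an eigenvalue is simple the corresponding eigenvector is pinned down up to sign and one merely checks it is also an eigenvector of $\mB$; but when $\dim E_\lambda > 1$ the eigenvectors of $\mA$ are far from unique, and one must genuinely re-diagonalize $\mB$ inside the invariant subspace $E_\lambda$. The crux is therefore verifying that $\mB|_{E_\lambda}$ is a bona fide symmetric operator so that the inner invocation of the spectral theorem is legitimate; this follows from the symmetry of $\mB$ together with the fact that $E_\lambda$ is a $\mB$-invariant subspace, which is exactly what the commutation relation $\mA\mB = \mB\mA$ supplies.
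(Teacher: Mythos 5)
Your proof is correct and complete. Note, however, that the paper states Lemma~\ref{Lemma:commute} without any proof at all---it is invoked as a classical linear-algebra fact (and used implicitly in the proof of the eigenspace-alignment theorem, where $\boldsymbol{\Delta}_\star\mG_{\star,\mC}-\mG_{\star,\mC}\boldsymbol{\Delta}_\star=\vzero$ is taken to imply a shared eigenbasis)---so there is no in-paper argument to compare against. Your two-step argument for the nontrivial direction is the standard one: the commutation relation makes each eigenspace $E_\lambda$ of $\mA$ invariant under $\mB$, and the restriction $\mB|_{E_\lambda}$ inherits symmetry from $\mB$ (for $\vu,\vv\in E_\lambda$ one has $\langle \mB\vu,\vv\rangle=\langle \vu,\mB\vv\rangle$ with both images staying in $E_\lambda$), so the spectral theorem applies within each block; collecting the resulting orthonormal bases over the mutually orthogonal eigenspaces yields the common diagonalizer $\mU\in\mathcal{O}(n)$. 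You also correctly identify the only delicate point---repeated eigenvalues of $\mA$---and resolve it by genuinely re-diagonalizing $\mB$ inside each $E_\lambda$ rather than assuming eigenvectors of $\mA$ are automatically eigenvectors of $\mB$. No gaps.
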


{\mcb
Now we present our theorem about the optimal solution of problem PRO-DSC in (\ref{Eq:Complete Problem}) with its proof. }
\begin{theorem}
\label{theorem:alignment}
Denote the optimal solution of PRO-DSC in (\ref{Eq:Complete Problem}) as $(\mZ_\star,\mC_\star)$, then $\mG_\star$ and $\mM_\star$ share eigenspaces, where $\mG_\star\coloneqq {\mZ_\star}^\top\mZ_\star,\mM_\star\coloneqq(\mI-\mC_\star)(\mI-\mC_\star)^\top$, \ie, 
$\mG_\star$ and $\mM_\star$ can be diagonalized simultaneously by $\mU\in\mathcal{O}(N)$ where $\mathcal{O}(N)$ is an orthogonal matrix group.
\end{theorem}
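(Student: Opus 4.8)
The plan is to reduce the claim, via Lemma~\ref{Lemma:commute}, to showing that $\mG_\star$ and $\mM_\star$ commute: both are symmetric (indeed positive semidefinite), so simultaneous orthogonal diagonalizability is equivalent to $\mG_\star\mM_\star = \mM_\star\mG_\star$. Hence the whole task becomes extracting this commutation relation from the first-order optimality conditions of problem~(\ref{Eq:Complete Problem}).

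First I would fix $\mC=\mC_\star$ and examine stationarity in $\mZ$. Rewriting the self-expressive term as $\|\mZ-\mZ\mC_\star\|_F^2 = \Tr(\mZ\mM_\star\mZ^\top)$, I attach a Lagrange multiplier $\nu$ to the constraint $\|\mZ\|_F^2=N$ and differentiate. Using the identity $\nabla_\mZ\log\det(\mI+\alpha\mZ^\top\mZ) = 2\alpha\mZ(\mI+\alpha\mZ^\top\mZ)^{-1}$ together with $\nabla_\mZ\Tr(\mZ\mM_\star\mZ^\top)=2\mZ\mM_\star$ and $\nabla_\mZ\|\mZ\|_F^2 = 2\mZ$, the stationarity condition at the optimum becomes
\begin{equation}
\alpha\mZ_\star(\mI+\alpha\mG_\star)^{-1} = \mZ_\star(\gamma\mM_\star+\nu_\star\mI),
\end{equation}
where $\mG_\star = \mZ_\star^\top\mZ_\star$ and $\nu_\star$ is the optimal multiplier.

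Next I would left-multiply both sides by $\mZ_\star^\top$ to pass from a $d\times N$ relation to a symmetric $N\times N$ relation involving only $\mG_\star$ and $\mM_\star$:
\begin{equation}
\alpha\mG_\star(\mI+\alpha\mG_\star)^{-1} = \gamma\mG_\star\mM_\star+\nu_\star\mG_\star.
\end{equation}
The key observation is that the left-hand side is a function of $\mG_\star$ alone, hence symmetric, since $\mG_\star$ commutes with $(\mI+\alpha\mG_\star)^{-1}$ and both factors are symmetric. Consequently the right-hand side must also be symmetric; subtracting its transpose and using $\mG_\star^\top=\mG_\star$, $\mM_\star^\top=\mM_\star$ cancels the term $\nu_\star\mG_\star$ and leaves $\gamma(\mG_\star\mM_\star-\mM_\star\mG_\star)=\vzero$. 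Since $\gamma>0$, this gives $\mG_\star\mM_\star=\mM_\star\mG_\star$, and Lemma~\ref{Lemma:commute} then yields a common orthogonal eigenbasis $\mU\in\mathcal{O}(N)$, proving the theorem.

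I expect the difficulty to be bookkeeping rather than conceptual: carefully computing the matrix gradient of the $\log\det$ term and verifying that left-multiplication by $\mZ_\star^\top$ really does collapse the stationarity condition into a relation whose left side is manifestly symmetric. Two points are worth checking explicitly. First, the argument does not require $\mZ_\star$ to have full column rank, since the left-multiplication step is valid regardless of rank. Second, the multiplier term $\nu_\star\mG_\star$ is itself symmetric, so it never interferes with the symmetry comparison; the commutation relation is driven purely by the interaction between the $\log\det$ regularizer and the self-expressive term.
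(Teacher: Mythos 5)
Your proof is correct, and it reaches the key commutation relation by a genuinely different route than the paper. The paper fixes $\mC=\mC_\star$, lifts the $\mZ$-subproblem to a convex semidefinite program in $\mG=\mZ^\top\mZ$ (constraints $\mG\succeq\vzero$, $\tr(\mG)=N$), and works with the full KKT system: the PSD multiplier $\boldsymbol{\Delta}_\star$ is shown to commute with $\mG_\star$ via complementary slackness $\boldsymbol{\Delta}_\star\mG_\star=\vzero$, and the stationarity equation then exhibits $\gamma\mM_\star+\nu_\star\mI$ as diagonalized by the common eigenbasis of $\boldsymbol{\Delta}_\star$ and $\mG_\star$. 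You instead stay in the factor variable $\mZ$: Lagrangian stationarity on the sphere $\|\mZ\|_F^2=N$, left-multiplication by $\mZ_\star^\top$, and the observation that $\mG_\star(\mI+\alpha\mG_\star)^{-1}$ is symmetric force $\gamma\mG_\star\mM_\star$ to be symmetric, i.e. $\mG_\star\mM_\star=\mM_\star\mG_\star$, after which Lemma~\ref{Lemma:commute} finishes. Your route is more elementary (no SDP duality, no matrix-valued multiplier) and in one respect more robust: when $d<N$, the paper's lift to an SDP over all PSD $\mG$ with $\tr(\mG)=N$ silently discards the constraint $\mathrm{rank}(\mG)\le d$, so that SDP is a relaxation and its KKT system need not be satisfied by the $\mG$ arising from the optimal $\mZ$; your argument uses only first-order necessary conditions at $\mZ_\star$ itself (which do hold, since the constraint gradient $2\mZ_\star\neq\vzero$ on the sphere guarantees a multiplier exists), hence it applies verbatim for any $d$ and, in fact, at every stationary point rather than only the global optimum. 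What the paper's heavier machinery buys is reuse downstream: the same convex-SDP KKT system is the engine behind the explicit water-filling eigenvalue formula of Theorem~\ref{Theorem:landscape}, which your symmetry argument alone does not provide.
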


\begin{proof}

We first consider the subproblem of PRO-DSC problem in (\ref{Eq:Complete Problem}) with respect to $\mZ$ and prove that for all $\mC\in\mathbb{R}^{N\times N}$, the corresponding optimal $\mZ_{\star,{\mC}}$ satisfies $\mG_{\star,{\mC}}\mM=\mM\mG_{\star,\mC}$, where $\mG_{\star,{\mC}}={\mZ_{\star,{\mC}}}^\top\mZ_{\star,{\mC}}, \mM\coloneqq(\mI-\mC)(\mI-\mC)^\top$, implying that $\mG_{\star, {\mC}}$ and $\mM$ share eigenspace. Then, we will demonstrate that $\mG_\star$ and $\mM_\star$ share eigenspace.

The subproblem with respect to $\mZ_{\mC}$ is reformulated into the following semi-definite program:
\begin{align}
\label{SDP}
\begin{aligned}
    \mathop{\min}_{\mG_{\mC}}\quad&-\frac{1}{2}\log\det\left(\mI+\alpha\mG_{\mC}\right)+\frac{\gamma}{2}\tr(\mG_{\mC}\mM)\\
    \mathrm{s.t.}\quad& \mG_{\mC}\succeq\vzero,\; \tr(\mG_{\mC})=N,
\end{aligned}
\end{align}
which has the Lagrangian as:
\begin{align}
    \mathcal{L}(\mG_{\mC},\boldsymbol{\Delta},\nu)\coloneqq-\frac{1}{2}\log\det\left(\mI+\alpha\mG_{\mC}\right)+\frac{\gamma}{2}\tr(\mG_{\mC}\mM)-\tr(\boldsymbol{\Delta}\mG_{\mC})+\frac{\nu}{2}(\tr(\mG_{\mC})-N),
\end{align}
{\mcb where scalar $\nu$ and $N \times N$ symmetric matrix $\boldsymbol{\Delta}$ are Lagrange multipliers.}

The KKT conditions is: 
\begin{numcases}{}
    -\frac{\alpha}{2}(\mI+\alpha\mG_{\star,{\mC}})^{-1}+\frac{\gamma}{2}\mM-\boldsymbol{\Delta}_\star+\frac{\nu_\star}{2}\mI=\vzero,\label{eq:kkt-stationary1}\\
    \mG_{\star,{\mC}}\succeq \vzero, \label{eq:kkt-ineq1}\\
    \tr(\mG_{\star,{\mC}})=N, \label{eq:kkt-eq1}\\
    \boldsymbol{\Delta}_\star\succeq \vzero, \label{eq:kkt-dual1}\\
    \boldsymbol{\Delta}_\star\mG_{\star,{\mC}}=\vzero \label{eq:kkt-comple1},
\end{numcases}
which are the sufficient and necessary condition for the global optimality of the solution $\mG_{\star,{\mC}}$.

From Eqs.~(\ref{eq:kkt-ineq1}),(\ref{eq:kkt-dual1}) and (\ref{eq:kkt-comple1}), we have that $\boldsymbol{\Delta}_\star \mG_{\star,{\mC}}-\mG_{\star,{\mC}}\boldsymbol{\Delta}_\star = \boldsymbol{\Delta}_\star \mG_{\star,{\mC}}-(\boldsymbol{\Delta}_\star \mG_{\star,{\mC}})^\top=\vzero$, implying that $\boldsymbol{\Delta}_\star$ and $\mG_{\star,{\mC}}$ share eigenspace.
By eigenvalue decomposition $\boldsymbol{\Delta}_\star=\mQ\boldsymbol{\Lambda}_{\boldsymbol{\Delta}_\star}\mQ^\top,\mG_{\star,{\mC}}=\mQ\boldsymbol{\Lambda}_{\mG_{\star,{\mC}}}\mQ^\top$, where $\boldsymbol{\Lambda}_{\boldsymbol{\Delta}_\star},\boldsymbol{\Lambda}_{\mG_{\star,{\mC}}}$ are diagonal matrices, we have:
\begin{align}
    2\cdot\mQ\boldsymbol{\Lambda}_{\boldsymbol{\Delta}_\star}\mQ^\top&=-\alpha\mQ(\mI+\alpha\boldsymbol{\Lambda}_{\mG_{\star,{\mC}}})^{-1}\mQ^\top+\gamma\mM+\nu_\star\mI\\
    \Rightarrow\quad\gamma\mM+\nu_\star\mI&=\mQ\left(2\boldsymbol{\Lambda}_{\boldsymbol{\Delta}_\star}+\alpha\left(\mI+\alpha\boldsymbol{\Lambda}_{\mG_{\star,{\mC}}}\right)^{-1}\right)\mQ^\top,
\end{align}
where the first equality is from Eq.~(\ref{eq:kkt-stationary1}).
Since that $2\boldsymbol{\Lambda}_{\boldsymbol{\Delta}_\star}+\alpha(\mI+\alpha\boldsymbol{\Lambda}_{\mG_{\star,{\mC}}})^{-1}$ is a diagonal matrix, $\gamma\mM+\nu_\star\mI$ can be diagonalized by $\mQ$. In other words, for $\forall\mM\in\mathbb{S}_N^+$ in problem (\ref{SDP}), $\mM$ will share eigenspace with the corresponding optimal solution $\mG_{\star,{\mC}}$.
Next, denote $(\mZ_\star,\mC_\star)$ as the optimal solution of problem (\ref{Eq:Complete Problem}), $\mathcal{C}\coloneqq\{\mZ\mid\|\mZ\|_F^2=N\}$ as the feasible set and $f(\cdot,\cdot)$ as the objective function.
Since that $\mZ_\star=\mathop{\arg\min}_{\mZ\in\mathcal{C}}f(\mZ,\mC_\star)$, otherwise contradicts with the optimality of $(\mZ_\star,\mC_\star)$, we conclude that $\mG_\star$ and $\mM_\star$ share eigenspace, where $\mM_\star\coloneqq(\mI-\mC_\star)(\mI-\mC_\star)^\top$.

\end{proof}

\begin{theorem}
Suppose that $\mG$ and $\mM$ are aligned in the same eigenspaces and $\gamma< \frac{1}{\lambda_{\max}(\mM)}\frac{\alpha^2}{\alpha+\min\left\{\frac{d}{N},1\right\}}$, then we have that: a) $\mathrm{rank}(\mZ_\star)=\rank$, and b) the singular values $\sigma_{\mZ_\star}^{(i)}=\sqrt{\frac{1}{\gamma\lambda_{\mM}^{(i)}+\nu_\star}-\frac{1}{\alpha}}$ for all $i=1,\ldots,\rank$, where $\mZ_\star$ and $\nu_\star$ are the primal optimal solution and dual optimal solution, respectively. 
\end{theorem}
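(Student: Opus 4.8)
The plan is to reduce the matrix problem (\ref{Eq:Primal Problem wrt Z}) to a scalar problem in the shared eigenvalues, read off part (b) directly from the stationarity condition, and then establish part (a) by bounding the dual variable $\nu_\star$ uniformly. First I would invoke the alignment hypothesis: since $\mG$ and $\mM$ are simultaneously diagonalizable, write $\mG=\mU\Diag(\lambda_\mG^{(1)},\dots,\lambda_\mG^{(N)})\mU^\top$ and $\mM=\mU\Diag(\lambda_\mM^{(1)},\dots,\lambda_\mM^{(N)})\mU^\top$ with a common $\mU\in\mathcal{O}(N)$. Then $\log\det(\mI+\alpha\mG)=\sum_i\log(1+\alpha\lambda_\mG^{(i)})$, $\|\mZ-\mZ\mC\|_F^2=\tr(\mG\mM)=\sum_i\lambda_\mG^{(i)}\lambda_\mM^{(i)}$, and $\|\mZ\|_F^2=\tr(\mG)=\sum_i\lambda_\mG^{(i)}$. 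Because $\mathrm{rank}(\mG)\le\rank$ and, by a rearrangement argument, the coupling $\sum_i\lambda_\mG^{(i)}\lambda_\mM^{(i)}$ is minimized by placing all the mass on the directions carrying the smallest $\lambda_\mM^{(i)}$, the program restricts to the $\rank$ eigenvectors with the smallest eigenvalues of $\mM$, giving the convex problem of minimizing $\sum_{i=1}^{\rank}\bigl(-\tfrac12\log(1+\alpha\lambda_\mG^{(i)})+\tfrac{\gamma}{2}\lambda_\mG^{(i)}\lambda_\mM^{(i)}\bigr)$ over $\lambda_\mG^{(i)}\ge 0$ subject to $\sum_{i=1}^{\rank}\lambda_\mG^{(i)}=N$.

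Next, forming the Lagrangian with a multiplier $\nu$ for the trace constraint and $\delta_i\ge 0$ for each $\lambda_\mG^{(i)}\ge 0$, stationarity reads $\frac{\alpha}{1+\alpha\lambda_\mG^{(i)}}=\gamma\lambda_\mM^{(i)}+\nu_\star-2\delta_i$. On the active set ($\lambda_\mG^{(i)}>0$, hence $\delta_i=0$) this rearranges to $\lambda_\mG^{(i)}=\frac{1}{\gamma\lambda_\mM^{(i)}+\nu_\star}-\frac{1}{\alpha}$; since the nonzero eigenvalues of $\mG=\mZ^\top\mZ$ are the squared singular values of $\mZ$, i.e. $\sigma_{\mZ_\star}^{(i)}=\sqrt{\lambda_\mG^{(i)}}$, claim (b) follows at once. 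As each summand is strictly convex and the constraints are linear, the KKT point is the unique global minimizer, so it only remains to count how many coordinates are active.

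The hard part is (a): proving that, under the stated bound on $\gamma$, all $\rank$ coordinates are active, i.e. $\mathrm{rank}(\mZ_\star)=\rank$. Write $m\coloneqq\min\{\tfrac{d}{N},1\}=\tfrac{\rank}{N}$. Summing the stationarity identity over the active coordinates gives $\sum_{i\,\mathrm{active}}\frac{1}{\gamma\lambda_\mM^{(i)}+\nu_\star}=N+\frac{k}{\alpha}$, where $k$ is the number of active coordinates. When $\nu_\star>0$, bounding each term by $1/\nu_\star$ and using that $\frac{k}{N+k/\alpha}$ is increasing in $k\le\rank$ yields $\nu_\star\le\frac{\rank}{N+\rank/\alpha}=\frac{m\alpha}{\alpha+m}$, a bound that holds independently of $k$. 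Combined with the hypothesis $\gamma\,\lambda_{\max}(\mM)<\frac{\alpha^2}{\alpha+m}$, this forces $\gamma\lambda_\mM^{(i)}+\nu_\star<\frac{\alpha^2}{\alpha+m}+\frac{m\alpha}{\alpha+m}=\alpha$ for every $i$ (using $\lambda_\mM^{(i)}\le\lambda_{\max}(\mM)$), hence $\lambda_\mG^{(i)}>0$ for all $i=1,\dots,\rank$ and no coordinate can leave the support; the case $\nu_\star\le 0$ is immediate since then $\gamma\lambda_\mM^{(i)}+\nu_\star\le\gamma\lambda_{\max}(\mM)<\alpha$.

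The main obstacle is precisely this dual-variable estimate: one must control $\nu_\star$ without knowing the active set a priori, which is what the monotonicity-in-$k$ argument accomplishes, and one must verify that the threshold on $\gamma$ is exactly what keeps the ``water level'' $\gamma\lambda_{\max}(\mM)+\nu_\star$ strictly below $\alpha$, so that the optimal $\mZ_\star$ fills all $\rank$ available dimensions rather than collapsing. A secondary point to handle carefully is the reduction in the first paragraph, namely justifying via the rearrangement inequality that the support of the optimal $\mG$ lands on the $\rank$ smallest eigendirections of $\mM$, which is what makes the largest active $\lambda_\mM^{(i)}$ bounded by $\lambda_{\max}(\mM)$ and legitimizes the final comparison.
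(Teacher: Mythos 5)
Your proposal is correct and follows essentially the same route as the paper's proof: reduce to the convex eigenvalue (reverse water-filling) problem via simultaneous diagonalization, read off the singular values from KKT stationarity, and then convert the condition $\gamma\lambda_{\max}(\mM)<\alpha-\nu_\star$ into the stated hypothesis by an unconditional upper bound on the dual variable. The only difference is minor and technical: the paper bounds $\nu_\star$ from the normalization equation using $\lambda_{\min}(\mM)$, obtaining $\nu_\star \le \frac{m\alpha}{\alpha+m}-\gamma\lambda_{\min}(\mM)$ with $m=\min\{d/N,1\}$, whereas you bound it via the active-set count $k$ and monotonicity in $k$ (plus the $\nu_\star\le 0$ case); both give $\alpha-\nu_\star\ge\frac{\alpha^2}{\alpha+m}$, and your explicit rearrangement justification for pairing the nonzero eigenvalues of $\mG$ with the smallest eigenvalues of $\mM$ is a detail the paper leaves implicit.
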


\begin{proof}
Since $\|\mZ-\mZ\mC\|_F^2=\Tr\left(\mZ^\top\mZ\left(\mI-\mC\right)\left(\mI-\mC\right)^\top\right)$ and $\|\mZ\|_F^2=\Tr(\mZ^\top\mZ)$, problem~(\ref{Eq:Primal Problem wrt Z}) is equivalent to: 
\begin{align}
\label{eq:problem w.r.t. G}
\begin{aligned}
      \mathop{\min}_{\mG}\quad&-\frac{1}{2}\log\det\left(\mI+\alpha\mG\right)+\frac{\gamma}{2}\Tr(\mG\mM)\\
    \mathrm{s.t.}\quad&\Tr(\mG)=N, \mG\succeq\vzero,
\end{aligned}
\end{align}
where $\mG\coloneqq\mZ^\top\mZ$ and $\mM\coloneqq(\mI-\mC)(\mI-\mC)^\top$.

Since that $\mG$ and $\mM$ have eigenspaces aligned, we can have $\mG$ and $\mM$ diagonalized simultaneously by an orthogonal matrix $\mU$, \ie, $\mG=\mU\bLambda_\mG\mU^\top,\mM=\mU\bLambda_{\mM}\mU^\top$.
Therefore, problem (\ref{eq:problem w.r.t. G}) can be transformed into the eigenvalue optimization problem as follows:
\begin{align}
\label{eq:problem w.r.t. lambda_G}
\begin{aligned}
      \mathop{\min}_{\{\lambda^{(i)}_\mG\}_{i=1}^{\rank}}\quad&-\frac{1}{2}\sum_{i=1}^{\rank}\log(1+\alpha\lambda^{(i)}_\mG)+\frac{\gamma}{2}\lambda^{(i)}_\mM\lambda^{(i)}_\mG\\
    \mathrm{s.t.}\quad&\sum_{i=1}^{\rank}\lambda^{(i)}_\mG=N,\quad \lambda^{(i)}_\mG\geq 0,\quad\text{for all }i=1,\ldots,\rank, 
\end{aligned}
\end{align}
where $\{\lambda_{\mM}^{(1)},\cdots, \lambda_{\mM}^{(\min\{d,N\})}\}$ are the diagonal entries of $\bLambda_\mM$ and $\{\lambda_{\mG}^{(1)},\cdots, \lambda_{\mG}^{(\min\{d,N\})} \}$ are the diagonal entries of $\bLambda_\mG$.
Surprisingly, problem (\ref{eq:problem w.r.t. lambda_G}) is a convex optimization problem. Thus, the KKT condition is sufficient and necessary to guarantee for the global minimizer.

The Lagrangian of problem (\ref{eq:problem w.r.t. lambda_G}) is:
\begin{align}
    &\mathcal{L}\Big(\{\lambda^{(i)}_\mG\}_{i=1}^{\rank},\{\mu_i\}_{i=1}^{\rank},\nu\Big)\coloneqq \notag\\
    &-\frac{1}{2}\sum_{i=1}^{\rank}\log(1+\alpha\lambda^{(i)}_\mG)+\frac{\gamma}{2}\lambda^{(i)}_\mM\lambda^{(i)}_\mG-\mu_i\lambda^{(i)}_\mG+\frac{\nu}{2}\Big(\sum_{i=1}^{\rank}\lambda^{(i)}_\mG-N\Big),
\end{align}
where $\mu_i\geq 0,i=1,\ldots,\rank$ and $\nu$ are the Lagrangian multipliers.
The KKT conditions are as follows:
\begin{numcases}{}
    \nabla_{\lambda_{\mG_\star}^{(i)}}\mathcal{L}=0, & $\forall i=1,\ldots,\rank$,\label{eq:kkt-stationary}\\
    \lambda^{(i)}_{\mG_\star}\geq 0, & $\forall i=1,\ldots,\rank$,\label{eq:kkt-ineq}\\
    \sum_{i=1}^{\rank}\lambda^{(i)}_{\mG_\star}=N, \label{eq:kkt-eq}\\
    \mu_{i\star}\geq 0, & $\forall i=1,\ldots,\rank$,\label{eq:kkt-dual}\\
    \mu_{i\star}\lambda^{(i)}_{\mG_\star}=0, & $\forall i=1,\ldots,\rank$\label{eq:kkt-comple}.
\end{numcases}
Then, the stationary condition in (\ref{eq:kkt-stationary}) is equivalent to: 
\begin{equation}
\label{eq:mu_i}
    \mu_{i\star}=\frac{1}{2}\Big(\nu_\star+\gamma\lambda_\mM^{(i)}-\frac{\alpha}{1+\alpha\lambda_{\mG_\star}^{(i)}}\Big).
\end{equation}
By using Eqs. (\ref{eq:kkt-ineq}) and (\ref{eq:kkt-dual})-(\ref{eq:mu_i}), we come up with the following two cases:
\begin{numcases}{}
    \mu_{i\star}>0\Rightarrow\lambda_{\mG_\star}^{(i)}=0,\frac{1}{\nu_\star+\gamma\lambda_\mM^{(i)}}-\frac{1}{\alpha}{<}0,\\
    \mu_{i\star}=0\Rightarrow\lambda_{\mG_\star}^{(i)}> 0,\lambda_{\mG_\star}^{(i)}=\frac{1}{\nu_\star+\gamma\lambda_\mM^{(i)}}-\frac{1}{\alpha}{>} 0.
\end{numcases}
From the above two cases, we conclude that:
\begin{equation}
    \lambda_{\mG_\star}^{(i)}=\mathop{\max}\Big\{0,\frac{1}{\nu_\star+\gamma\lambda_\mM^{(i)}}-\frac{1}{\alpha}\Big\},
\end{equation}
where $\nu_\star$ satisfies:
\begin{equation}
\label{Eq:nu compute}
    \sum_{i=1}^{\rank}\mathop{\max}\Big\{0,\frac{1}{\nu_\star+\gamma\lambda_\mM^{(i)}}-\frac{1}{\alpha}\Big\}=N.
\end{equation}
Given that $\gamma<(\alpha-\nu_\star)/\lambda_{\max}(\mM)$, we have $\frac{1}{\nu_\star+\gamma\lambda_\mM^{(i)}}-\frac{1}{\alpha}>0$ for all $i=1,\ldots,\rank$.
Therefore, for the optimal solution $\mZ_\star$ of problem (\ref{Eq:Primal Problem wrt Z}), we conclude that: $\mathrm{rank}(\mZ_\star)=\rank$ and the singular values $\sigma_{\mZ_\star}^{(i)}=\sqrt{\frac{1}{\gamma\lambda_{\mM}^{(i)}+\nu_\star}-\frac{1}{\alpha}}$, for all $i=1,\ldots,\rank$.

{\mcb 
Note that, the results we established just above rely on a condition 
$\gamma \lambda_{\max}(\mM)<\alpha-\nu_\star$ where the $\nu_\star$ is the optimal Lagrangian multiplier, which is set as a fixed value related to $\alpha,\gamma$ and $\lambda_{\max}(\mM)$. 
Next, we will develop an upper bound for $\nu_\star$ and justify the fact that we ensure $\nu_\star$ to satisfy the condition 
$\gamma \lambda_{\max}(\mM) < \alpha - \nu_\star$ 
by only adjusting the hyper-parameters $\alpha$ and $\gamma$. 
}

In Eq. (\ref{Eq:nu compute}), we can easily find an upper bound of $\nu_\star$ as:
\begin{align}
    N & =\sum_{i=1}^{\rank}\mathop{\max}\Big\{0,\frac{1}{\nu_\star+\gamma\lambda_\mM^{(i)}}-\frac{1}{\alpha}\Big\} \le \frac{\min\left\{d,N\right\}}{\nu_\star+\gamma\lambda_{\min}(\mM)}-\frac{\min\left\{d,N\right\}}{\alpha},\\
    \Rightarrow \nu_\star & \le \frac{1}{\frac{N}{\min\left\{d,N\right\}}+\frac{1}{\alpha}}-\gamma\lambda_{\min}(\mM),
\end{align}
Therefore, we can find a tighten bound between $\alpha-\nu_\star$ and $\gamma\lambda_{\max}(\mM)$ as:
\begin{align}
    \gamma\lambda_{\max}(\mM)<\frac{\alpha^2 }{\alpha+\min\left\{\frac{d}{N},1\right\}}<\frac{\alpha^2 }{\alpha+\min\left\{\frac{d}{N},1\right\}}+\gamma\lambda_{\min}(\mM)\le \alpha-\nu_\star,
\end{align}
which means that the condition of $\gamma\lambda_{\max}(\mM)<\alpha-\nu_\star$ can be reformed as:
\begin{align}
    \gamma< \frac{1}{\lambda_{\max}(\mM)}\frac{\alpha^2 }{\alpha+\min\left\{\frac{d}{N},1\right\}}
\end{align}

\end{proof}

\myparagraph{Remark 3} 
We notice that (\ref{eq:problem w.r.t. lambda_G}) is a reverse water-filling problem, where the water level is controlled by $1/\alpha$, as shown in Figure~\ref{fig:water-filling}. 
When $\mG$ and $\mM$ have eigenspaces aligned and $\gamma<(\alpha-\nu_\star)/\lambda_{\max}(\mM)$, we have $\mathrm{rank}(\mZ_\star)=\rank$ and $\lambda_{\mG_\star}^{(i)}\neq0$ for all $i\leq \min\{d,N\}$.
When $\gamma\geq(\alpha-\nu_\star)/\lambda_{\max}(\mM)$, non-zero $\lambda_\mG^{(i)}$ first disappears for the larger $\lambda_{\mM}^{(i)}$.

\begin{figure}[ht]
\vskip 0.0in
    \centering
    \includegraphics[trim=0pt 10pt 30pt 20pt, clip,width=0.4\textwidth]{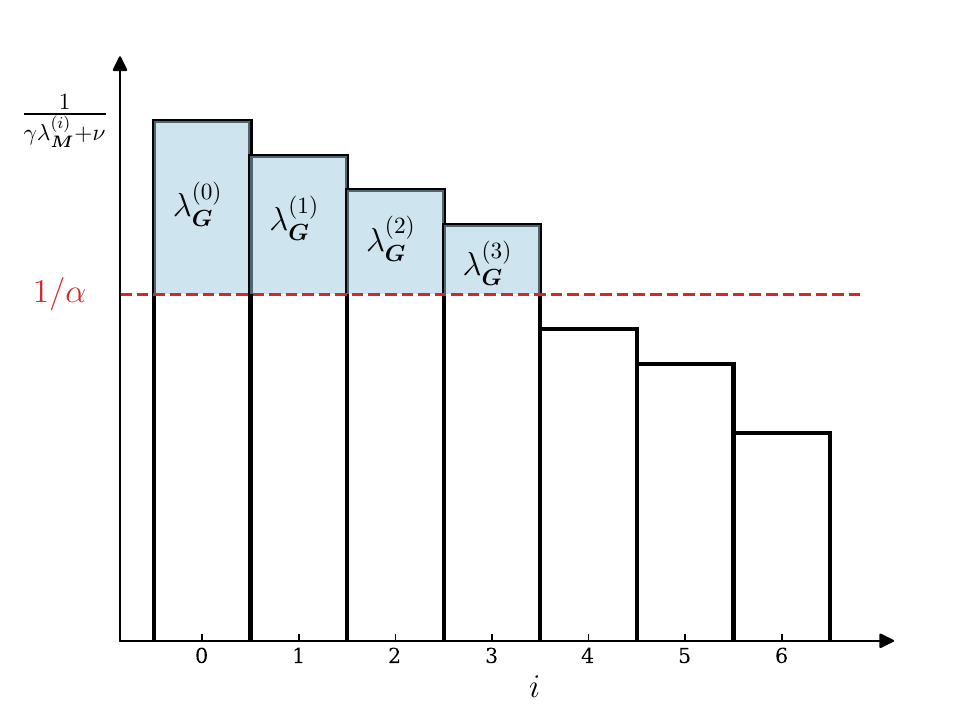}
\caption{\textbf{Illustration of the optimal solution for problem (\ref{eq:problem w.r.t. lambda_G})}. The primal problem can be transformed into a classical reverse water-filling problem.}
\label{fig:water-filling}
\end{figure}

\begin{theorem}
\label{}

Suppose that the sufficient conditions to prevent catastrophic feature collapse are satisfied. 
Without loss of generality, we further assume that the columns in matrix $\mZ$ are arranged into $k$ blocks according to a certain $N \times N$ permutation matrix $\boldsymbol \Gamma$, \ie, $\mZ =[\mZ_1,\mZ_2,\cdots,\mZ_k]$. 
Then the condition for that PRO-DSC promotes the optimal solution $(\mZ_\star,\mC_\star)$ to be desired structure (\ie, $\mZ_\star^\top \mZ_\star$ and $\mC_\star$ are block-diagonal), is that $\langle (\mI- \mC)(\mI- \mC )^\top, \mG-\mG^\ast \rangle \rightarrow 0$, 
where 
\begin{equation*}
        \mG^{\ast}\coloneqq 
                \Diag\big(\mG_{11},\mG_{22}, \cdots, \mG_{kk} \big) =
                \begin{bmatrix}
            \mG_{11} & &\\
              & \ddots & \\
             & &\mG_{kk}
        \end{bmatrix}, 
    \end{equation*}
and $\mG_{jj}$ is the block Gram matrix corresponding to $\mZ_j$.

\end{theorem}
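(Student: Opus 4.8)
The plan is to isolate how the off-block-diagonal part of $\mG$ enters the objective, and to show that the CSC hypothesis removes its only ``pull'' from the self-expressive term, leaving the $\log\det$ regularizer to force block-diagonality. First I would rewrite the fidelity in Gram form exactly as in the proof of Theorem~2, $\|\mZ-\mZ\mC\|_F^2=\Tr(\mG\mM)=\langle\mG,\mM\rangle$. Splitting $\mG=\mG^\ast+(\mG-\mG^\ast)$ into its block-diagonal part $\mG^\ast$ and its off-block part $\mG-\mG^\ast$, linearity of the trace inner product gives $\langle\mG,\mM\rangle=\langle\mG^\ast,\mM\rangle+\langle\mG-\mG^\ast,\mM\rangle$. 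Since a block-diagonal matrix is orthogonal, under this inner product, to an off-block-diagonal one, the first summand equals $\langle\mG^\ast,\mM^\ast\rangle$ and hence depends on $\mG$ only through its diagonal blocks $\mG_{jj}$; the second summand is precisely the CSC quantity, which vanishes by hypothesis. Thus under CSC the self-expressive term is insensitive to the off-block part of $\mG$.

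The key step is then to analyze the regularizer for a fixed choice of diagonal blocks. The block-diagonal part of the positive-definite matrix $\mI+\alpha\mG$ is exactly $\mI+\alpha\mG^\ast$, so by Fischer's inequality for positive-definite block matrices, $\det(\mI+\alpha\mG)\le\prod_{j}\det(\mI+\alpha\mG_{jj})=\det(\mI+\alpha\mG^\ast)$, with equality if and only if the off-block part $\mG-\mG^\ast$ vanishes. Hence $-\tfrac12\log\det(\mI+\alpha\mG)$ is \emph{strictly} minimized, among all feasible $\mG$ sharing the same diagonal blocks, at $\mG=\mG^\ast$. Combining this with the previous paragraph, under CSC the fidelity is flat in the off-block direction while the $\log\det$ term strictly prefers it to be zero, so the minimizer satisfies $\mG_\star=\mG_\star^\ast$, i.e.\ $\mZ_i^\top\mZ_j=\vzero$ for $i\neq j$. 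This is exactly the statement that the learned representations lie on a union of \emph{orthogonal} subspaces. I would invoke the noncollapse guarantee of Theorem~2 here to ensure the diagonal blocks are nondegenerate, so the Fischer equality case is the unique optimum.

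Finally, to obtain the block-diagonal structure of $\mC_\star$, I would substitute the orthogonality $\mG_\star=\mG_\star^\ast$ back into the fidelity. Writing $\mE\coloneqq\mI-\mC$ in block form, a direct computation gives $\langle\mG^\ast,\mM\rangle=\sum_{j,l}\Tr(\mG_{jj}\mE_{jl}\mE_{jl}^\top)=\sum_{j,l}\|\mZ_j\mE_{jl}\|_F^2$, where each off-block summand ($j\neq l$) is nonnegative and $\mE_{jl}=-\mC_{jl}$. Minimizing therefore drives $\mZ_j\mC_{jl}\to\vzero$ for $j\neq l$, and under the nondegeneracy furnished by Theorem~2 this forces $\mC_{jl}=\vzero$, so $\mC_\star$ is block-diagonal; the block-diagonal regularizer $r(\mC)=\|\mA\|_{\boxk}$ reinforces the same conclusion, and orthogonality makes the subspaces independent so that the classical subspace-preserving property applies. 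The converse is immediate: if $\mG_\star=\mG_\star^\ast$ then $\langle\mM,\mG_\star-\mG_\star^\ast\rangle=0$, so CSC holds; the theorem is thus a genuine characterization.

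I expect the main obstacle to lie at two points of rigor. The first is feasibility of the competitor $\mG_\star^\ast$: realizing it as $\mZ^\top\mZ$ for an admissible $\mZ\in\mathbb{R}^{d\times N}$ requires $\sum_j\mathrm{rank}(\mZ_j)\le d$, so the orthogonal-subspace conclusion is only meaningful in the regime where the ambient dimension can accommodate $k$ mutually orthogonal blocks, and I would state this dimension hypothesis explicitly. The second is the self-referential nature of CSC: it constrains the optimal pair $(\mZ_\star,\mC_\star)$ rather than the data, so the result is best read as a compatibility (fixed-point) condition rather than an a priori guarantee, consistent with Remark~1's statement that CSC cannot be certified in general and is instead validated empirically.
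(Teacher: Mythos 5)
Your proposal is correct in its overall strategy, and that strategy is the same as the paper's: compare an arbitrary feasible pair $(\mG,\mC)$ against the block-diagonalized competitor $(\mG^\ast,\mC^\ast)$ and show that, once the CSC term $\langle(\mI-\mC)(\mI-\mC)^\top,\mG-\mG^\ast\rangle$ vanishes, the competitor has no larger loss. The difference is in the tools. The paper runs first-order convexity inequalities on $\tilde{\mathcal{L}}$ in $\mG$ and in $\mC$ and kills the gradient terms with block-complementarity identities, $\langle(\mI+\alpha\mG^\ast)^{-1},\mG-\mG^\ast\rangle=0$ and $\langle\mG^\ast(\mI-\mC^\ast),\mC-\mC^\ast\rangle=0$. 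You instead exploit the exact linearity of $\Tr(\mG\mM)$ in $\mG$ (so no bound is needed there) and handle the $\log\det$ term with Fischer's inequality, $\det(\mI+\alpha\mG)\le\det(\mI+\alpha\mG^\ast)$ with equality iff the off-diagonal blocks vanish. This is a genuinely different key lemma and arguably cleaner: it delivers the equality case $\mG_\star=\mG_\star^\ast$ directly, whereas the paper simply asserts that equality holds ``only when $\mG=\mG^\ast,\mC=\mC^\ast$.'' Your converse remark (block-diagonal $\mG_\star$ trivially implies CSC) and your feasibility caveat --- $\mG^\ast$ is realizable as $\mZ^\top\mZ$ with $\mZ\in\mathbb{R}^{d\times N}$ only if $\sum_j\mathrm{rank}(\mG_{jj})\le d$ --- are genuine additions that the paper's proof silently skips.

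The one real weak point is your mechanism for the block-diagonality of $\mC_\star$. You argue that minimizing $\sum_{j\neq l}\|\mZ_j\mC_{jl}\|_F^2$ forces $\mC_{jl}=\vzero$ ``under the nondegeneracy furnished by Theorem~2.'' But Theorem~2 only gives $\mathrm{rank}(\mZ_\star)=\min\{d,N\}$; in the typical regime $d<N$ (\eg, $d=128$, $n_b=1024$ in the experiments) orthogonality of the blocks gives $\sum_j\mathrm{rank}(\mZ_j)\le d<N=\sum_j N_j$, so at least one block $\mZ_j$ has a nontrivial right null space and $\mZ_j\mC_{jl}=\vzero$ does \emph{not} imply $\mC_{jl}=\vzero$. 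The paper closes exactly this hole by a different route: it invokes the extended block-diagonal property of the regularizer (from Lu et al.), namely $r(\mC)\ge r(\mC^\ast)$ with equality if and only if $\mC=\mC^\ast$, so the strict force pushing $\mC$ toward block-diagonal comes from $\beta\cdot r(\cdot)$, not from the fidelity term. Your proof becomes complete if you promote the regularizer from ``reinforcement'' to the actual argument for the $\mC$ part (or restrict the rank-based argument to the regime $d\ge N$).
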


\begin{proof} 
We begin with the analysis to the first two terms of the loss function $\tilde{\mathcal{L}} \coloneqq \mathcal{L}_1 + \gamma\mathcal{L}_2$,
where 
\begin{align*}
    &\mathcal{L}_1\coloneqq-\frac{1}{2}\log\det\big(\mI+\alpha(\mZ\boldsymbol\Gamma)^\top(\mZ\boldsymbol\Gamma)\big)=-\frac{1}{2}\log\det(\mI+\alpha\mG),\\
    &\mathcal{L}_2\coloneqq\frac{1}{2}\|\mZ\boldsymbol\Gamma-\mZ\boldsymbol\Gamma\boldsymbol\Gamma^\top\mC\boldsymbol\Gamma\|_F^2=\frac{1}{2}\|\mZ-\mZ\mC\|_F^2=\frac{1}{2}\Tr\left(\mG\left(\mI-\mC\right)\left(\mI-\mC\right)^\top\right),
\end{align*}
since that $\boldsymbol{\Gamma}^\top\boldsymbol{\Gamma}=\boldsymbol{\Gamma}\boldsymbol{\Gamma}^\top=\mI$.
Thus, we have:
     \begin{align}
         \tilde{\mathcal{L}}(\mG,\mC)
         & = \frac{\gamma}{2}\Tr\left(\mG\left(\mI-\mC\right)\left(\mI-\mC\right)^\top\right)-\frac{1}{2}\log\det(\mI+\alpha \mG),
     \end{align}
which is a convex function with respect to (w.r.t) $\mG$ and $\mC$, separately.
%
By the property of convex function w.r.t. $\mC$, we have:
\begin{align*}
\tilde{\mathcal{L}}(\mG,\mC)
   &\ge \tilde{\mathcal{L}}(\mG^\ast,\mC^\ast)+\Big\langle \nabla_{\mC}\tilde{\mathcal{L}}(\mG^\ast,\mC^\ast),\mC-\mC^\ast\Big\rangle+ \Big\langle \frac{\gamma}{2}\left(\mI-\mC\right)\left(\mI-\mC\right)^\top, \mG - \mG^\ast \Big\rangle\notag\\
   &= \tilde{\mathcal{L}}(\mG^\ast,\mC^\ast)+\Big\langle-\gamma\mG^\ast(\mI-\mC^\ast),\mC-\mC^\ast\Big\rangle + \Big\langle \frac{\gamma}{2}\left(\mI-\mC\right)\left(\mI-\mC\right)^\top, \mG - \mG^\ast \Big\rangle,
\end{align*}
where
$\mC^{\ast} =\Diag\big(\mC_{11},\mC_{22}, \cdots, \mC_{kk} \big)=
        \begin{bmatrix}
             \mC_{11}   & & \\
              & \ddots &  \\
              & &  \mC_{kk} 
        \end{bmatrix}
$ with the blocks associating to the partition of $\mZ = [\mZ_1,\mZ_2,\cdots,\mZ_k]$.  
Since that $\Big\langle\mG^\ast(\mI-\mC^\ast),\mC-\mC^\ast\Big\rangle=0$ due to the complementary between $\mG^\ast$ and $\mI-\mC^\ast$, we have:
\begin{equation}
   \tilde{\mathcal{L}}(\mG,\mC) \ge\tilde{\mathcal{L}}(\mG^\ast,\mC^\ast)+\Big\langle \frac{\gamma}{2}\left(\mI-\mC\right)\left(\mI-\mC\right)^\top, \mG - \mG^\ast \Big\rangle.\notag
\end{equation}
It is easy to see that if 
$\Big\langle \left(\mI-\mC\right)\left(\mI-\mC\right)^\top, \mG - \mG^\ast \Big\rangle \rightarrow0$, then we will have:
\begin{equation}
    \tilde{\mathcal{L}}(\mG,\mC) \ge\tilde{\mathcal{L}}(\mG^\ast,\mC^\ast),
\end{equation}
where the equality holds \emph{only when} $\mG=\mG^\ast,\mC=\mC^\ast$.
Furthermore, if the regularizer $r(\cdot)$ satisfies the extended block diagonal condition as defined in \citep{Lu:TPAMI18}, then we have that 
$r(\mC)\ge r(\mC^\ast)$, where the equality holds if and only if $\mC=\mC^\ast$. 
Therefore, we have:
\begin{align}
\mathcal{L}(\mG,\mC)=\tilde{\mathcal{L}}(\mG,\mC)+\beta \cdot r(\mC)\ge \tilde{\mathcal{L}}(\mG^\ast,\mC^\ast)+\beta \cdot r(\mC^\ast) = \mathcal{L}(\mG^\ast,\mC^\ast).
\end{align}
Thus we conclude that minimizing the loss function $\mathcal{L}(\mG,\mC)=\tilde{\mathcal{L}}(\mG,\mC)+\beta \cdot r(\mC)$ promotes the optimal solution $(\mG_\star, \mC_\star)$ to have block diagonal structure.

We note that the Gram matrix being block-diagonal, \ie, $\mG_\star=\mG^\ast$, implies that $\mZ_{\star,j_1}^\top\mZ_{\star,j_2}=\vzero$ for all $1\leq{}j_1<j_2\leq{}k$, which is corresponding to the subspaces associated to the blocks $\mZ_{\star,j}$'s are orthogonal to each other. 

\end{proof}

\setcounter{section}{1}
\setcounter{figure}{0}  
\setcounter{table}{0}   
\renewcommand{\thefigure}{\Alph{section}.\arabic{figure}}
\renewcommand{\thetable}{\Alph{section}.\arabic{table}}

\section{Experimental Supplementary Material}
\label{Sec: supp experiments}

\subsection{Experimental Details}
\label{Sec:Experimental Details}
\subsubsection{Synthetic data}
As shown in Figure~\ref{fig:syn-input data} (top row), data points are generated from two manifolds. The first manifold (colored in purple) is generated by sampling 100 data points from 
\begin{equation}
    \vx=\begin{bmatrix}
        \cos\left(\frac{1}{5}\sin\left(5\varphi\right)\right)\cos\varphi\\
        \cos\left(\frac{1}{5}\sin\left(5\varphi\right)\right)\sin\varphi\\
        \sin\left(\frac{1}{5}\sin\left(5\varphi\right)\right)\\
    \end{bmatrix}+\boldsymbol{\epsilon},
 \end{equation}
where $\varphi$  is taken uniformly from $[0, 2\pi]$ and 
$\boldsymbol{\epsilon}\sim\mathcal{N}\left(\boldsymbol{0},0.05\boldsymbol{I}_3\right)$ is the additive noise.
The second manifold (colored in blue) is generated by sampling 100 data points from a Gaussian distribution $\mathcal{N}\left(\left\lbrack0,0,1\right\rbrack^\top, 0.05\boldsymbol{I}_3\right)$. 
To further test more complicated cases, we generate the second manifold by sampling 
50 data points from a Gaussian distribution $\mathcal{N}\left(\left\lbrack0,0,1\right\rbrack^\top, 0.05\boldsymbol{I}_3\right)$
and 50 data points from another Gaussian distribution $\mathcal{N}\left(\left\lbrack0,0,-1\right\rbrack^\top, 0.05\boldsymbol{I}_3\right)$ , as shown in Figure~\ref{fig:syn-input data} (bottom row).

In PRO-DSC, the learnable mappings $\vh(\cdot;\boldsymbol{\Psi})$ and $\vf(\cdot;\boldsymbol{\Theta})$ are implemented with two MLPs with Rectified Linear Units (ReLU)~\citep{Nair:ICML10-ReLU} as the activation function. The hidden dimension and output dimension of the MLPs are set to $100$ and $3$, respectively. We train PRO-DSC with batch-size $n_b=200$, learning rate $\eta=5\times 10^{-3}$ for $1000$ epochs. We set $\gamma=0.5,\beta=1000$, and $\alpha={3 \over 0.1\cdot 200}$. 

We use DSCNet~\citep{Ji:NIPS17-DSCNet} as the representative of the SEDSC methods. In Figure~\ref{fig:syn-DSCNet}, we set $\gamma=1$ for both cases, whereas in Figure~\ref{fig:syn-DSCNet-1}, $\gamma$ is set to $5$ and $100$ for the two cases, respectively. The encoder and decoder of DSCNet are MLPs of two hidden layers, with the hidden dimensions being set to $100$ and $3$, respectively. We train DSCNet with batch-size $n_b=200$, learning rate $\eta=1\times 10^{-4}$ for $1000$ epochs.

\subsubsection{Real-world datasets}
\label{sec:exp details Real-world datasets}
\myparagraph{Datasets description}
CIFAR-10 and CIFAR-100 are classic image datasets consisting of 50,000 images for training and 10,000 images for testing. They are split into 10 and 100 classes, respectively. CIFAR-20 shares the same images with CIFAR-100 while taking 20 super-classes as labels. ImageNet-Dogs consists of 19,500 images of 15 different dog species. Tiny-ImageNet consists of 100,000 images from 200 different classes. ImageNet-1k is the superset of the two datasets, containing more than 1,280,000 real world images from 1000 classes. 
For all the datasets except for ImageNet-Dogs, we train the network to implement PRO-DSC on the train set and test it on the test set to validate the generalization of the learned model.
For ImageNet-Dogs dataset which does not have a test set, we train the network to implement PRO-DSC on the train set and report the clustering performance on the training set. 
For a direct comparison, we conclude the basic information of these datasets in Table~\ref{table: datasets}.

To leverage the CLIP features for training, the input images are first resized to $224$ with respect to the smaller edge, then center-cropped to $224\times 224$ and fed into the CLIP pre-trained image encoder to obtain fixed features.\footnote{We use the ViT L/14 pre-trained model provided by \url{https://github.com/openai/CLIP} for 768-dimensional features.}
The subsequent training of PRO-DSC takes the extracted features as input, instead of loading the entire CLIP pre-trained model.
\begin{table*}[h]
\caption{\textbf{Basic statistical information of datasets.} We summarize the information in terms of the data with both the train and test split, as well as the number of classes involved.}
\label{table: datasets}
\vskip -0.2in
\begin{center}
\begin{small} \begin{tabular}{lccc}
    \toprule
         Datasets&  \# Train&  \# Test& \# Classes\\
         \midrule
         CIFAR-10&  50,000&  10,000& 10\\
         CIFAR-20&  50,000&  10,000& 20\\
         CIFAR-100&  50,000&  10,000& 100\\
         ImageNet-Dogs&  19,500&  N/A& 15\\
         TinyImageNet&  100,000&  10,000& 200\\
         ImageNet&  1,281,167&  50,000& 1000\\
 \bottomrule
    \end{tabular}

\end{small}
\end{center}
\vskip -0.25in
\end{table*}

\myparagraph{Network architecture and hyper-parameters}
The learnable mappings $\vh(\cdot;\boldsymbol{\Psi})$ and $\vf(\cdot;\boldsymbol{\Theta})$ are two fully-connected layers with the same output dimension $d$.
Following~\citep{Chu:ICLR24}, for the experiments on real-world data, we stack a pre-feature layer before the learnable mappings, which is composed of two fully-connected layers with ReLU and batch-norm~\citep{Ioffe:ICML15-bn}.

We train the network by the SGD optimizer with the learning rate set to $\eta=10^{-4}$, and the weight decay parameters of $\vf(\cdot;\boldsymbol{\Theta})$ and $\vh(\cdot;\boldsymbol{\Psi})$ are set to $10^{-4}$ and $5\times10^{-3}$, respectively. 
Following by~\citep{Chu:ICLR24}, we warm up training $\vf(\cdot;\boldsymbol{\Theta})$ by diversifying the features with $\mathcal{L}_{1}=-\log\det(\mI+\alpha \mZ_{\boldsymbol{\Theta}}^\top \mZ_{\boldsymbol{\Theta}})$ for a few iterations and share the weights to $\vh(\cdot;\boldsymbol{\Psi})$. We set $\alpha={d \over 0.1\cdot n_b}$ for all the experiments.
We summarize the hyper-parameters for training the network to implement PRO-DSC in Table~\ref{table configuration}.

\begin{table*}[h]
\caption{\textbf{Hyper-parameters configuration for training the network to implement PRO-DSC with CLIP pre-trained features.} Here $\eta$ is the learning rate, $d_{pre}$ is the hidden and output dimension of pre-feature layer, $m$ is the output dimension of $\vh$ and $\vf$, $n_b$ is the batch size for training, and ``\# warm-up'' is the number of iterations of warm-up stage.}
\label{table configuration}
\begin{center}
\vskip -0.1in
\begin{small}
    \begin{tabular}{lP{1.1cm}P{0.8cm}P{0.8cm}P{0.8cm}P{0.8cm}P{1.5cm}P{0.8cm}P{0.8cm}}
    \toprule
         & $\eta$ & $d_{pre}$&  $d$&  \#epochs& $n_b$ & \#warm-up&  $\gamma$&$\beta$\\
         \midrule
         CIFAR-10& $1\times 10^{-4}$ & 4096&  128&  10& 1024& 200&   300/$n_b$&600\\
         
         CIFAR-20& $1\times 10^{-4}$ & 4096&  256&  50& 1500& 0& 600/$n_b$&300\\
         
         CIFAR-100& $1\times 10^{-4}$ & 4096&  128&  100& 1500& 200&
         150/$n_b$&500\\
         
         ImageNet-Dogs& $1\times 10^{-4}$ & 4096&  128&  200& 1024& 0&
          300/$n_b$& 400\\
         
         TinyImageNet& $1\times 10^{-4}$ & 4096&  256&  100& 1500& 0&
          200/$n_b$&400\\
         ImageNet& $1\times 10^{-4}$ & 4096&  1024&  100& 2048& 2000&   800/$n_b$&400\\
 MNIST& $1\times 10^{-4}$ &4096& 128 & 100 & 1024 & 200&  700/$n_b$ & 400\\
 F-MNIST& $1\times 10^{-4}$ &1024& 128& 200& 1024& 400&  50/$n_b$&100\\
 Flowers& $1\times 10^{-4}$ &1024& 256& 200& 1024& 200&  400/$n_b$&200\\
          \bottomrule
    \end{tabular}

\end{small}
\end{center}
\vskip -0.15in
\end{table*}

\myparagraph{Running other algorithms}
Since that $k$-means~\citep{MacQueen-1967}, spectral clustering~\citep{Shi:TPAMI00}, EnSC~\citep{You:CVPR16-EnSC}, SSCOMP~\citep{You:CVPR16-SSCOMP}, and DSCNet~\citep{Ji:NIPS17-DSCNet} are based on transductive learning, we evaluate these models directly on the test set for all the experiments. 

\begin{itemize}

\item For EnSC, we tune the hyper-parameter $\gamma\in\{1,2,5,10,20,50,100,200,400,800,1600,3200\}$ and the hyper-parameter $\tau$ in $\tau \| \cdot \|_1 + \frac{1-\tau}{2} \|\cdot \|^2_2$
to balance the $\ell_1$ and $\ell_2$ norms in $\{0.9,0.95,1\}$ and report the best clustering result.

\item For SSCOMP, we tune the hyper-parameter to control the sparsity $k_{\max}\in\{1,2,5,10,20,50,100,200\}$ and the residual $\epsilon\in\{10^{-4},10^{-5},10^{-6},10^{-7}\}$ and report the best clustering result. 

\item To apply DSCNet to the CLIP features, we use MLPs with two hidden layers to replace the convolutional encoder and decoder. The hidden dimension of the MLPs are set to $128$. We tune the balancing hyper-parameters $\gamma\in\{1,2,3,4\}$ and $\beta\in\{1,5,25,50,75,100\}$ and train the model for 100 epochs with learning rate $\eta=1\times 10^{-4}$ and batch-size $n_b$ equivalent to number of samples in the test data set. 

\item As the performance of CPP is evaluated by averaging the ACC and NMI metrics tested on each batch, we reproduce the results by their open-source implementation and report the results on the entire test set.
The authors provide two implementations (see \url{https://github.com/LeslieTrue/CPP/blob/main/main.py} and \url{https://github.com/LeslieTrue/CPP/blob/main/main_efficient.py}), where one optimizes the cluster head and the feature head separately and the other shares weights between the two heads.
In this paper, we test both cases and report the better results.

\item For $k$-means and spectral clustering (including when spectral clustering is used as the final step in subspace clustering), we repeat the clustering 10 times with different random initializations (by setting $n_{init}=10$ in scikit-learn) and report the best results.

\item For SENet, SCAN and EDESC, we adjust the hyper-parameters and repeat experiments for three times, with only the best results are reported.
    
\end{itemize}

\subsection{Empirical Validation on Theoretical Results}
\label{Sec:Validation to Theoretical Results}

\myparagraph{Empirical Validation on Theorem~\ref{theorem:alignment}}
{\mcb 
To validate Theorem~\ref{theorem:alignment} empirically, 
we conduct experiments on CIFAR-100 with the same training configurations as described in Section~\ref{sec:exp details Real-world datasets} but change the training period to $1000$ epochs. 
For each epoch, we compute $\mG_b = \mZ_b^\top \mZ_b $ and $\mM_b=(\mI-\mC_b)(\mI-\mC_b)^\top$ with the learned representations $\mZ_b$ and self-expressive matrix $\mC_b$ in mini-batch of size $n_b$ after the last iteration of different epoch. 
Then, to quantify the eigenspace alignment of $\mG_b$ and $\mM_b$, we directly plot the alignment error which is computed via the Frobenius norm of the commutator $\mL\coloneqq\|\mG_b \mM_b - \mM_b \mG_b\|_F$ in mini-batch of size $n_b$ during the training period in Figure~\ref{fig:figure1-commutator}. 
We also show the standard deviation with shaded region after repeating the experiments for $5$ random seeds. 
As can be read, the alignment error decreases monotonically during the training period, implying that the eigenspaces are progressively aligned. 
Moreover, 
we find the eigenvector $\{\vu_j\}$ of $\mM_b$ by eigenvalue decomposition, where $\vu_j$ denotes the $j$-th eigenvector which are sorted according to the eigenvalues in the ascending order, 
and calculate the normalized correlation coefficient which is defined as
$\langle\vu_j,\mG_b\vu_j/\|\mG_b\vu_j\|_2\rangle$. 
Note that when the eigenspace alignment holds, one can verify that:
\begin{align}
\langle\vu_j,\frac{\mG_b\vu_j}{\|\mG_b\vu_j\|_2}\rangle=
\begin{cases}
1,\quad \lambda_{\mG_b}^{(j)}\neq0\\
0,\quad \lambda_{\mG_b}^{(j)}=0
\end{cases}
\text{ for all }j=1,2,\ldots,n_b.
\end{align}
}
As shown in Figure~\ref{fig:figure1-alignment}, the normalized correlation curves associated to the first $d=128$ eigenvectors converge to $1$, whereas the rest converge to $0$, implying the progressively alignment between $\mG_b$ and $\mM_b$.

\myparagraph{Empirical Validation on Theorem~\ref{Theorem:landscape}}
To verify Theorem~\ref{Theorem:landscape}, we conduct experiments on CIFAR-10 and CIFAR-100.
{\mcb The experimental setup keeps the same as described in Section~\ref{sec:exp details Real-world datasets}. 
In each epoch, we compute $\mG_b=\mZ_b^\top\mZ_b$ and $\mM_b=(\mI-\mC_b)(\mI-\mC_b)^\top$ from $\mZ_b$ and $\mC_b$ in mini-batch after the last iteration, respectively, and then find the eigenvalues of $\mG_b$ and $\mM_b$. 
We display the eigenvalue curves in Figure~\ref{fig:figure1-Eigenvalue_M} and \ref{fig:figure1-Eigenvalue_tildeC}, respectively.
To enhance the clarity of the visualization, the eigenvalues of $\mG_b$ and $\mM_b$ are sorted in descending and ascending order, respectively. }
As can be observed, there are $\min\left\{d,n_b\right\}=128$ non-zero eigenvalues in $\mG_b$, approximately being inversely proportional to the smallest $128$ eigenvalues of $\mM_b$. 
This results empirically demonstrate that $\mathrm{rank}(\mZ_\star)=\rank$ and $\lambda_{\mG_\star}^{(i)}=\frac{1}{\gamma\lambda_{\mM}^{(i)}+\nu_\star}-\frac{1}{\alpha}$ for minimizers.

Furthermore, to verify the sufficient condition of PRO-DSC to prevent feature space collapse, we conduct experiments on CIFAR-10 and CIFAR-100 with varying $\alpha$ and $\gamma$, keeping all the other hyper-parameters consistent with Table~\ref{table configuration}.
As can be read in Figure~\ref{fig:supp alpha gamma}, Theorem~\ref{Theorem:landscape} is verified since that $\gamma< \frac{1}{\lambda_{\max}(\mM_b)}\frac{\alpha^2}{\alpha+\min\left\{\frac{d}{N},1\right\}}$ yields satisfactory clustering accuracy (ACC\%) and subspace-preserving representation error (SRE\%).
The satisfactory ACC and SRE confirm that PRO-DSC avoids catastrophic collapse when $\gamma< \frac{1}{\lambda_{\max}(\mM)}\frac{\alpha^2}{\alpha+\min\left\{\frac{d}{N},1\right\}}$ holds.
When $\gamma\geq \frac{1}{\lambda_{\max}(\mM_b)}\frac{\alpha^2}{\alpha+\min\left\{\frac{d}{N},1\right\}}$, 
PRO-DSC yields significantly worse ACC and SRE. 
There is a phase transition phenomenon that corresponds to the sufficient condition to prevent collapse.\footnote{\mcb In experiments, we estimate that $\lambda_{\max}(\mM_b) = 1$ and thus the condition reduces to $\gamma < \frac{\alpha^2}{\alpha + \min\{\frac{d}{N},1\}}$.}

\myparagraph{Empirical Validation on Theorem~\ref{Theorem:orthogonal}}
To intuitively visualize the structured representations learned by PRO-DSC, we visualize 
the Gram matrices $|\mZ^\top\mZ|$ and Principal Component Analysis (PCA) results for both CLIP features and learned representations on CIFAR-10.
{\mcb The experimental setup also keeps the same as described in Section~\ref{sec:exp details Real-world datasets}.}

The Gram matrix shows the similarities between representations within the same class (indicated by in-block diagonal values) and across different classes (indicated by off-block diagonal values).
Moreover, we display the dimensionality reduction results via PCA for the CLIP features and the learned representation of samples from three categories in CIFAR-10. 
We use PCA for dimensionality reduction as it performs a linear projection, well preserving the underlying structure.

As can be observed in Figure~\ref{fig:Orthogonal}, the CLIP features from 
three classes approximately lie on different subspaces.
Despite of the structured nature of the features, the underlying subspaces are not orthogonal.
In the Gram matrix of the CLIP feature, the average similarity between features from different classes is greater than $0.6$, resulting in an unclear block diagonal structure.
After training with PRO-DSC, the spanned subspaces of the learned representations become orthogonal.\footnote{The dimension of each subspace is much greater than one (see Figure~\ref{fig: supp singular values}). The 1-dimensional subspaces observed in the PCA results are a consequence of dimensionality reduction.}
Additionally, the off-block diagonal values of the Gram matrix decrease significantly, revealing a clear block diagonal structure.
These visualization results qualitatively verify that PRO-DSC aligns the representations with a union of orthogonal subspaces.\footnote{Please refer to Figure~\ref{fig: supp gram and pca} and \ref{fig subspace PCA} for the results on other datasets and the visualization of the bases of each subspace.}

\subsection{More Experimental Results}
\label{Sec: more experimental results}

\subsubsection{More results of PRO-DSC on synthetic data}
To explore the learning ability of our PRO-DSC, we prepare experiments on synthetic data with adding an additional subspace, as presented in Figure~\ref{Fig:syn_supp}.

\textbf{In case 1}, we sample 100 points from Gaussian distribution $\boldsymbol{x}\sim\mathcal{N}([\frac{1}{\sqrt 2},0,
\frac{1}{\sqrt 2}]^\top,0.05\mI_3)$ and 100 points from $\boldsymbol{x}\sim\mathcal{N}([-\frac{1}{\sqrt 2},0,\frac{1}{\sqrt 2}]^\top,0.05\mI_3)$, respectively. 
We train PRO-DSC with batch-size $n_b=300$, learning rate $\eta=5\times 10^{-3}$ for $5000$ epochs and set $\gamma=1.3,\beta=500,\alpha={ 3 \over 0.1\cdot 300}$.
We observe that our PRO-DSC successfully eliminates the nonlinearity in representations and maximally separates the different subspaces.

\textbf{In case 2}, we add a vertical curve 
\begin{equation}
    \vx=\begin{bmatrix}
        \cos\left(\frac{1}{5}\sin\left(5\varphi\right)\right)\cos\varphi\\
        \sin\left(\frac{1}{5}\cos\left(5\varphi\right)\right)\\
        \cos\left(\frac{1}{5}\sin\left(5\varphi\right)\right)\sin\varphi
    \end{bmatrix}+\boldsymbol{\epsilon},
 \end{equation}
from which 100 points are sampled, where $\boldsymbol{\epsilon}\sim\mathcal{N}(\mathbf{0},0.05\mI_3)$. We use $\sin(\frac{1}{5}\cos(5\varphi))$ to avoid overlap in the intersection of the two curves. 
 We train PRO-DSC with batch-size $n_b=200$, learning rate $\eta=5\times 10^{-3}$ for $8000$ epochs and set $\gamma=0.5,\beta=500,\alpha={3 \over 0.1\cdot 200}$. 
We observe that PRO-DSC finds difficulties to learn representations of data which are located at the intersections of the subspaces. However, for those data points which are away from the intersections are linearized well.

\begin{figure*}[ht]
    \centering
        \begin{subfigure}[b]{0.24\linewidth}
            \includegraphics[trim=130pt 80pt 130pt 80pt, clip, width=\textwidth]{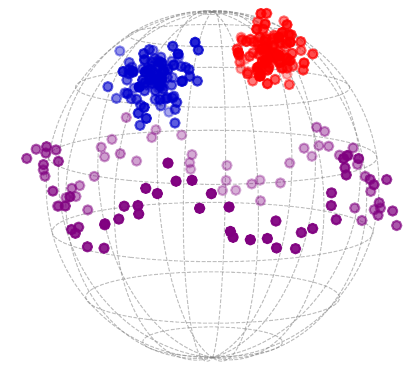}
        \caption{Case 1: Input data}
        \end{subfigure}
        \hfill
        \begin{subfigure}[b]{0.24\linewidth}
            \includegraphics[trim=130pt 80pt 130pt 80pt, clip, width=\textwidth]{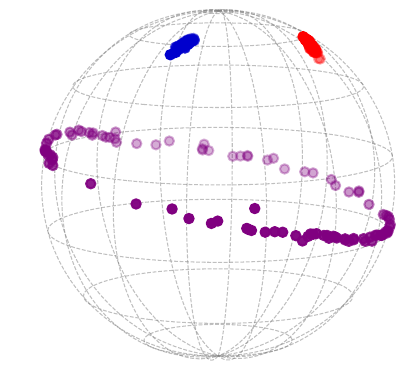}
        \caption{Case 1: Learned $\mZ$}
        \end{subfigure}
    \hfill
        \centering
        \begin{subfigure}[b]{0.24\linewidth}
            \includegraphics[trim=130pt 80pt 130pt 80pt, clip, width=\textwidth]{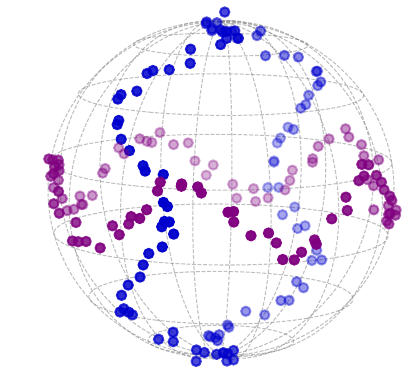}
        \caption{Case 2: Input data}
        \end{subfigure}
        \hfill
        \begin{subfigure}[b]{0.24\linewidth}
            \includegraphics[trim=130pt 80pt 130pt 80pt, clip, width=\textwidth]{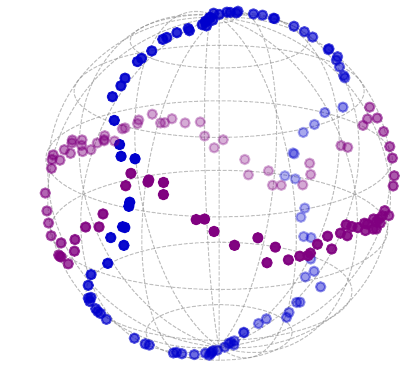}
        \caption{Case 2: Learned $\mZ$}
        \end{subfigure}
\caption{\mcb Additional results on synthetic data.}
\label{Fig:syn_supp}
\end{figure*}

\subsubsection{Experiments with BYOL Pre-training}
To validate the effectiveness of our PRO-DSC without using CLIP features, we conduct a fair comparison with existing deep clustering approaches. 
Similar to most deep clustering algorithms, we divide the training process into two steps.
We begin with pre-training the parameters of the backbone with BYOL~\citep{Grill:NIPS20-BYOL}.
Then, we leverage the parameters pre-trained in the first stage and fine-tune the model by the proposed PRO-DSC loss function.
Specifically, we set the learning rate $\eta = 0.05$ and the batch size $n_b=256$. The output feature dimension $d$ is consistent with the setting for training with the CLIP features.
Following \citep{Li:AAAI21-CC,Huang:TPAMI23-Propos}, 
We use ResNet-18 as the backbone for the experiments on CIFAR-10 and CIFAR-20, and use ResNet-34 as the backbone for the experiments on other datasets, 
and use a convolution filter of size $3\times 3$ and stride 1 to replace the first convolution filter. 
We use the commonly used data augmentations methods to the input images, which are listed as follows: 
\begin{tcolorbox}[breakable, 
   colframe=black, 
   boxrule=0.5pt, 
   boxsep=0pt] 
{\texttt{transforms.RandomResizedCrop(size=img\_size, scale=(0.08, 1)),}}\\
{\texttt{transforms.RandomHorizontalFlip(),}}\\
{\texttt{transforms.RandomApply([transforms.ColorJitter(0.4, 0.4, 0.2, 0.1)], p=0.8),}}\\
{\texttt{transforms.RandomGrayscale(p=0.2),}}\\
{\texttt{transforms.RandomApply([transforms.GaussianBlur(kernel\_size=23, sigma=(0.1, 2.0))], p=1.0).}}
\end{tcolorbox}

When re-implementing other baselines, we use the code provided by the respective authors and report the best performance after fine-tuning the hyper-parameters.

We report the clustering results 
based on BYOL pre-training in Table~\ref{tab:Benchmark-scratch}.
As can be read from Table~\ref{tab:Benchmark-scratch}, our PRO-DSC outperforms all the deep clustering baselines, including CC~\citep{Li:AAAI21-CC}, GCC~\citep{Zhong:ICCV21-GCC}, NNM~\citep{Dang:CVPR21-NNM}, SCAN~\citep{Van:ECCV2020-SCAN}, NMCE~\citep{Li:Arxiv22}, IMC-SwAV~\citep{Ntelemis:KBS22}, and MLC~\citep{Ding:ICCV23}.

\begin{table*}[h]
\caption{\textbf{Clustering performance comparison on BYOL pre-training.} The best results are in bold font and the second best results are underlined. Performance marked with ``*'' is based on our re-implementation.}
\label{tab:Benchmark-scratch}
\begin{center}
\begin{footnotesize}
\resizebox{.95\textwidth}{!} {
\begin{tabular}{lcccccccccc}
\toprule
\multirow{2}{*}{Method} & \multicolumn{2}{c}{CIFAR-10} & \multicolumn{2}{c}{CIFAR-20} & \multicolumn{2}{c}{CIFAR-100} & \multicolumn{2}{c}{TinyImgNet-200} &  \multicolumn{2}{c}{ImgNetDogs-15}\\
        & ACC   & NMI   & ACC   & NMI   & ACC   & NMI   & ACC   & NMI  & ACC&NMI\\
\midrule
 $k$-means & 22.9 & 8.7 & 13.0 & 8.4 & 9.2 & 23.0 & 2.5& 6.5 & 10.5&5.5\\
 SC & 24.7 & 10.3 & 13.6 & 9.0 & 7.0 & 17.0 & 2.2& 6.3 & 11.1&3.8\\
 CC & 79.0 & 70.5 & 42.9 & 43.1 & 26.9* & 48.1* & 14.0& 34.0 & 42.9&44.5\\
 GCC & 85.6 & 76.4 & 47.2 & 47.2 & 28.2* & 49.9* & 13.8& 34.7 & 52.6&49.0\\
NNM & 84.3 & 74.8 & 47.7 & 48.4 & 41.2 & 55.1 & -& - & 31.1* & 34.3*\\
SCAN & 88.3  & 79.7  & 50.7  & 48.6  & 34.3  & 55.7  & -&  - & 29.6*&30.3*\\
 NMCE & 89.1& 81.2 & \underline{53.1}& 52.4& 40.0*& 53.9*& 21.6* & 40.0* & 39.8 & 39.3\\
IMC-SwAV & 89.7  & 81.8  & 51.9  & 52.7  & 45.1  & \underline{67.5}  & 28.2&  \textbf{52.6} & - & -\\
MLC & \underline{92.2} & \underline{85.5} & \textbf{58.3}& \underline{59.6}& \underline{49.4}& \textbf{68.3}&  \underline{28.7}*& \underline{52.2}* & \underline{71.0}*& \underline{68.3}*\\
Our PRO-DSC& $\mcb \textbf{93.0}{\scriptstyle\pm 0.6}$ &\mcb $\textbf{86.5}{\scriptstyle\pm 0.2}$ &\mcb $\textbf{58.3}{\scriptstyle \pm 0.9}$ &\mcb $\textbf{60.1}{\scriptstyle\pm 0.6}$ &\mcb $\textbf{56.3}{\scriptstyle\pm0.6}$ &\mcb $66.7.0{\scriptstyle\pm1.0}$ & \mcb $\textbf{31.1}{\scriptstyle\pm0.3}$ & \mcb $46.0{\scriptstyle\pm1.0}$ &\mcb $\textbf{74.1}{\scriptstyle\pm0.5}$&\mcb $\textbf{69.5}{\scriptstyle\pm0.6}$\\
\bottomrule
\end{tabular}
}
\end{footnotesize}
\end{center}
\end{table*}

\subsubsection{More Experiments on CLIP,  DINO and MAE Pre-trained Features}

\myparagraph{Clustering on learned representations}
To quantitatively validate the effectiveness of the structured representations learned by PRO-DSC, we illustrate the clustering accuracy of representations learned by various algorithms in Figure~\ref{Fig:FH clustering performance}.
Here, to compared with the representations learned from SEDSC methods, we additionally conduct experiments on DSCNet~\citep{Ji:NIPS17-DSCNet} and report the performance in Table~\ref{tab:feature acc}. To apply DSCNet on CLIP features, we use MLPs with two hidden layers to replace the stacked convolutional encoder and decoder. As demonstrated in Sec. \ref{Sec:Experimental Details}, we report the best clustering results after the tuning of hyper-parameters.
As analyzed in \citep{Haeffele:ICLR21} and Section~\ref{Sec:prerequisite}, 
DSCNet overly compresses the representations and yields unsatisfactory clustering results.

\begin{table}[htbp]
  \centering
  \caption{\textbf{Clustering accuracy of CLIP features and learned representations.} We apply $k$-means, spectral clustering, and EnSC to cluster the representations.}
  \resizebox{0.95\textwidth}{!}{
    \begin{tabular}{l|ccc|ccc|ccc|ccc}
    \toprule
          & \multicolumn{3}{c|}{CIFAR-10} & \multicolumn{3}{c|}{CIFAR-100} & \multicolumn{3}{c|}{CIFAR-20} & \multicolumn{3}{c}{TinyImgNet-200} \\
          & $k$-means & SC    & EnSC  & $k$-means & SC    & EnSC  & $k$-means & SC    & EnSC  & $k$-means & SC    & EnSC \\
    \midrule
    CLIP  & \underline{74.7}  & 70.2  & 95.4  & 52.8  & 66.4  & 67.0  & 46.9  & \underline{49.2}  & \textbf{60.8}  & 54.1  & \underline{62.8}  & 64.5  \\
    SEDSC & 16.4 & 18.9 &  16.9 &  5.4 & 4.9 & 5.3 & 11.7 & 10.6 & 12.8  & 5.7 & 3.9 &  7.2\\
    CPP   & 71.3  & \underline{70.3}  & \textbf{95.6}  & \underline{75.3}  & \underline{75.0}  & \underline{77.5}  & \underline{55.5}  & 43.6  & 58.3  & \underline{62.1}  & 58.0  & \underline{67.0}  \\
    PRO-DSC & \textbf{93.4}  & \textbf{92.1}  & \underline{95.5}  & \textbf{76.5}  & \textbf{75.2}  & \textbf{77.6}  & \textbf{66.0}  & \textbf{59.7}  & \underline{60.0}  & \textbf{67.6}  & \textbf{67.0}  & \textbf{69.5}  \\
    \bottomrule
    \end{tabular}}%
  \label{tab:feature acc}%
\end{table}%

\myparagraph{Out of domain datasets}
We evaluate the capability to refine features by training PRO-DSC with pre-trained CLIP features on out-of-domain datasets, namely, MNIST~\citep{Deng2012mnist}, Fashion MNIST~\citep{Xiao:2017-fmnist} and Oxford flowers~\citep{Nilsback:InCCV08-Oxford}.
As shown in Table~\ref{out of domain table}, CPP~\citep{Chu:ICLR24} refines the CLIP features and yields better clustering performance comparing with spectral clustering~\citep{Shi:TPAMI00} and EnSC~\citep{You:CVPR16-EnSC}. 
Our PRO-DSC further demonstrates the best performance on all benchmarks, validating its effectiveness in refining input features.
\begin{table}[htbp]
    \vskip +0.1in
    \centering
     \caption{Experiments on out-of-domain datasets.} 
     \resizebox{.8\textwidth}{!} {
     \begin{tabular}{lcccccc}
     \toprule
          \multirow{2}{*}{Methods}&  \multicolumn{2}{c}{MNIST}&  \multicolumn{2}{c}{F-MNIST}&  \multicolumn{2}{c}{Flowers}\\
 & ACC& NMI& ACC& NMI& ACC&NMI\\
 \midrule
         Spectral Clustering~\citep{Shi:TPAMI00}&  74.5&  67.0&  64.3&  56.8&  85.6& 94.6\\
         EnSC~\citep{You:CVPR16-EnSC}&  91.0&  85.3&  69.1&  65.1&  90.0& 95.9\\
         CPP~\citep{Chu:ICLR24}&  \underline{95.7}&  \underline{90.4}&  \underline{70.9}&  \underline{68.8}&  \underline{91.3}& \underline{96.4}\\
         PRO-DSC&  \textbf{96.1}&  \textbf{90.9}&  \textbf{71.3}&  \textbf{70.3}&  \textbf{92.0}& \textbf{97.4}\\
         \bottomrule
    \end{tabular}
    }
    \label{out of domain table}
\end{table}

\myparagraph{Experiments on block diagonal regularizer with different $k$}
\label{sec:BDR-k}
To test the robustness of block diagonal regularizer $\|\mA\|_{\boxk}$ to different $k$, we vary $k$ and report the clustering performance in Table~\ref{tab:BDR-k}.
As illustrated, $k$ does not necessarily equal to the number of clusters. 
There exists an interval within which the regularizer works effectively.

\begin{table*}[htbp]
  \centering
  \caption{Clustering performance with different $k$ in block diagonal regularizer.}
  \label{tab:BDR-k}
  \vskip -0.05in
  \resizebox{.8\textwidth}{!} {
        \begin{tabular}{lP{1cm}P{1cm}P{1cm}P{1cm}P{1cm}P{1cm}P{1cm}P{1cm}}
            \toprule
             \textbf{}  & $k$ & {2} & {5} & {10}& {15}& {20}& {25}& {30}\\
             \midrule
             \multirow{2}{*}{CIFAR-10} & ACC     & 97.2 & 97.2 & \textbf{97.4} & 96.3 & 96.3 & 95.4 & 94.0\\
           & NMI     & 93.2 & 93.2 & \textbf{93.5} & 92.0 & 92.0 & 90.7 & 88.6\\ \bottomrule
           
           \\\toprule
            \textbf{}  & $k$ & {25} & {50} & {75}& {100}& {125}& {150}& {200} \\
            \midrule
            \multirow{2}{*}{CIFAR-100}  & ACC     & 74.3 & 76.7 & 78.1 & 78.2 & \textbf{78.9} & 76.4 & 74.8\\
           & NMI     & 80.9 & 82.3 & \textbf{83.2} & 82.9 & \textbf{83.2} & 82.2 & 81.5\\

 \bottomrule
        \end{tabular}
        }
\end{table*}

But if $k$ is significantly smaller than the number of clusters, the effect of block diagonal regularizer will be subtle.
Therefore, the performance of PRO-DSC will be similar to that of PRO-DSC without a regularizer (see ablation studies in Section~\ref{Sec:Experiments}).
In contrary, if $k$ is significantly larger than the number of clusters, over-segmentation will occur to the affinity matrix, which has negative impact on the subsequent clustering performance.

\myparagraph{Clustering on ImageNet-1k with DINO and MAE}
\label{sec:more pre-trained feature}
To test the performance of PRO-DSC based on more pre-trained features other than CLIP~\citep{Radford:ICML21-CLIP}, we further conduct experiments on ImageNet-1k~\citep{Deng:CVPR09} pre-trained by DINO~\citep{Caron:ICCV21-dino} and MAE~\citep{He:CVPR22-MAE} (see Table~\ref{dino table}). 
\begin{table}[htbp]
    \centering
    \caption{Clustering Performance of PRO-DSC based on DINO and CLIP pre-trained features on ImageNet-1k.} 
    \resizebox{.8\textwidth}{!} {
    \begin{tabular}{lP{2cm}|P{1cm}P{1cm}|P{1cm}P{1cm}}
    \toprule
         \multirow{2}{*}{Method}& \multirow{2}{*}{Backbone}& \multicolumn{2}{c|}{PRO-DSC}  & \multicolumn{2}{c}{$k$-means}\\
         &  & ACC & NMI  & ACC & NMI\\
         \midrule
         MAE~\citep{He:CVPR22-MAE}&  ViT L/16&  9.0& 49.1 & 9.4 & 49.3\\
         DINO~\citep{Caron:ICCV21-dino}&  ViT B/16&  57.3& 79.3 & 52.2 & 79.2\\
         DINO~\citep{Caron:ICCV21-dino}&  ViT B/8&  59.7 & 80.8 & \textbf{54.6} & \textbf{80.5} \\
         \midrule
         CLIP~\citep{Radford:ICML21-CLIP}&  ViT L/14&  \textbf{65.1}& \textbf{83.6} & 52.5 & 79.7\\
         \bottomrule
    \end{tabular}
    }
    \label{dino table}
\end{table}

DINO and MAE are pre-trained on ImageNet-1k without leveraging external training data, thus their performance on PRO-DSC is lower than CLIP.
Similar to the observations in CPP~\citep{Chu:ICLR24}, DINO initializes PRO-DSC well, yet MAE fails, which is attributed to the fact that features from MAE prefer fine-tuning with labels, while they are less suitable for learning inter-cluster discriminative representations~\citep{oquab:arxiv23-dinov2}.
We further extract features from the validation set of ImageNet-1k and visualize through $t$-SNE~\citep{Van:JMLR08-tsne} to validate the hypothesis (see Figure~\ref{fig:tsne-imagenet}).
\begin{figure}[ht]
\vskip 0.2in
    \centering
    \begin{subfigure}[b]{0.45\linewidth}
           \includegraphics[width=\textwidth]{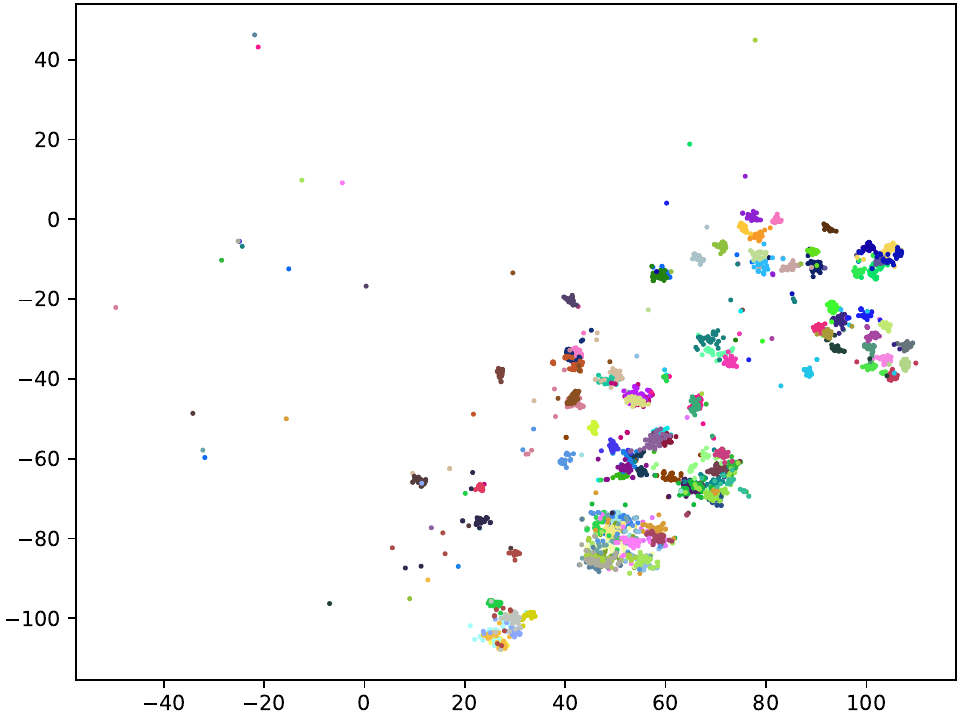}
            \caption{CLIP}
            \label{}
    \end{subfigure}\hfill
    \begin{subfigure}[b]{0.45\linewidth}
           \includegraphics[width=\textwidth]{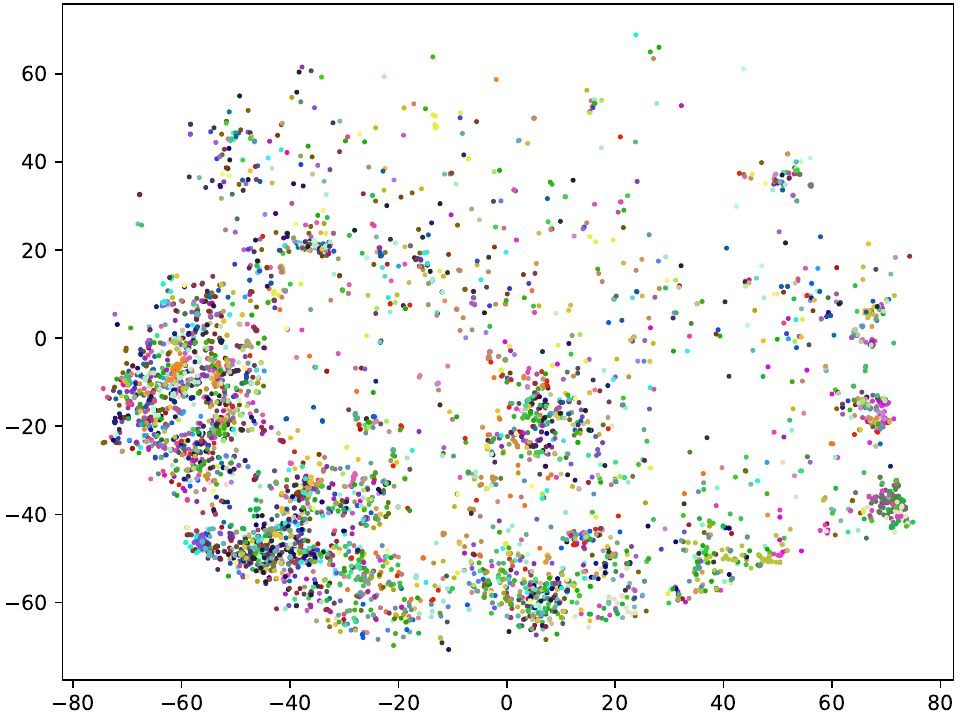}
            \caption{MAE}
            \label{}
    \end{subfigure}
\caption{The $t$-SNE visualization of CLIP and MAE features on the validation set of ImageNet-1k.}
\label{fig:tsne-imagenet}
\end{figure}

\subsubsection{Experiments without using pre-trained models}

\myparagraph{Experiments on Reuters and UCI HAR}
During the rebuttal, we conducted extra experiments on datasets Reuters and UCI HAR. 
The dataset Reuters-10k consists of four text classes, containing 10,000 samples of 2,000 dimension. The UCI HAR is a time-series dataset, consisting of six classes, 10,299 samples of 561 dimension.
We take EDESC \citep{Cai:CVPR22-EDESC} as the baseline method for deep subspace clustering on Reuters-10k, and take N2D \citep{Mcconville:ICPR21-N2D} and FCMI \citep{Zeng:CVPR23-FCMI} as the baseline methods for UCI HAR, in which the results are directly cited from the respective papers.
We conducted experiments with PRO-DSC on Reuters and UCI HAR following the same protocol for data processing as the baseline methods. We train and test PRO-DSC on the entire dataset and report the results over 10 trials. 
Experimental results are provided in Table \ref{tab:tab-reuters_UCI}. 
The hyper-parameters used for PRO-DSC is listed in Table \ref{tab:hyper-param for reuters etal}.

\begin{table}[htbp]\color{black}
  \caption{\mcb Experimental Results on Datasets Reuters and UCI HAR with 10 trials.
  The results of other methods are cited from the respective papers.}
  \centering
    \begin{tabular}{lcccc}
      \toprule
      \multirow{2}{*}{Dataset} & \multicolumn{2}{c}{REUTERS-10k} & \multicolumn{2}{c}{UCI HAR} \\
      & ACC & NMI & ACC & NMI \\
      \midrule
    $k$-means \citep{MacQueen-1967}& 52.4 & 31.2 & 59.9 & 58.8 \\
      SC \citep{Shi:TPAMI00}  & 40.2 & 37.5 & 53.8 & 74.1 \\
      AE \citep{Bengio:NIPS06-AE}  & 59.7 & 32.3 & 66.3 & 60.7 \\
      \midrule
      VAE \citep{Diederik:ICLR14-VAR}  & 62.5 & 32.9 & - & - \\
      JULE \citep{Yang:CVPR16-JULE} & 62.6 & 40.5 & - & - \\ 
      DEC \citep{Xie:ICML16-DEC}  & 75.6 & \underline{68.3} & 57.1 & 65.5 \\ 
      DSEC \citep{Chang:TPAMI18-DSEC} & 78.3 & \textbf{70.8} & - & - \\  
      EDESC \citep{Cai:CVPR22-EDESC} & \underline{82.5} & 61.1 & - & -\\
      \midrule
      DFDC \citep{Zhang:Arxiv21}     & - & - & 86.2 & \textbf{84.5} \\ 
      N2D \citep{Mcconville:ICPR21-N2D} & - & - & 82.8 & 71.7\\
      FCMI \citep{Zeng:CVPR23-FCMI} & - & - & \textbf{88.2} & 80.7\\
      \midrule
      PRO-DSC & \textbf{85.7} $\pm$ 1.3 & 64.6 $\pm$ 1.3 & \underline{87.1} $\pm$ 0.4 & \underline{80.9} $\pm$ 1.2\\
      \bottomrule
    \end{tabular}
    \label{tab:tab-reuters_UCI}
\end{table}

\begin{table}[htbp]\color{black}
\caption{\mcb Configuration of hyper-parameters for experiments on Reuters, UCI HAR, EYale-B, ORL and COIL-100.}
    \centering
    \begin{tabular}{lcccccccc}
      \toprule
      Dataset & $\eta$ & $d_{pre}$&  $d$&  \#epochs& $n_b$ & \#warm-up&  $\gamma$&$\beta$\\
      \midrule
      REUTERS-10k & $10^{-4}$ & 1024 & 128 & 100 & 1024 & 50 & 50 & 200\\
      UCI HAR     & $10^{-4}$ & 1024 & 128 & 100 & 2048 & 20 & 100 & 300\\
      EYale-B & $10^{-4}$ & 1080 & 256 & 10000 & 2432 & 100 & 200 & 50\\
      ORL & $10^{-4}$ & 80 & 64 & 5000 & 400 & 100 & 75 & 10\\
      COIL-100 & $10^{-4}$ & 12800 & 100 & 10000 & 7200 & 100 & 200 & 100\\
      \bottomrule
    \end{tabular}
    \label{tab:hyper-param for reuters etal}
\end{table}

\myparagraph{Comparison to AGCSC and SAGSC on Extended Yale B, ORL, and COIL-100}
During the rebuttal, we conducted more experiments on two state-of-the-art subspace clustering methods AGCSC \citep{Wei:CVPR23} and ARSSC \citep{Wang:PR23}. Since that both of the two methods cannot handle the datasets used for evaluating our PRO-DSC, we conducted experiments on the datasets: Extended Yale B (EYaleB), ORL, and COIL-100.
We set the architecture of pre-feature layer in PRO-DSC as the same to the encoder of DSCNet \citep{Ji:NIPS17-DSCNet}. The hyper-parameters configuration for training PRO-DSC is summarized in Table \ref{tab:hyper-param for reuters etal}.
We repeated experiments for 10 trails and report the average with standard deviation in Table \ref{tab:EYaleB}. 
As baseline methods, we use EnSC~\citep{You:CVPR16-EnSC}, SSCOMP~\citep{You:CVPR16-SSCOMP}, S$^3$COMP~\citep{Chen:CVPR20}, DSCNet, DSSC~\citep{Lim:arxiv20} 
and DELVE~\cite{Zhao:CPAL24}. 
The results of these methods, except for S$^3$COMP and DELVE,  
are directly cited them from DSSC~\citep{Lim:arxiv20}, and the results of S$^3$COMP and DELVE are cited from their own papers.

\begin{itemize}
\item 
{Comparison to AGCSC.} Our method surpasses AGCSC on the Extended Yale B dataset and achieves comparable results on the ORL dataset. However, AGCSC cannot yield the result on COIL-100 in 24 hours.

\item {Comparison to ARSSC.} ARSSC employs three different non-convex regularizers: $\ell_\gamma$ norm Penalty (LP), Log-Sum Penalty (LSP), and Minimax Concave Penalty (MCP). While ARSSC-MCP performs the best on Extended Yale B, our PRO-DSC outperforms ARSSC-MCP on ORL. While AGCSC performs the best on ORL, but it yields inferior results on Extended Yale B and it cannot yield the results on COIL-100 in 24 hours. Thus, we did not report the results of AGCSC on COIL-100 and marked it as Out of Time (OOT). Our PRO-DSC performs the second best results on Extended Yale B, ORL and the best results on COIL-100. Since that we have not found the open-source code for ARSSC, we are unable to have their results on COIL-100. This comparison also confirms the scalablity of our PRO-DSC which is due to the re-parametrization (similar to SENet).

\end{itemize}

\begin{table}[htbp]\color{black}
\caption{\mcb Experiments on Extended Yale B, ORL and COIL-100.}
  \centering
  \resizebox{0.95\textwidth}{!}{
    \begin{tabular}{lcccccc}
\toprule
          & \multicolumn{2}{c}{EYale-B} & \multicolumn{2}{c}{ORL} & \multicolumn{2}{c}{COIL-100} \\
          & ACC & NMI & ACC & NMI & ACC & NMI\\
\midrule
    EnSC  & 65.2 & 73.4 & 77.4 & 90.3 & 68.0 & 90.1\\
    SSCOMP & 78.0 & 84.4   & 66.4 & 83.2 & 31.3 & 58.8\\
    S$^3$COMP-C~\citep{Chen:CVPR20} & 87.4 & - & -& - & 78.9 &-\\
    DSCNet & 69.1 & 74.6 & 75.8 & 87.8 & 49.3 & 75.2\\
    DELVE \citep{Zhao:CPAL24} & 89.8 & 90.1 & - & - & 79.0 & 93.9 \\
    J-DSSC \citep{Lim:arxiv20} & 92.4 & 95.2 & 78.5 & 90.6  & 79.6 & 94.3\\
    A-DSSC \citep{Lim:arxiv20} & 91.7 & 94.7 & 79.0 & 91.0   & \underline{82.4} & 94.6\\
    \midrule
    AGCSC \citep{Wei:CVPR23} & 92.3 & 94.0  & \textbf{86.3} & \textbf{92.8}  & OOT & OOT\\
    ARSSC-LP \citep{Wang:PR23}  &  95.7 &- &   75.5 &- &  - &- \\
    ARSSC-LSP \citep{Wang:PR23}  &  95.9&-  &  71.3 &-  &  - &- \\
    ARSSC-MCP \citep{Wang:PR23} & \textbf{99.3}&-   &  72.0&-   &  -  &-\\
    \midrule
    PRO-DSC &   $\underline{96.0}\pm 0.3$& $\textbf{95.7}\pm 0.8$ &  $\underline{83.2}\pm 2.2$& $\underline{92.7}\pm 0.6$  & $\textbf{82.8}\pm 0.9$&$\textbf{95.0}\pm 0.6$ \\
\bottomrule
    \end{tabular}%
    }
  \label{tab:EYaleB}%
\end{table}%

\subsection{More Visualization Results}

\myparagraph{Gram matrices and PCA visualizations}
To qualitatively validate that PRO-DSC learns representations aligning with a union-of-orthogonal-subspaces distribution, we visualize the Gram matrices and PCA dimension reduction results of CLIP features and learned representations from PRO-DSC for each dataset.
As shown in Figure~\ref{fig: supp gram and pca}, the off-block diagonal values decrease significantly, implying the orthogonality between representations from different classes.
The orthogonal between subspaces can also be observed from the PCA dimension reduction results.

\begin{figure*}[ht]
\vskip -0.15in
\centering
\begin{subfigure}[b]{\linewidth}  
    \begin{subfigure}[b]{0.2\linewidth}
           \includegraphics[trim=50pt 0pt 80pt 0pt, clip, width=\textwidth]{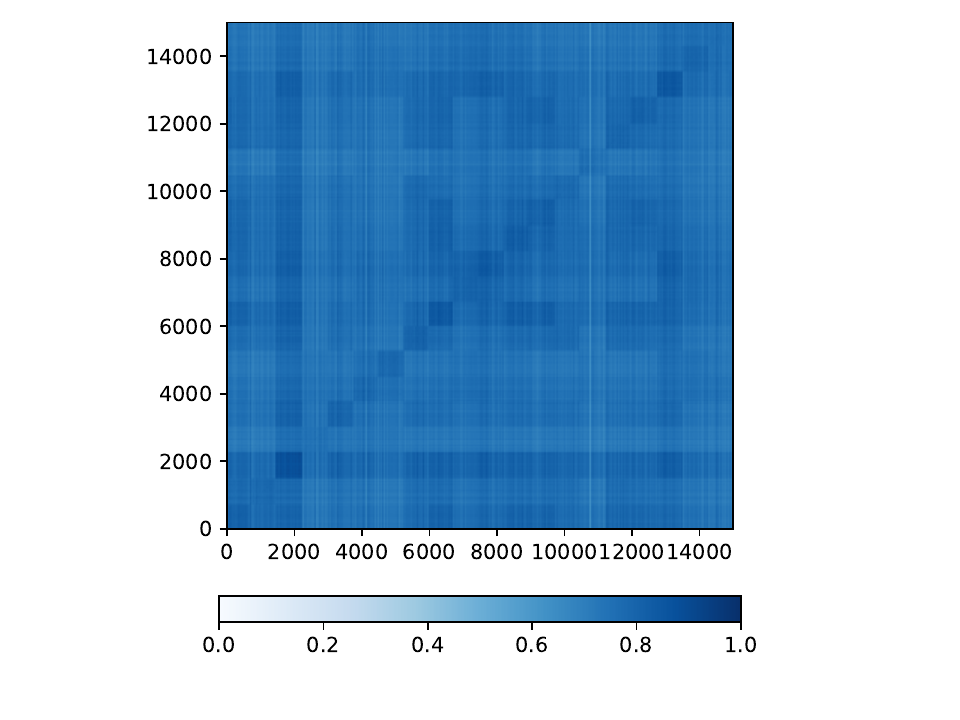}
    \end{subfigure}\hfill
    \begin{subfigure}[b]{0.2\linewidth}
           \includegraphics[trim=50pt 0pt 80pt 0pt, clip, width=\textwidth]{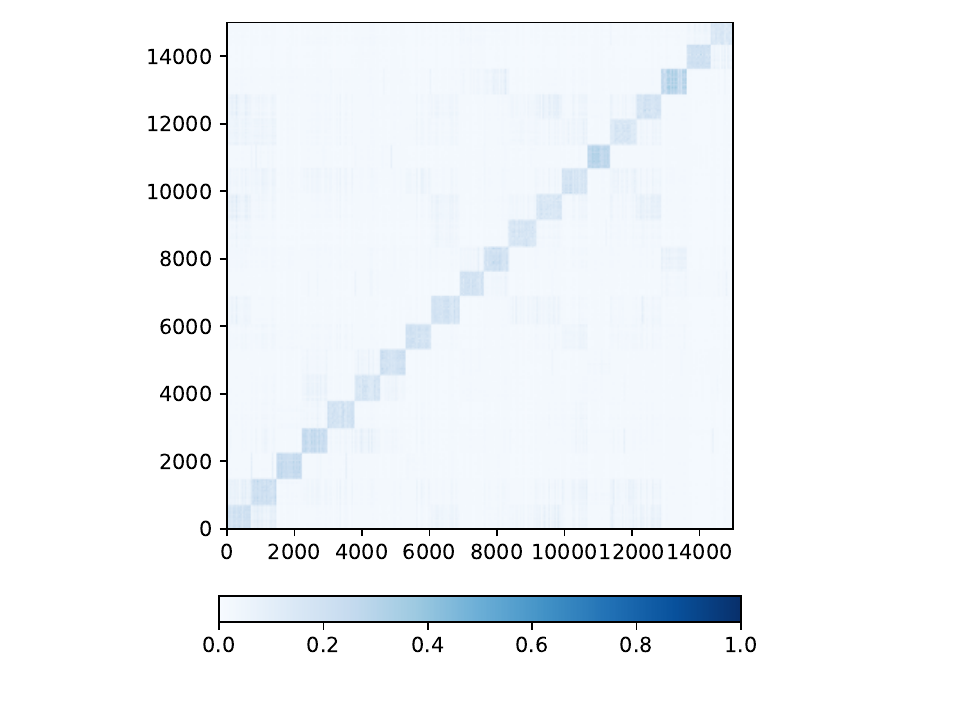}
    \end{subfigure}\hfill
    \begin{subfigure}[b]{0.3\linewidth}
           \includegraphics[trim=30pt 40pt 0pt 60pt, clip, width=\textwidth]{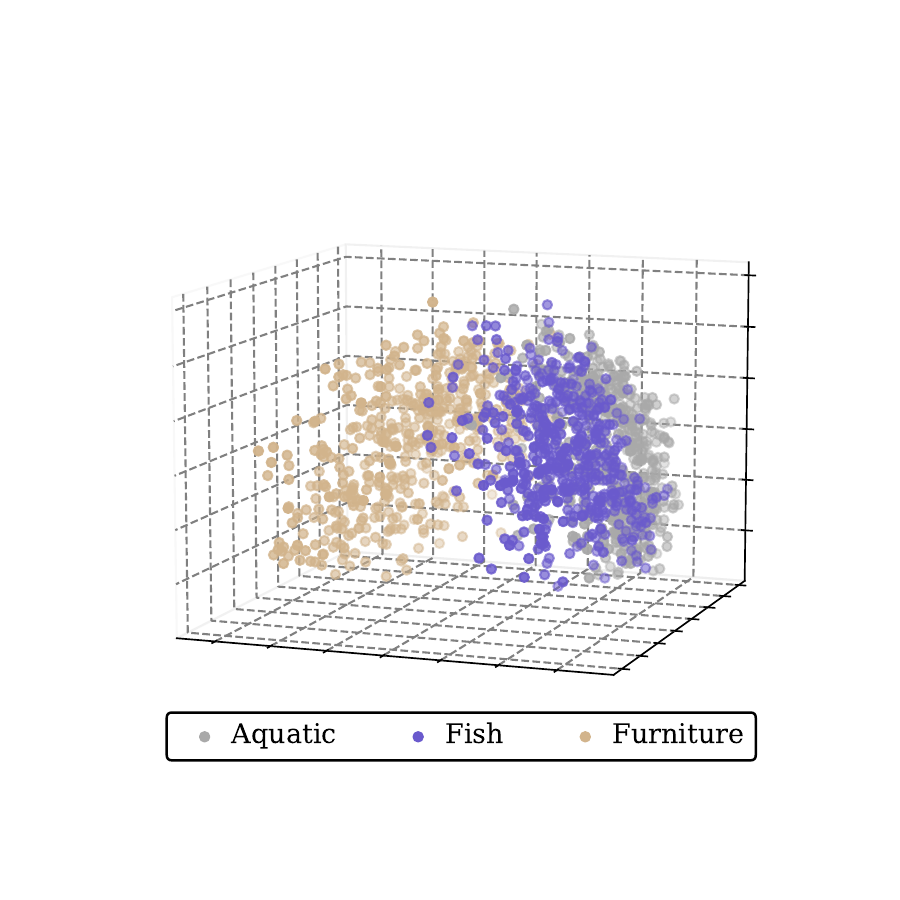}
    \end{subfigure}\hfill
    \begin{subfigure}[b]{0.3\linewidth}
            \includegraphics[trim=30pt 40pt 0pt 60pt, clip, width=\textwidth]{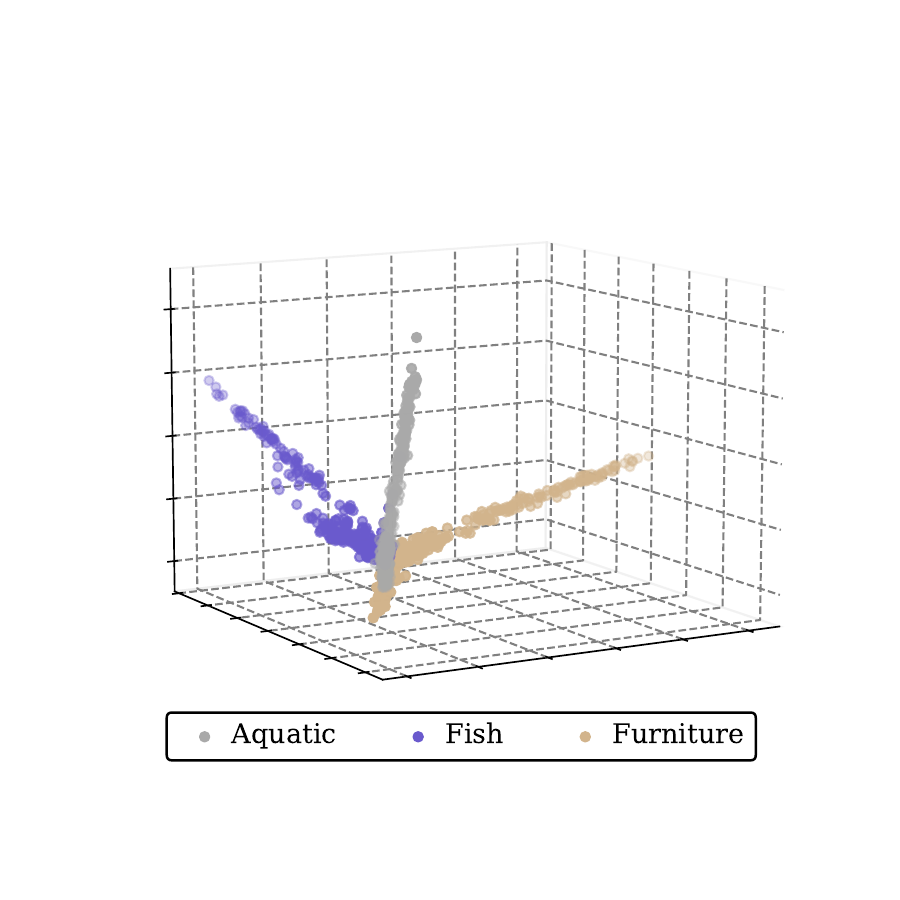}
    \end{subfigure}
\caption{CIFAR-20}
\end{subfigure}
\begin{subfigure}[b]{\linewidth}  
    \begin{subfigure}[b]{0.2\linewidth}
           \includegraphics[trim=50pt 0pt 80pt 0pt, clip, width=\textwidth]{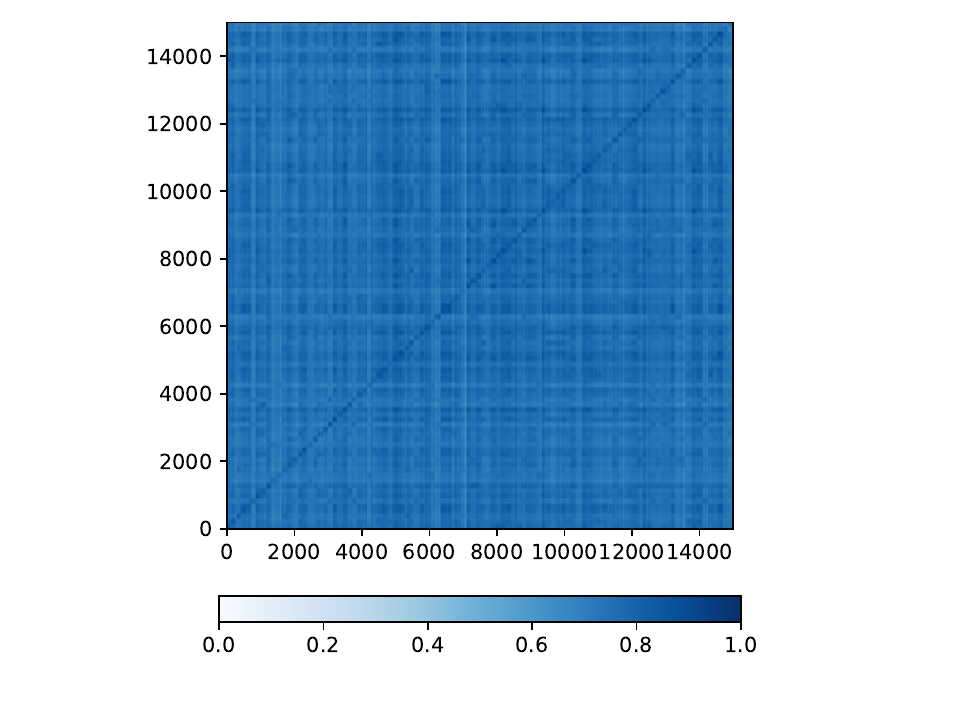}
    \end{subfigure}\hfill
    \begin{subfigure}[b]{0.2\linewidth}
           \includegraphics[trim=50pt 0pt 80pt 0pt, clip, width=\textwidth]{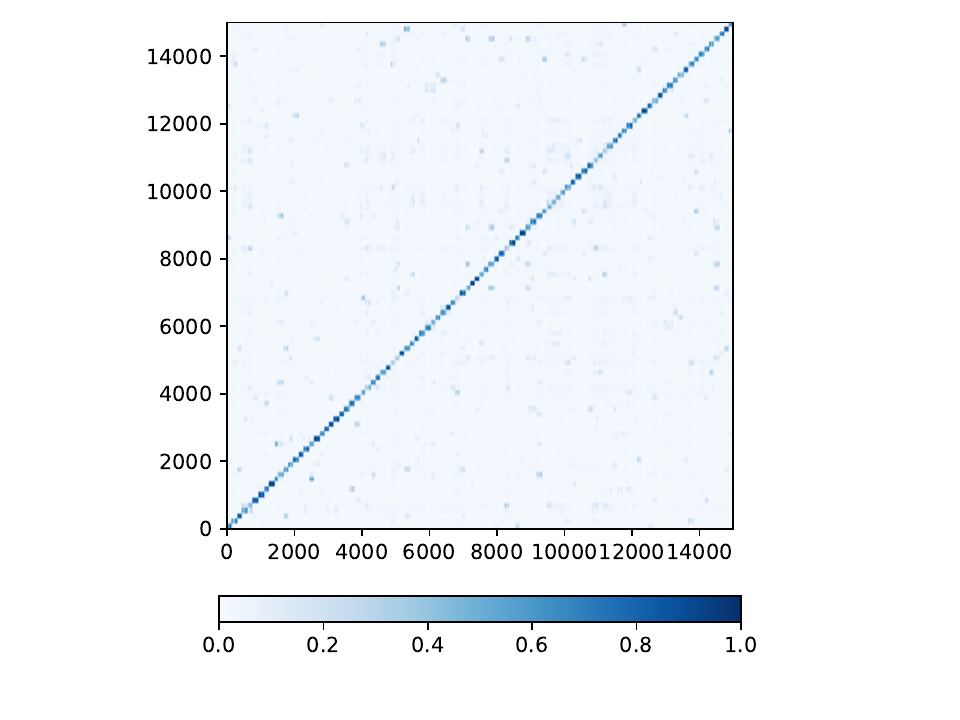}
    \end{subfigure}\hfill
    \begin{subfigure}[b]{0.3\linewidth}
           \includegraphics[trim=30pt 40pt 0pt 60pt, clip, width=\textwidth]{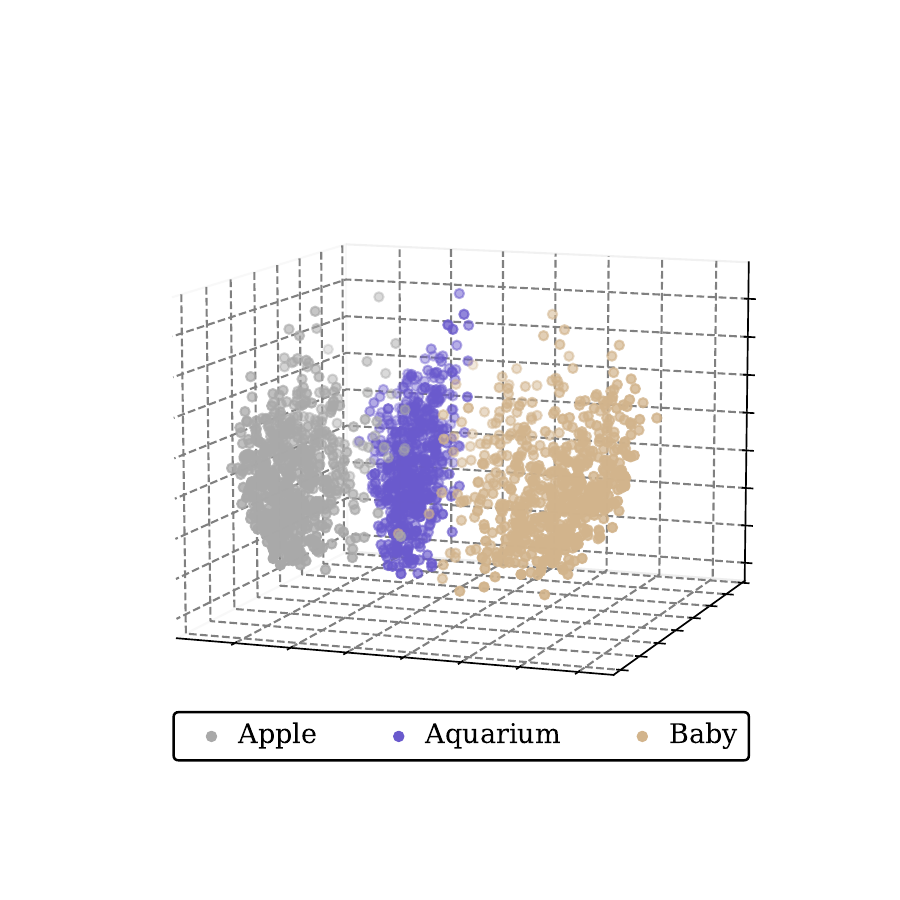}
    \end{subfigure}\hfill
    \begin{subfigure}[b]{0.3\linewidth}
            \includegraphics[trim=30pt 40pt 0pt 60pt, clip, width=\textwidth]{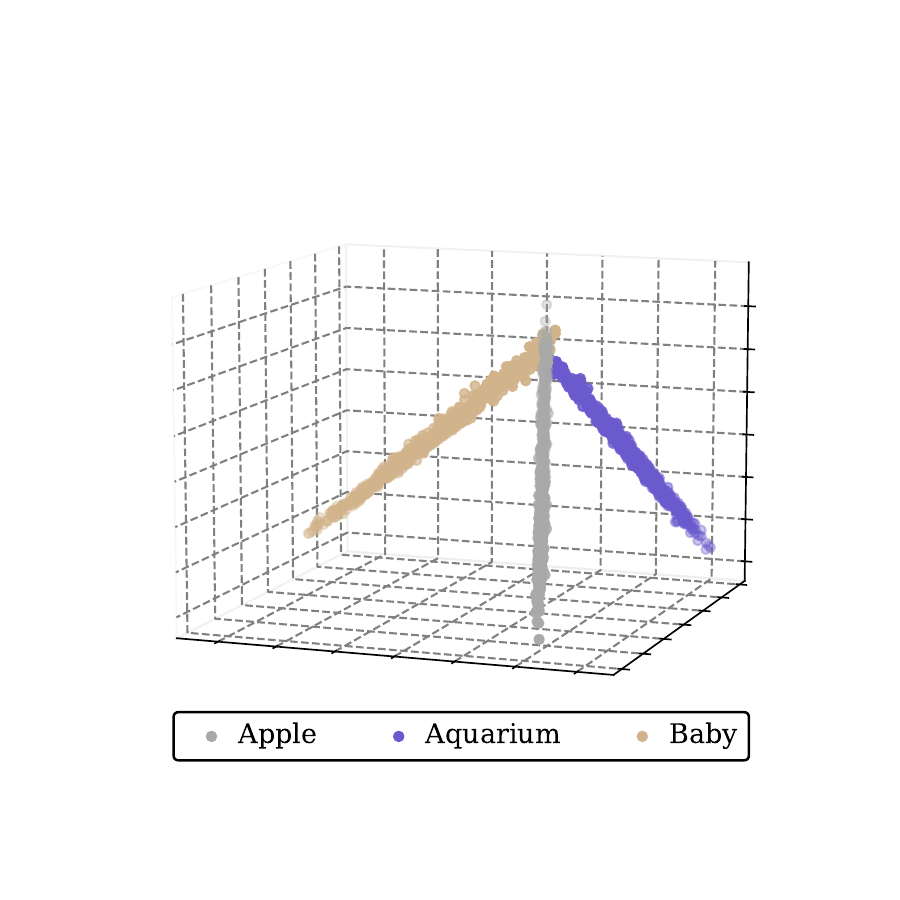}
    \end{subfigure}
\caption{CIFAR-100}
\end{subfigure}
\begin{subfigure}[b]{\linewidth}  
    \begin{subfigure}[b]{0.2\linewidth}
           \includegraphics[trim=50pt 0pt 80pt 0pt, clip, width=\textwidth]{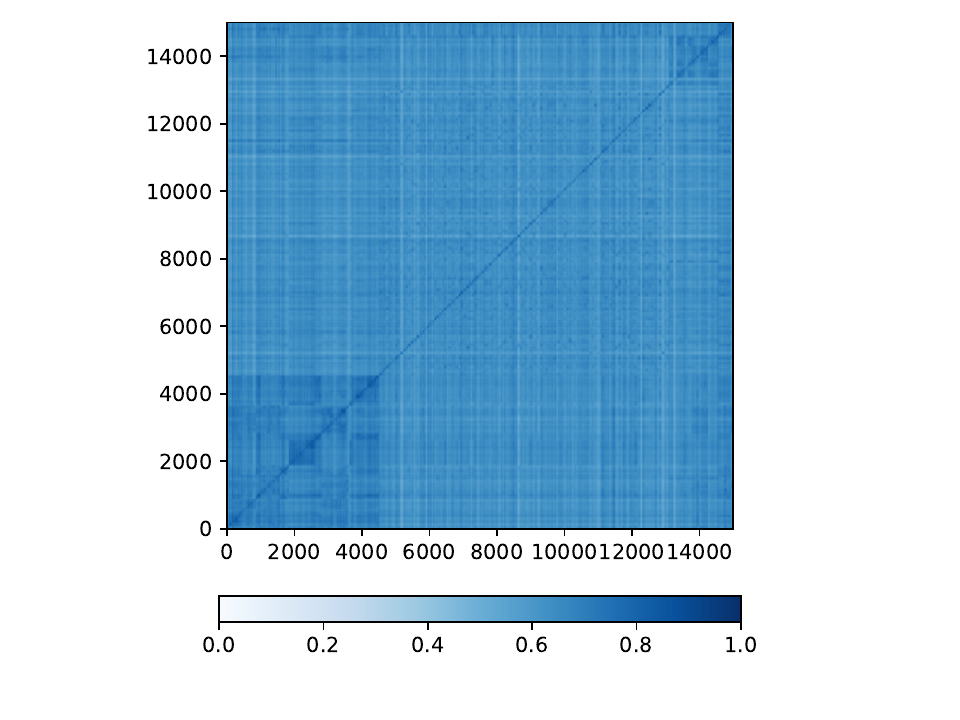}
    \end{subfigure}\hfill
    \begin{subfigure}[b]{0.2\linewidth}
           \includegraphics[trim=50pt 0pt 80pt 0pt, clip, width=\textwidth]{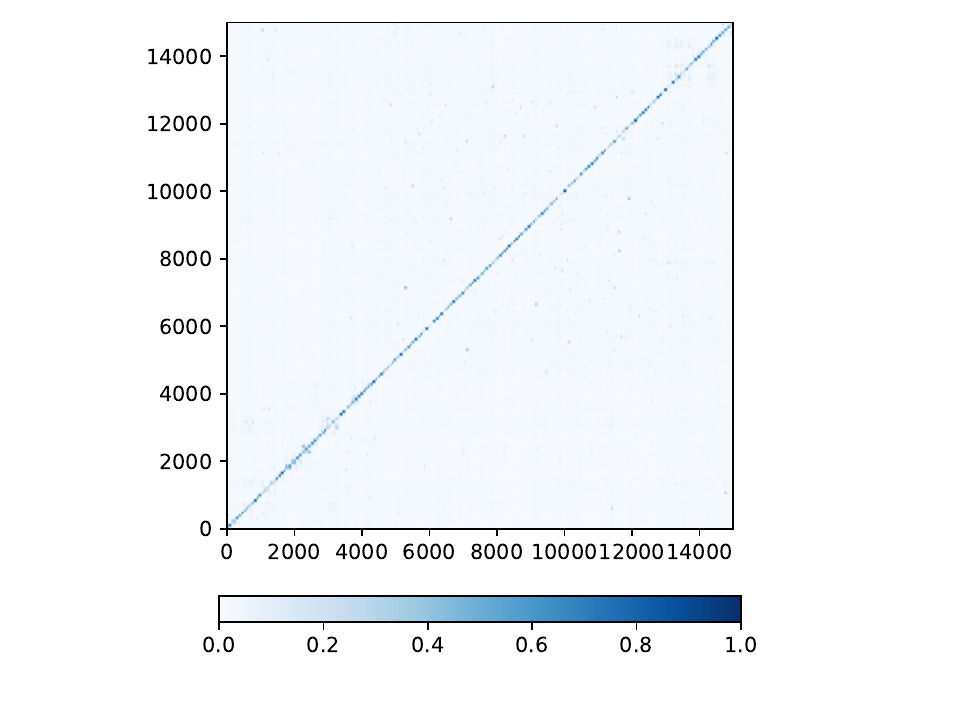}
    \end{subfigure}\hfill
    \begin{subfigure}[b]{0.3\linewidth}
           \includegraphics[trim=30pt 40pt 0pt 60pt, clip, width=\textwidth]{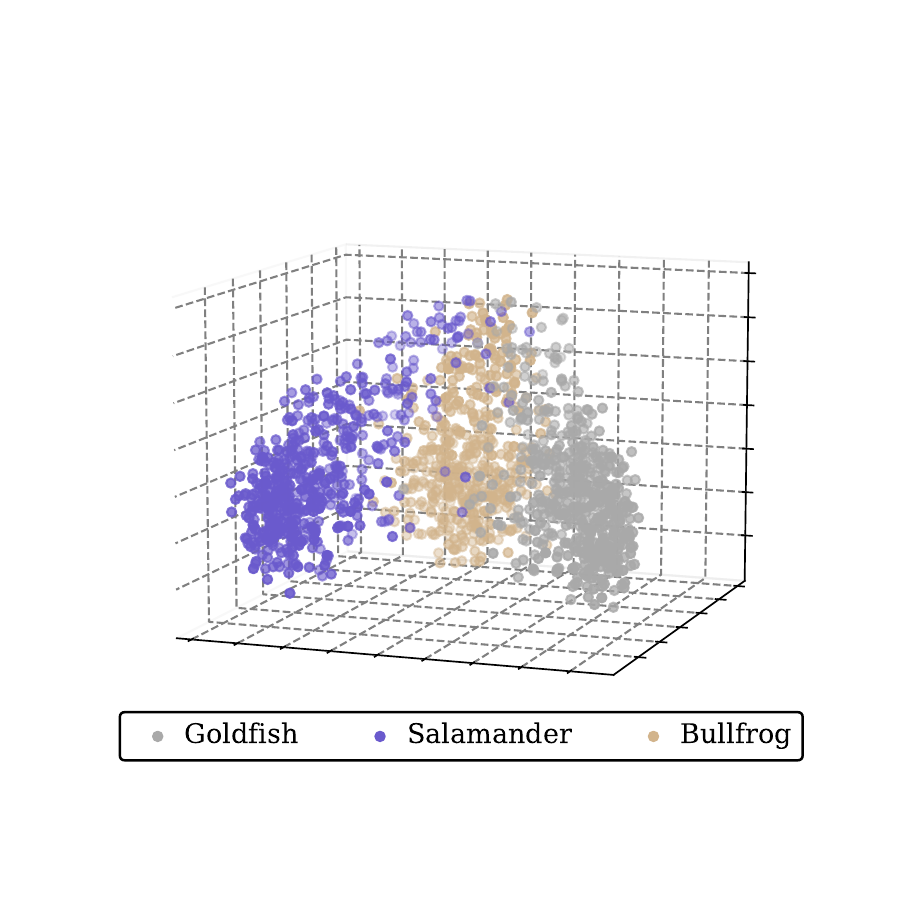}
    \end{subfigure}\hfill
    \begin{subfigure}[b]{0.3\linewidth}
            \includegraphics[trim=30pt 40pt 0pt 60pt, clip, width=\textwidth]{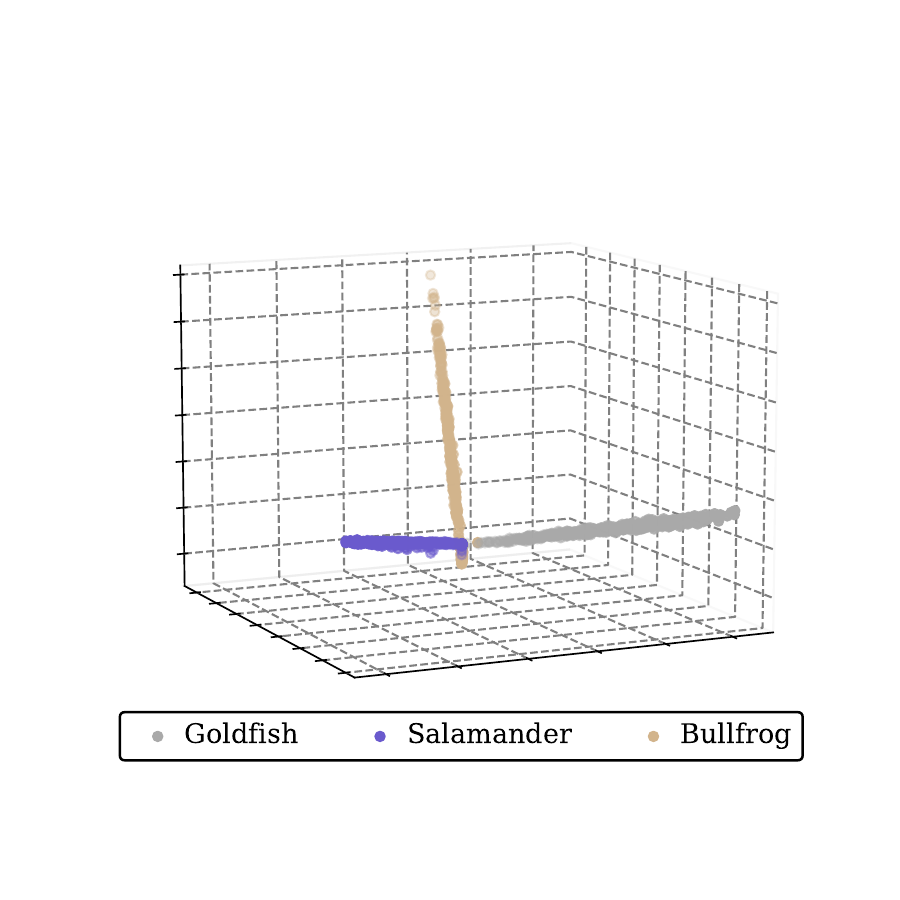}
    \end{subfigure}
\caption{TinyImageNet-200}
\end{subfigure}
\begin{subfigure}[b]{\linewidth}  
    \begin{subfigure}[b]{0.2\linewidth}
           \includegraphics[trim=50pt 0pt 80pt 0pt, clip, width=\textwidth]{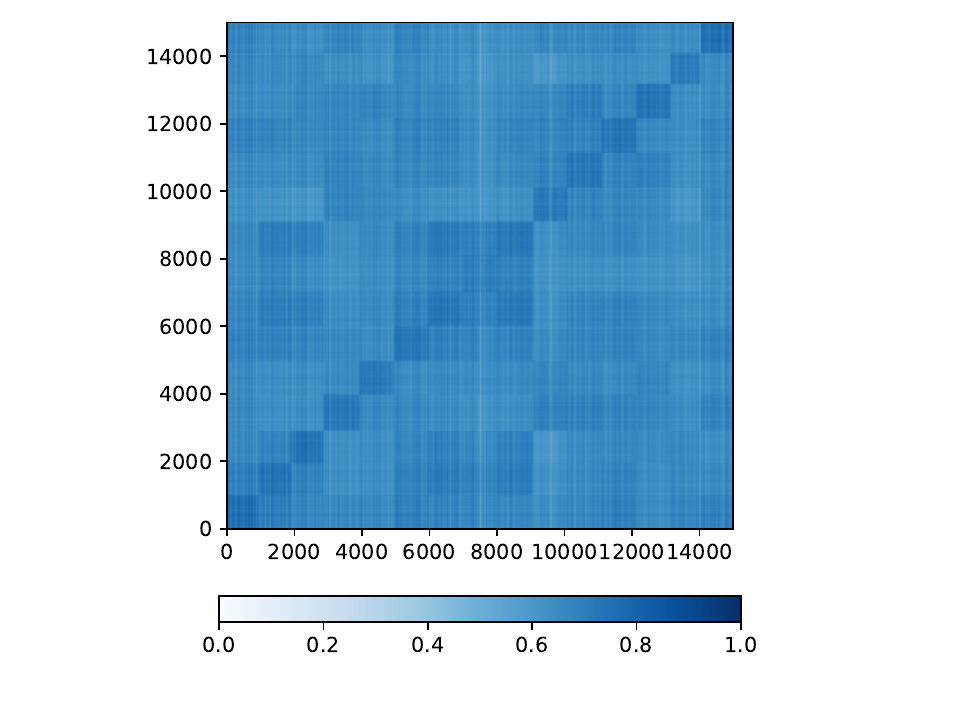}
    \end{subfigure}\hfill
    \begin{subfigure}[b]{0.2\linewidth}
           \includegraphics[trim=50pt 0pt 80pt 0pt, clip, width=\textwidth]{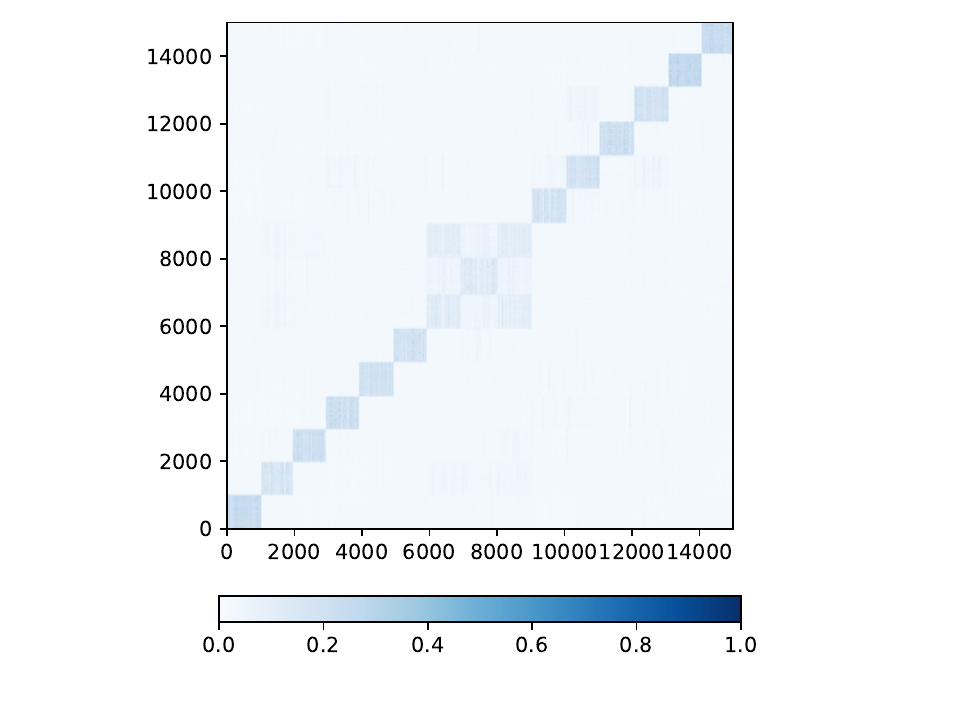}
    \end{subfigure}\hfill
    \begin{subfigure}[b]{0.3\linewidth}
           \includegraphics[trim=30pt 40pt 0pt 60pt, clip, width=\textwidth]{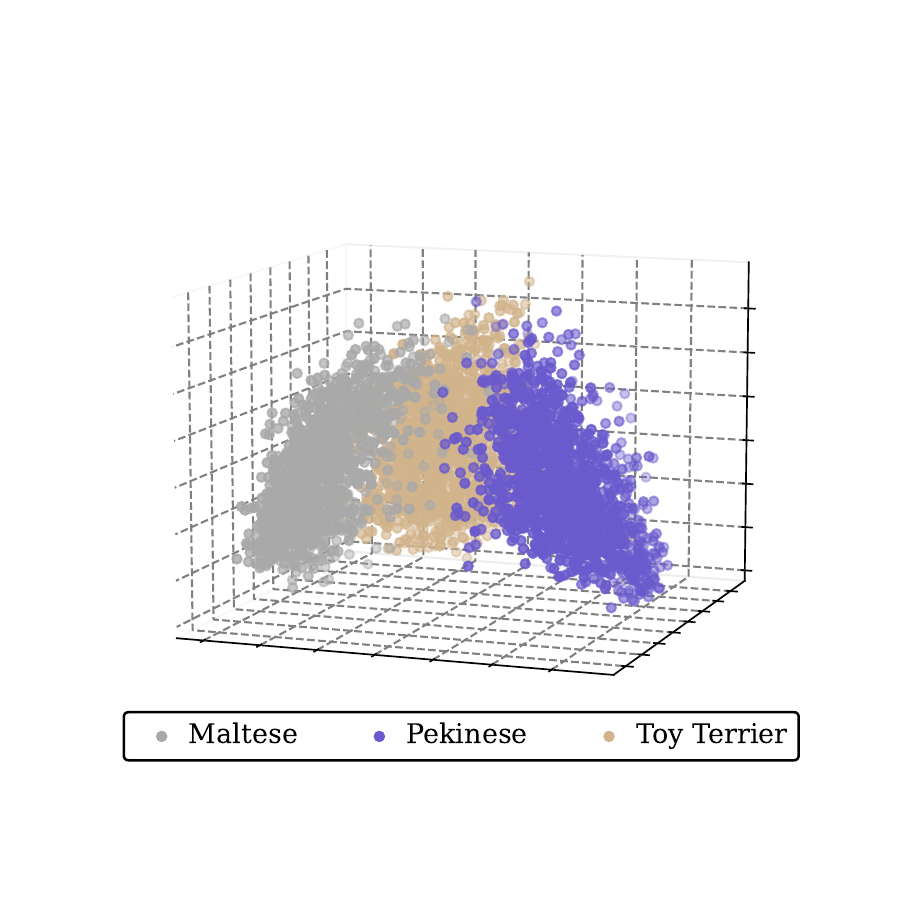}
    \end{subfigure}\hfill
    \begin{subfigure}[b]{0.3\linewidth}
            \includegraphics[trim=30pt 40pt 0pt 60pt, clip, width=\textwidth]{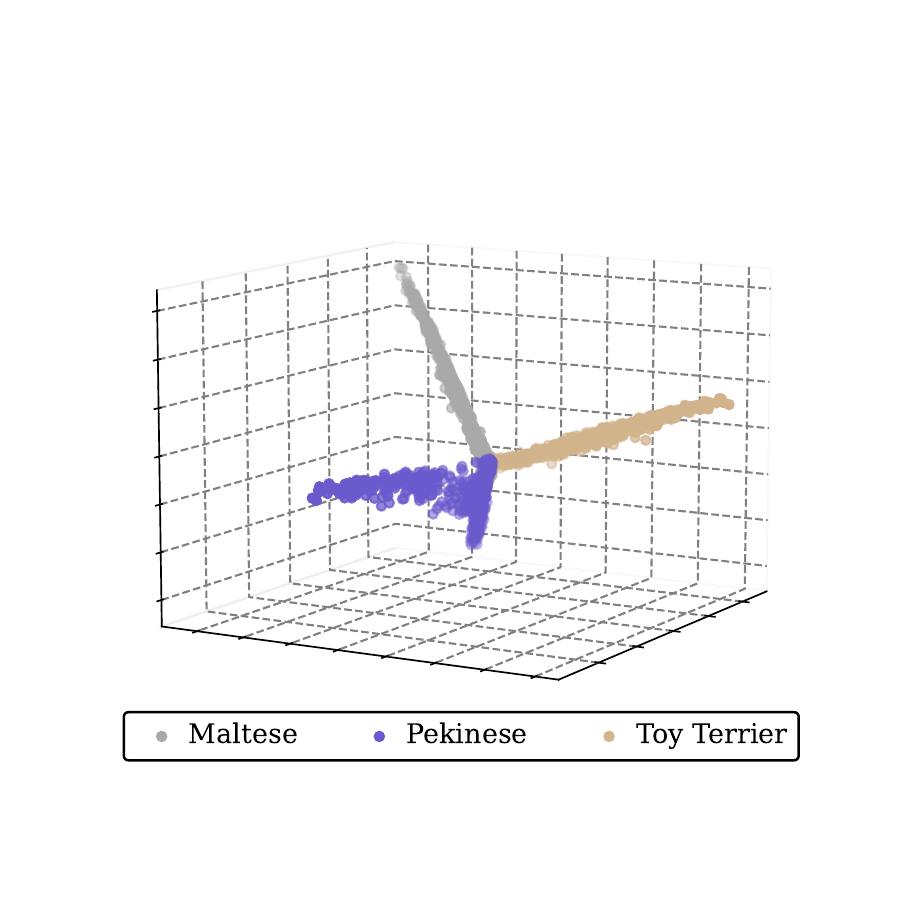}
    \end{subfigure}
\caption{ImageNet-Dogs-15}
\end{subfigure}
\begin{subfigure}[b]{\linewidth}  
    \begin{subfigure}[b]{0.2\linewidth}
           \includegraphics[trim=50pt 0pt 80pt 0pt, clip, width=\textwidth]{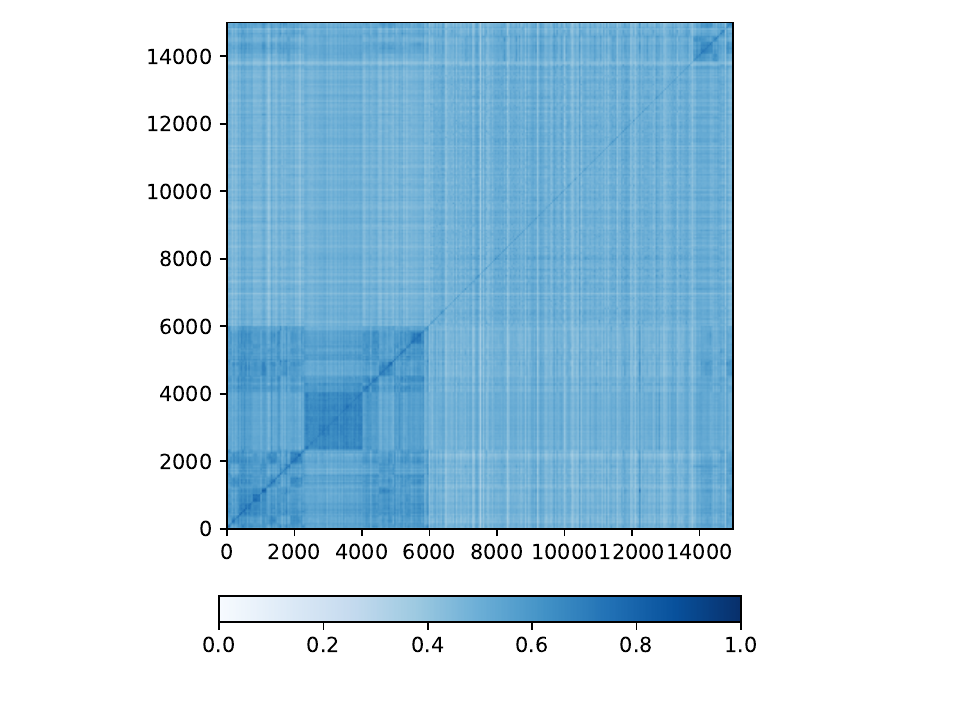}
    \end{subfigure}\hfill
    \begin{subfigure}[b]{0.2\linewidth}
           \includegraphics[trim=50pt 0pt 80pt 0pt, clip, width=\textwidth]{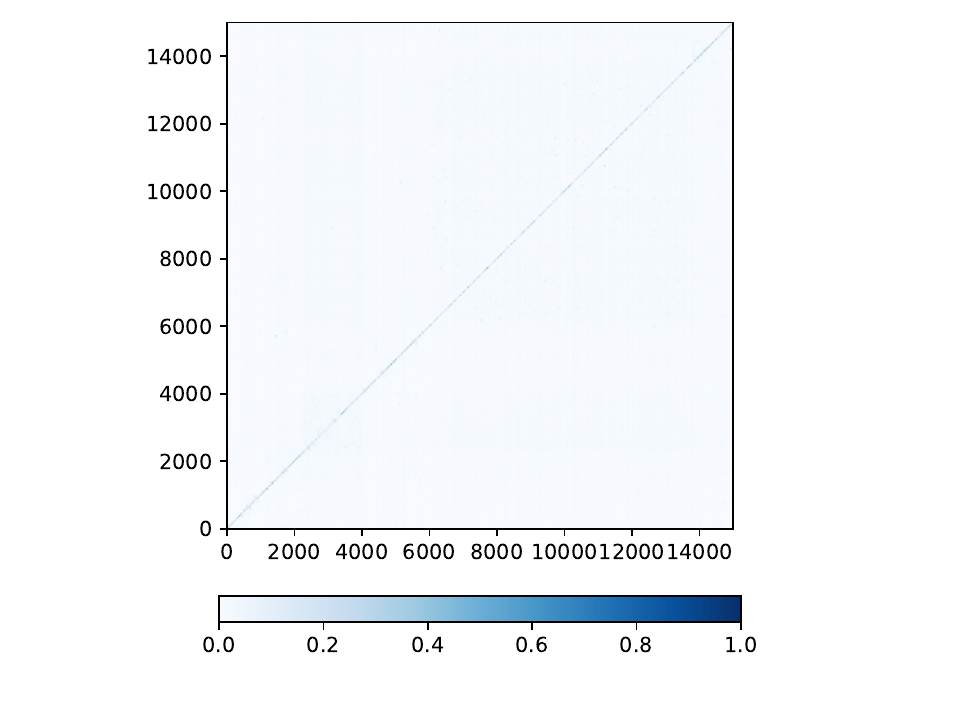}
    \end{subfigure}\hfill
    \begin{subfigure}[b]{0.3\linewidth}
           \includegraphics[trim=30pt 40pt 0pt 60pt, clip, width=\textwidth]{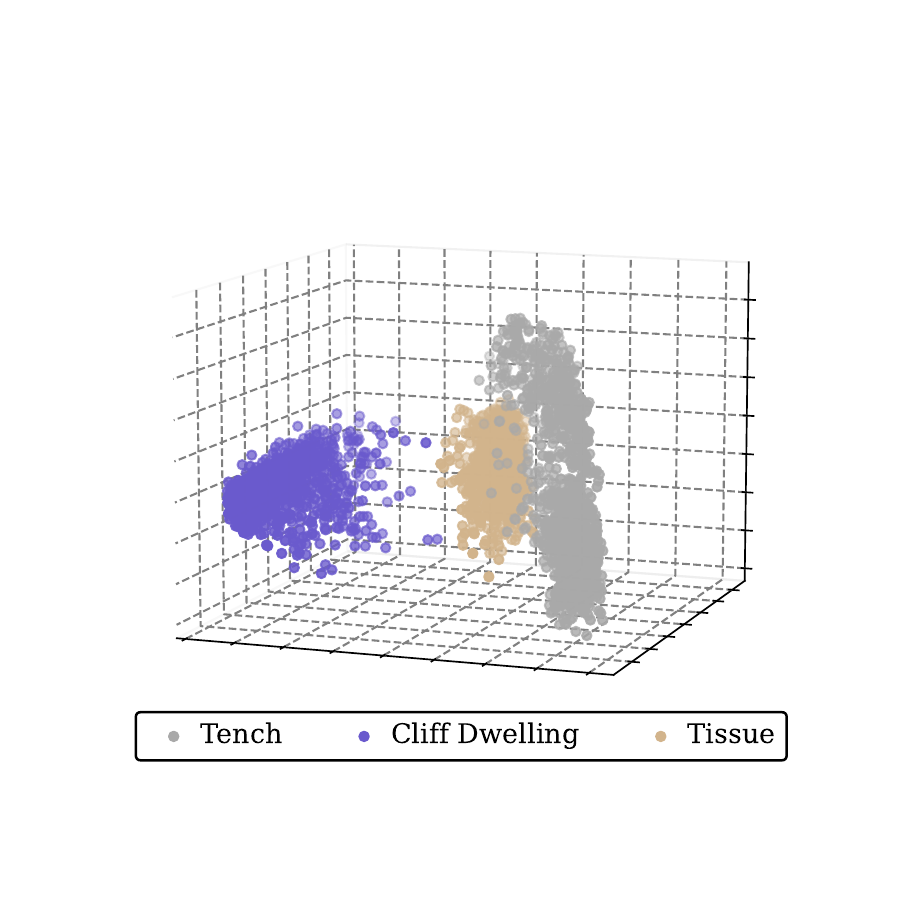}
    \end{subfigure}\hfill
    \begin{subfigure}[b]{0.3\linewidth}
            \includegraphics[trim=30pt 40pt 0pt 60pt, clip, width=\textwidth]{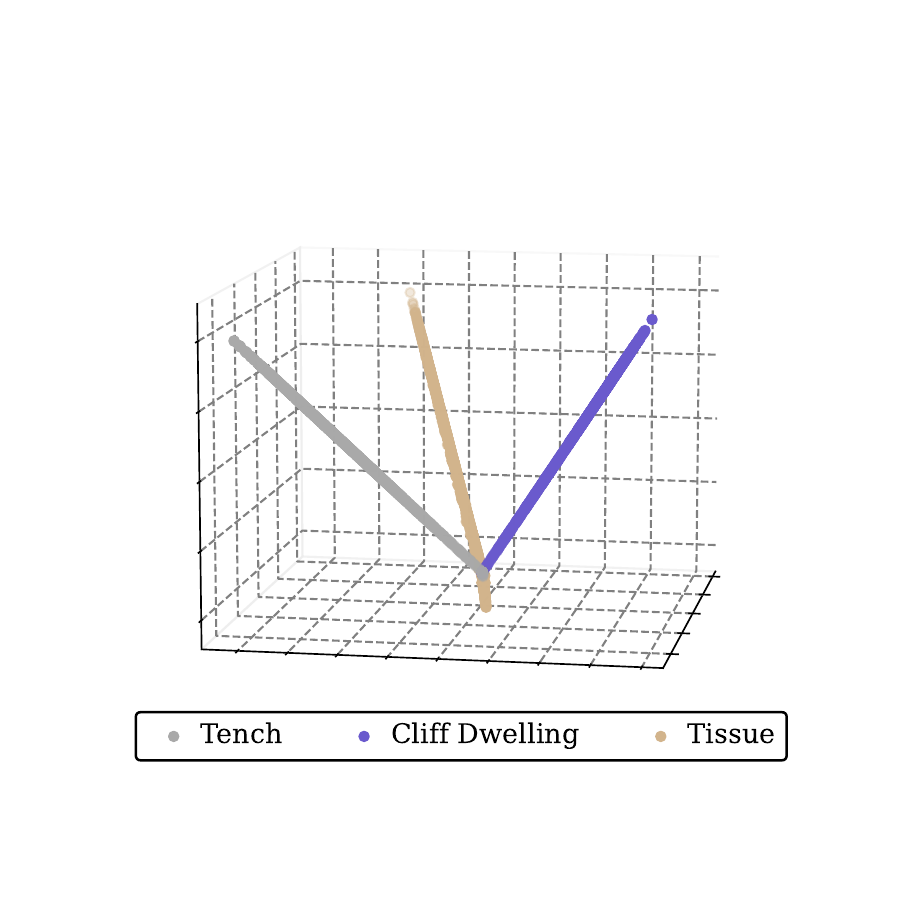}
    \end{subfigure}
\caption{ImageNet-1k}
\end{subfigure}
\caption{\textbf{Visualization of the union-of-orthogonal-subspaces structure of the learned representations via Gram matrix and PCA dimension reduction on three categories.} Left: $|\mX^\top\mX|$. Mid-left: $|\mZ^\top\mZ|$. Mid-right: $\mX_{(3)}$ via PCA. Right: $\mZ_{(3)}$ via PCA.}
\label{fig: supp gram and pca}
\vskip -0.15in
\end{figure*}

\myparagraph{Singular values visualization}
To show the intrinsic dimension of CLIP features and the representations of PRO-DSC, 
We plot the singular values of CLIP features and PRO-DSC's representations in Figure~\ref{fig: supp singular values}.
Specifically, the singular values of features from all the samples are illustrated on the left and the singular values of features within each class are illustrated on the middle and right.
As can be seen, the singular values of PRO-DSC decrease much slower than that of CLIP, implying that the features of PRO-DSC enjoy a higher intrinsic dimension and more isotropic structure in the ambient space.

\begin{figure*}[t]
\centering
\begin{subfigure}[b]{0.9\linewidth}  
    \begin{subfigure}[b]{0.333\linewidth}
           \includegraphics[trim=0pt 0pt 0pt 0pt, clip, width=\textwidth]{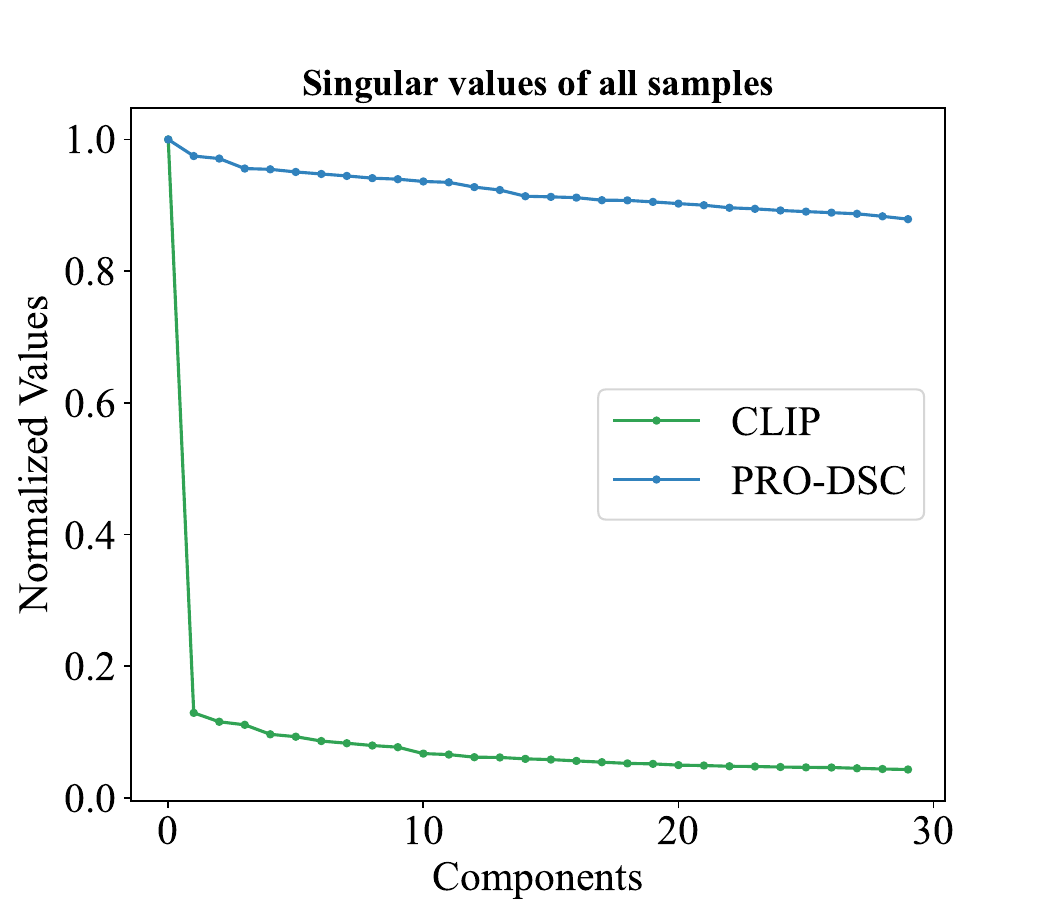}
    \end{subfigure}\hfill
    \begin{subfigure}[b]{0.333\linewidth}
           \includegraphics[trim=0pt 0pt 0pt 0pt, clip, width=\textwidth]{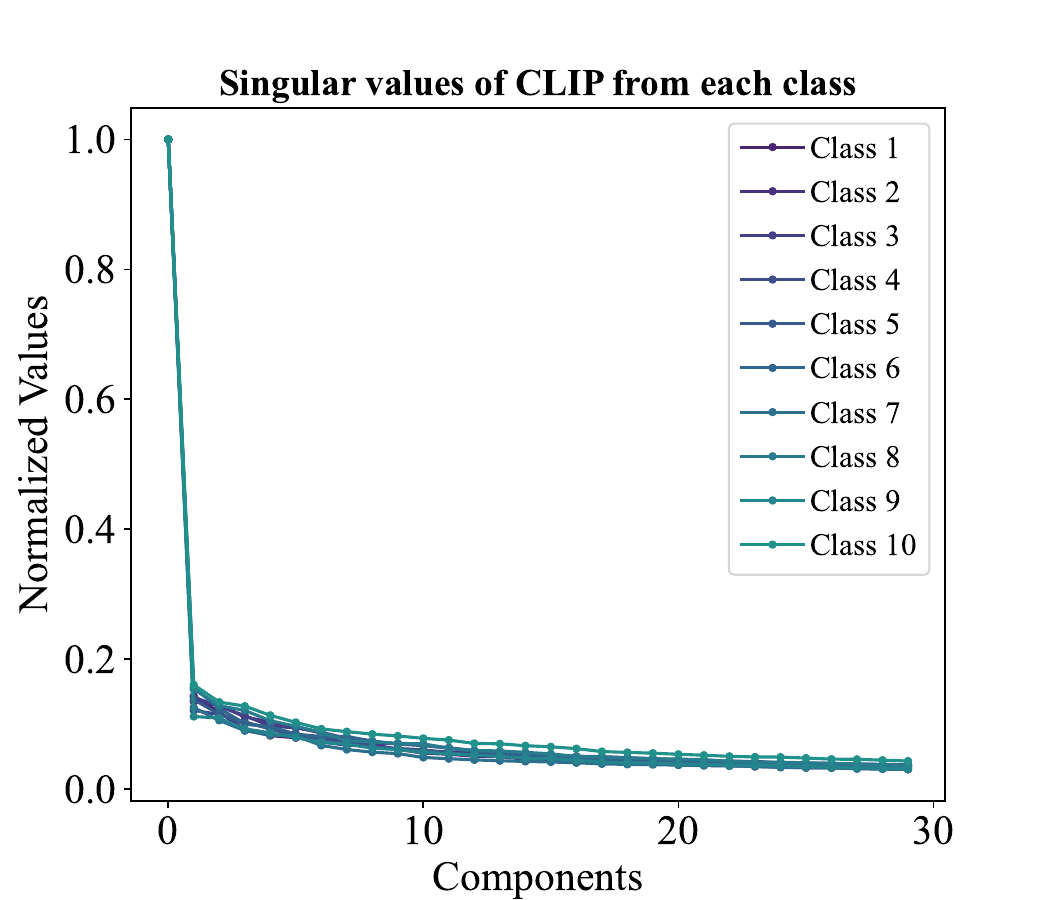}
    \end{subfigure}\hfill
    \begin{subfigure}[b]{0.333\linewidth}
           \includegraphics[trim=0pt 0pt 0pt 0pt, clip, width=\textwidth]{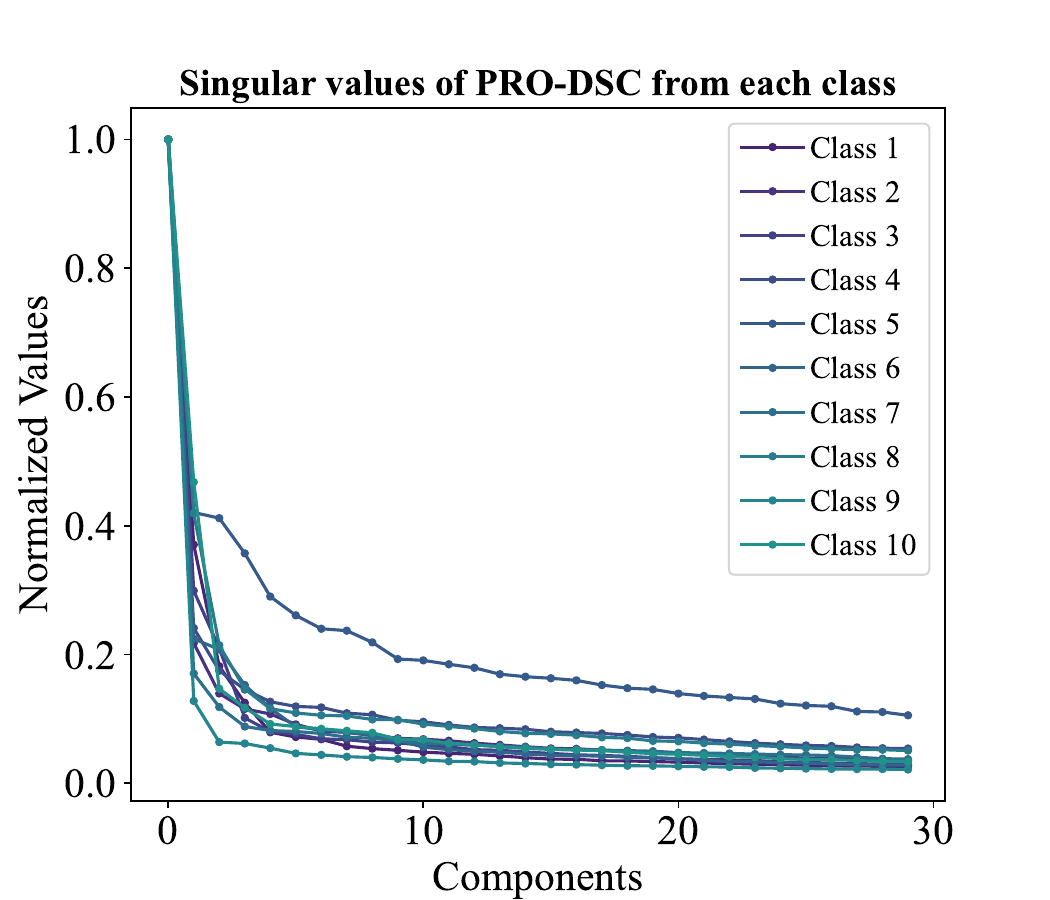}
    \end{subfigure}
\caption{CIFAR-100}
\end{subfigure}\\
\begin{subfigure}[b]{0.9\linewidth}  
    \begin{subfigure}[b]{0.333\linewidth}
           \includegraphics[trim=0pt 0pt 0pt 0pt, clip, width=\textwidth]{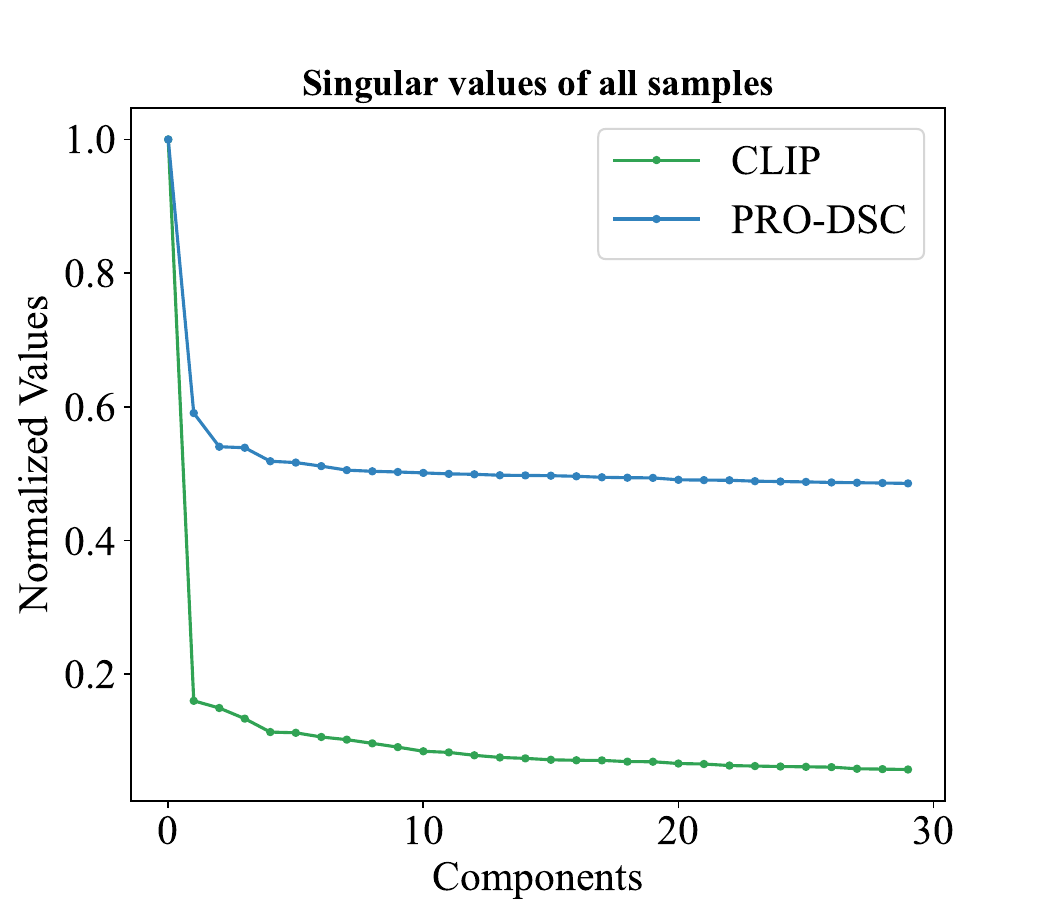}
    \end{subfigure}\hfill
    \begin{subfigure}[b]{0.333\linewidth}
           \includegraphics[trim=0pt 0pt 0pt 0pt, clip, width=\textwidth]{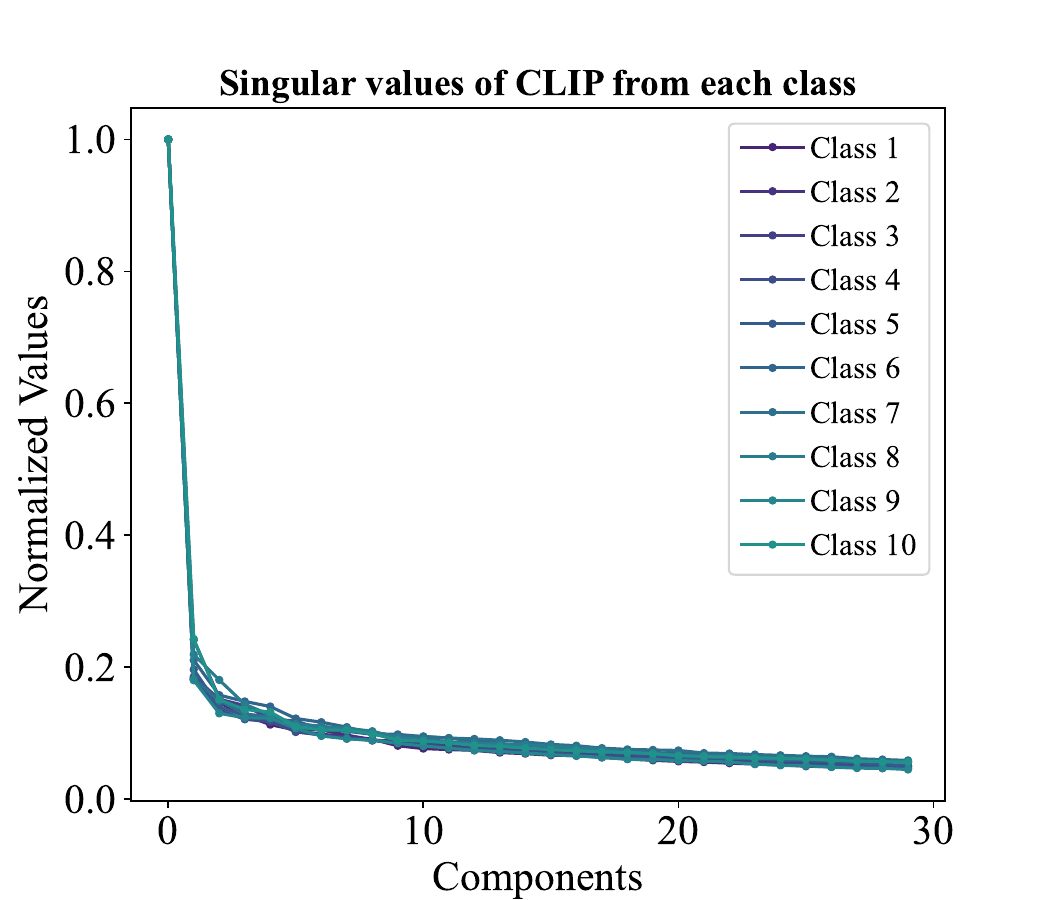}
    \end{subfigure}\hfill
    \begin{subfigure}[b]{0.333\linewidth}
           \includegraphics[trim=0pt 0pt 0pt 0pt, clip, width=\textwidth]{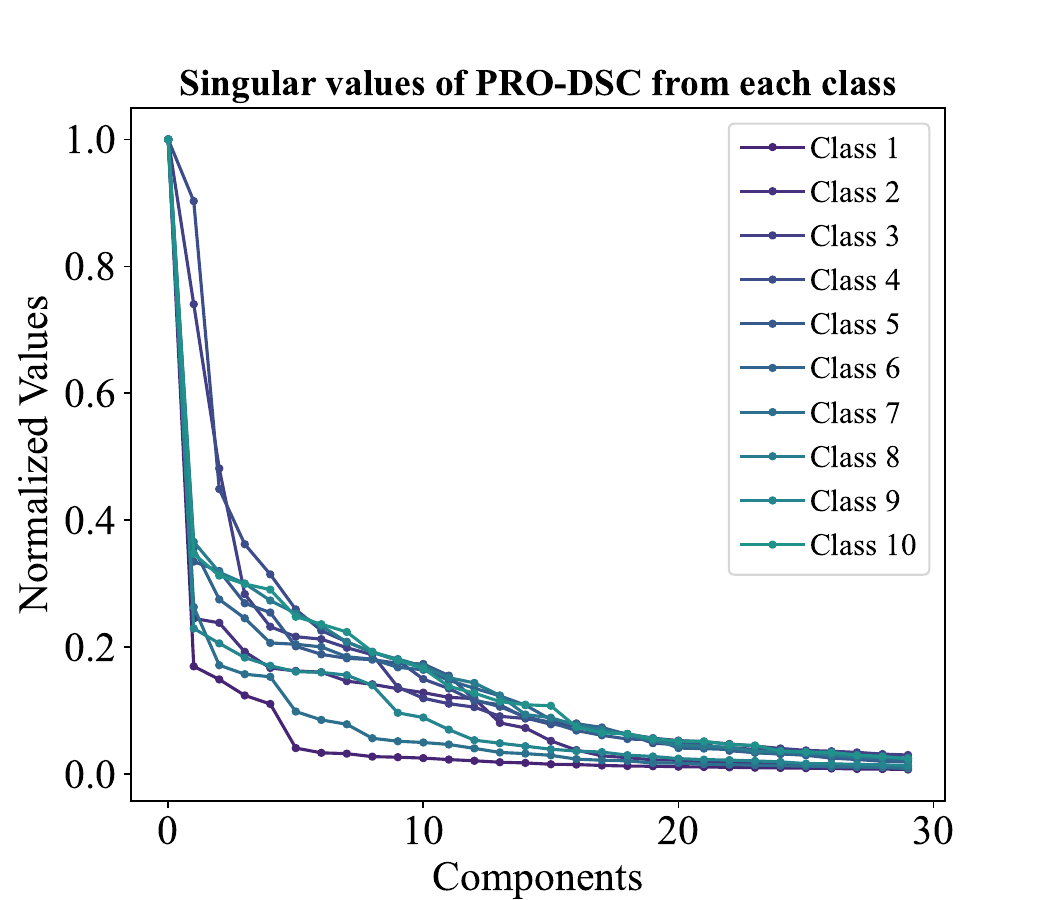}
    \end{subfigure}
\caption{TinyImageNet-200}
\end{subfigure}
\begin{subfigure}[b]{0.9\linewidth}  
    \begin{subfigure}[b]{0.333\linewidth}
           \includegraphics[trim=0pt 0pt 0pt 0pt, clip, width=\textwidth]{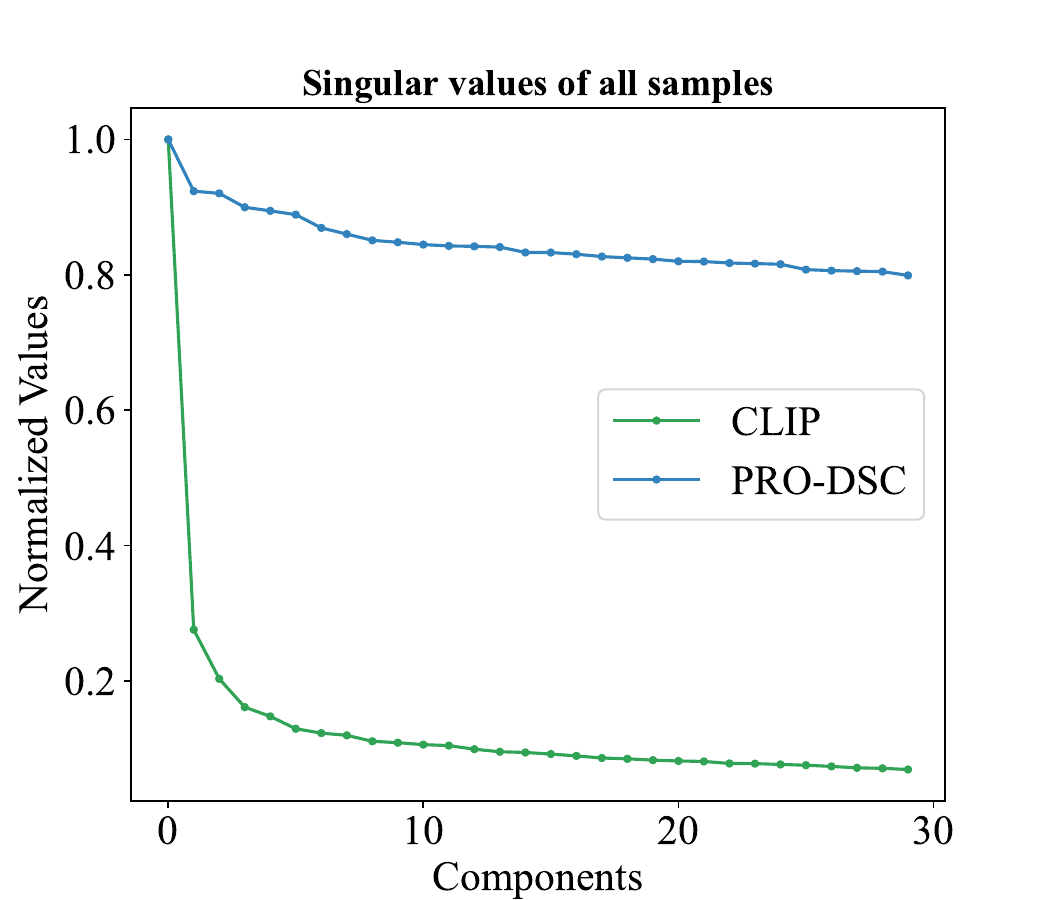}
    \end{subfigure}\hfill
    \begin{subfigure}[b]{0.333\linewidth}
           \includegraphics[trim=0pt 0pt 0pt 0pt, clip, width=\textwidth]{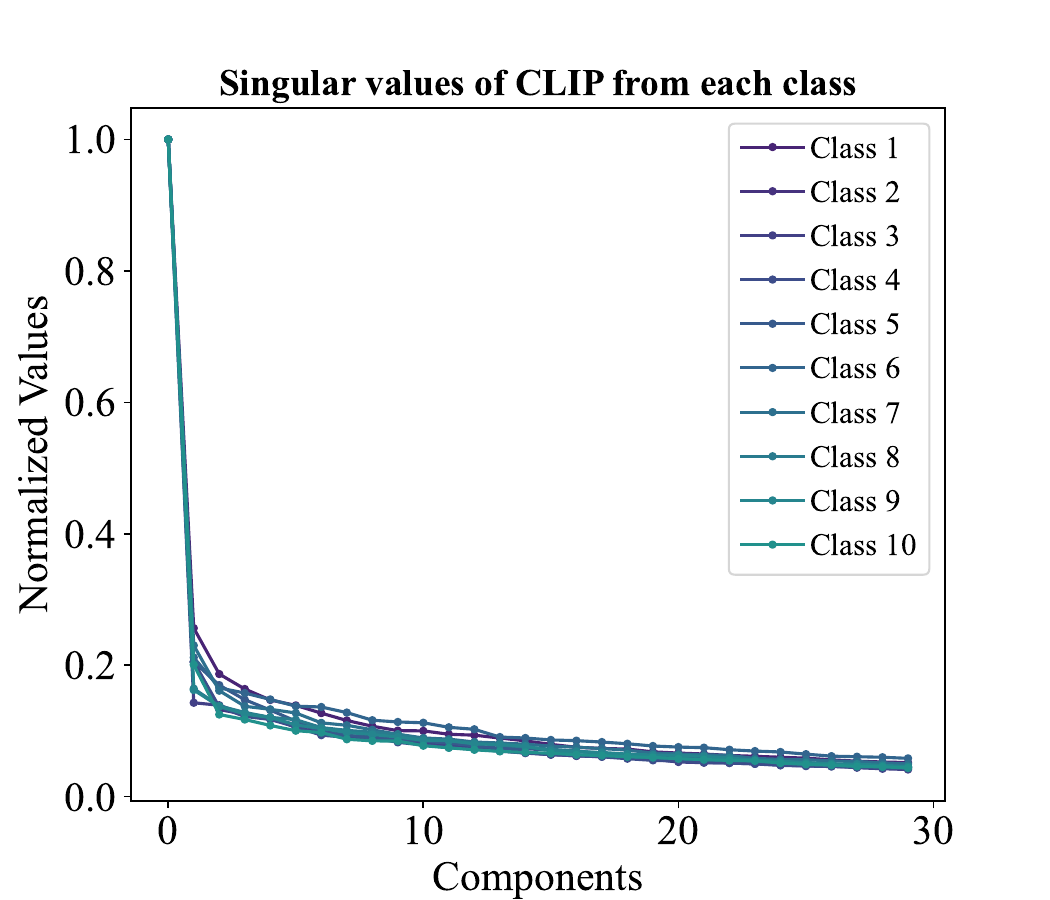}
    \end{subfigure}\hfill
    \begin{subfigure}[b]{0.333\linewidth}
           \includegraphics[trim=0pt 0pt 0pt 0pt, clip, width=\textwidth]{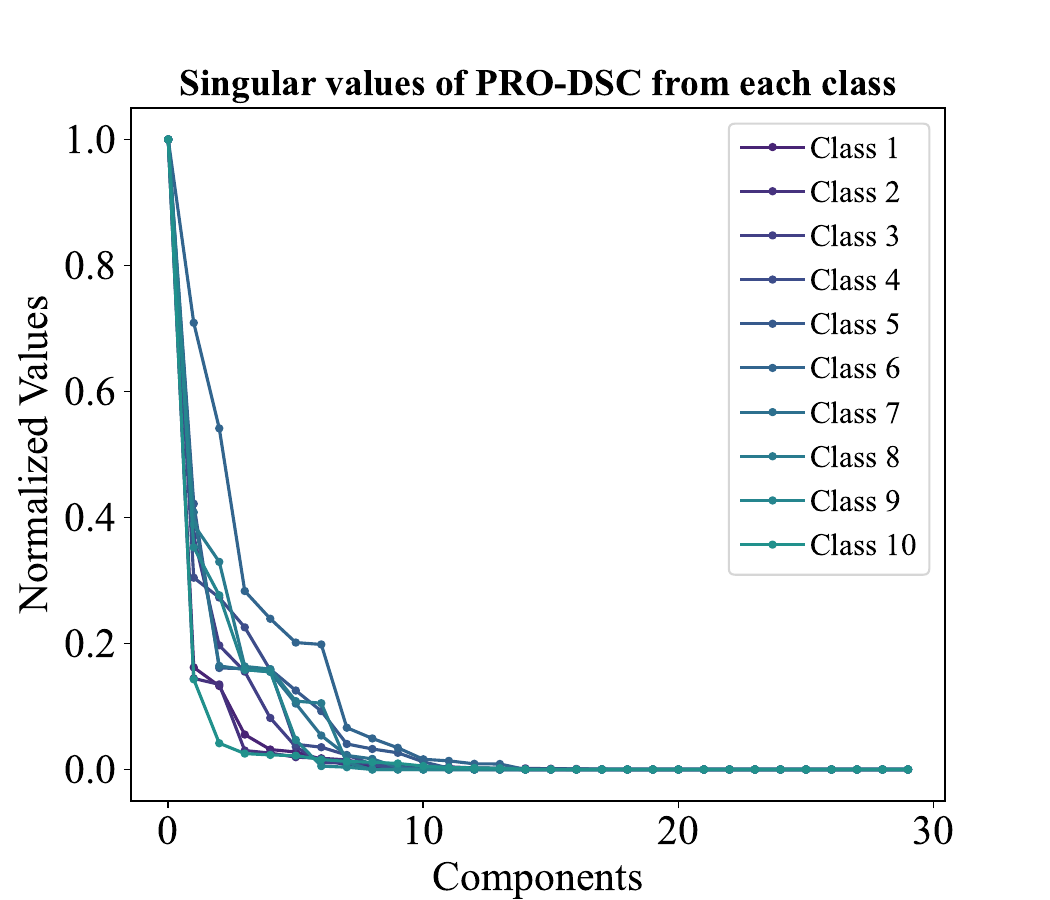}
    \end{subfigure}
\caption{ImageNet-1k}
\end{subfigure}
\caption{\textbf{Singular values of features from all samples (left) and features from each class (mid and right).} For the better clarity, we plot the singular values for the first ten classes.}
\label{fig: supp singular values}
\vskip -0.15in
\end{figure*}

\myparagraph{Learning curves}
\label{sec:loss curve}
We plot the learning curves with respect to loss values and performance of PRO-DSC on CIFAR-100, CIFAR-20 and ImageNet-1k in Figure~\ref{fig:curves-cifar20}, Figure~\ref{fig:curves-cifar100} and Figure~\ref{fig:curves-imagenet}, respectively.
Recall that $\mathcal{L}_{1} \coloneqq-\frac{1}{2}\log\det\left(\mI+\alpha\mZ_{\boldsymbol{\Theta}}^\top\mZ_{\boldsymbol{\Theta}}\right)$, $\mathcal{L}_{2} \coloneqq\frac{1}{2}\|\mZ_{\boldsymbol{\Theta}}-\mZ_{\boldsymbol{\Theta}}\mC_{\boldsymbol{\Psi}}\|_F^2$, and $\mathcal{L}_3\coloneqq \|\mA_{\boldsymbol{\Psi}}\|_{\boxk}$.
Since $\mathcal{L}_{1}$ is the only loss function used in the warm-up stage, we plot all the curves starting from the iteration when warm-up ends.
As illustrated, the clustering performance of PRO-DSC steadily increase as the loss values gradually decrease, which shows the effectiveness of the proposed loss functions in PRO-DSC.
\begin{figure}[ht]
    \centering
    \begin{subfigure}[b]{\linewidth}
           \includegraphics[width=\textwidth]{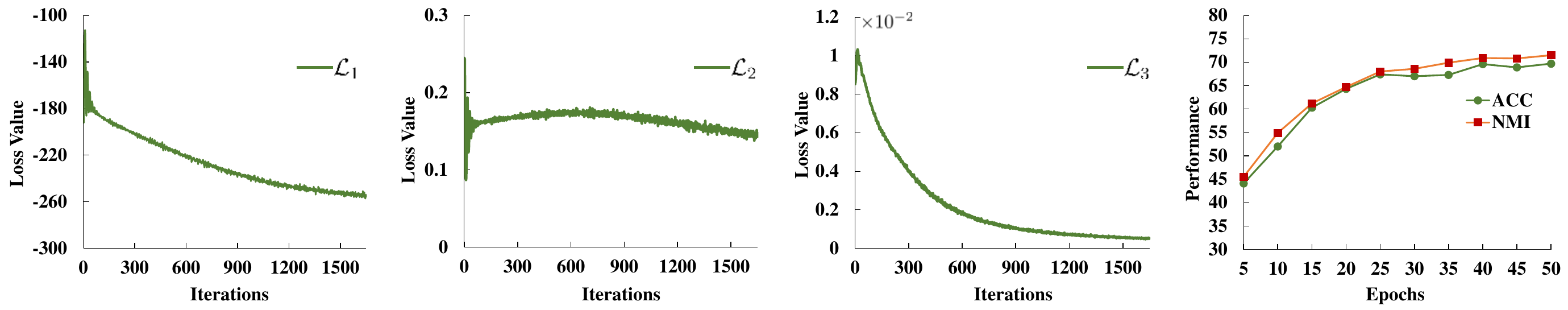}
            \caption{CIFAR-20}
            \label{fig:curves-cifar20}
    \end{subfigure}\\
    \begin{subfigure}[b]{\linewidth}
           \includegraphics[width=\textwidth]{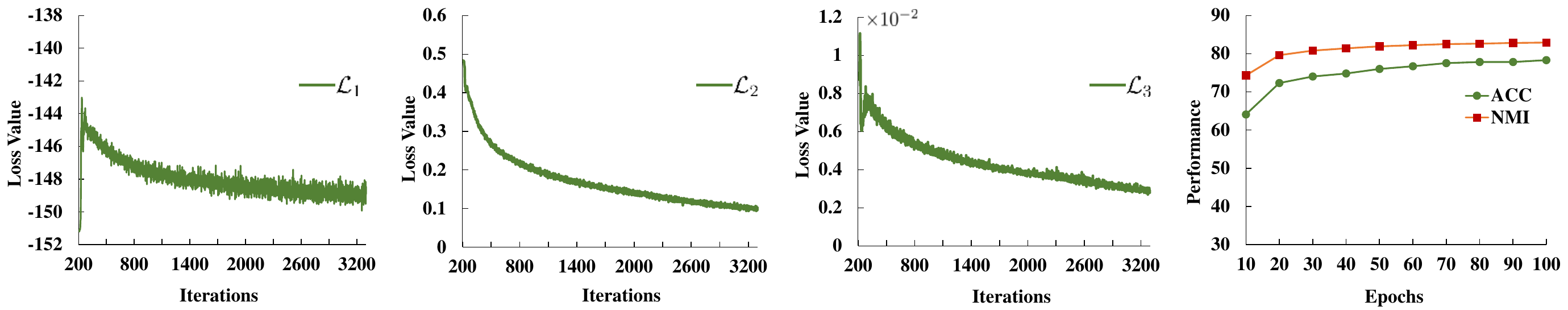}
            \caption{CIFAR-100}
            \label{fig:curves-cifar100}
    \end{subfigure}\\
     \begin{subfigure}[b]{\linewidth}
           \includegraphics[width=\textwidth]{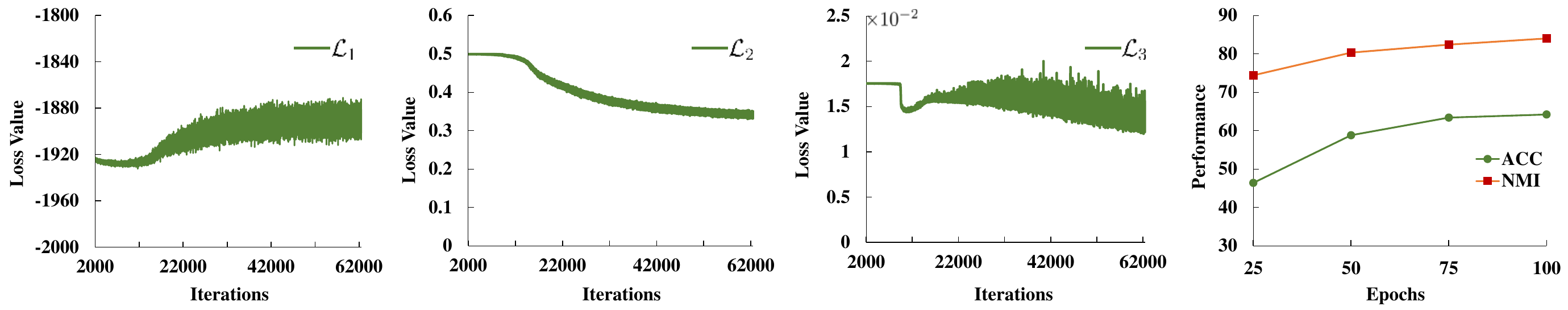}
            \caption{ImageNet-1k}
            \label{fig:curves-imagenet}
    \end{subfigure}
\caption{The learning curves w.r.t. loss values and evaluation performance of PRO-DSC on CIFAR-20, CIFAR-100 and ImageNet-1k dataset.}
\label{fig:curves}
\end{figure}

\myparagraph{$t$-SNE visualization of learned representations}
We visualize the CLIP features and cluster representations learned by PRO-DSC leveraging $t$-SNE~\citep{Van:JMLR08-tsne} in Figure~\ref{fig tsne}.
As illustrated, the learned cluster representations are significantly more compact compared with the CLIP features, which contributes to the improved clustering performance.
\begin{figure}[htbp]
    \centering
    \begin{subfigure}[b]{0.4\linewidth}
           \includegraphics[width=\textwidth]{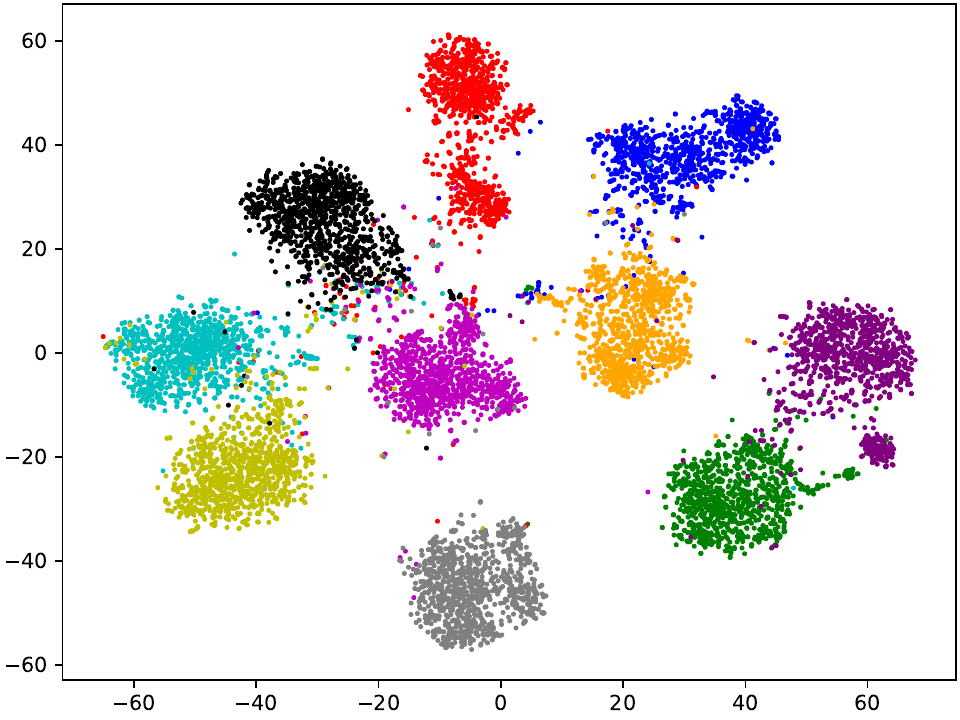}
            \caption{CLIP CIFAR-10}
            \label{}
    \end{subfigure}\hfill
     \begin{subfigure}[b]{0.4\linewidth}
           \includegraphics[width=\textwidth]{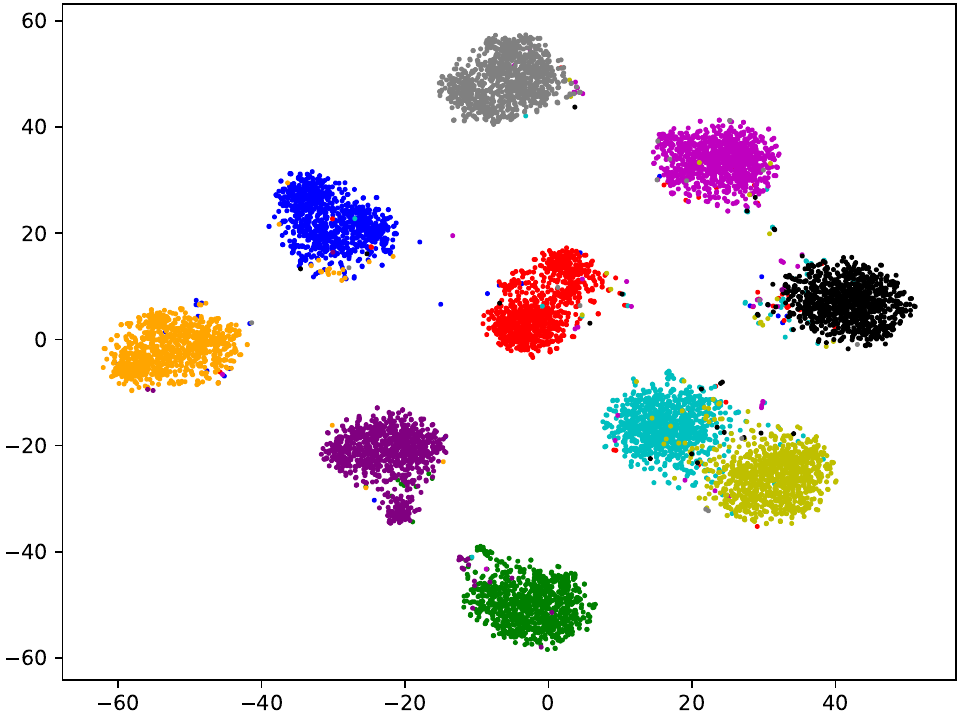}
            \caption{PRO-DSC CIFAR-10}
            \label{}
    \end{subfigure}\\
     \begin{subfigure}[b]{0.4\linewidth}
           \includegraphics[width=\textwidth]{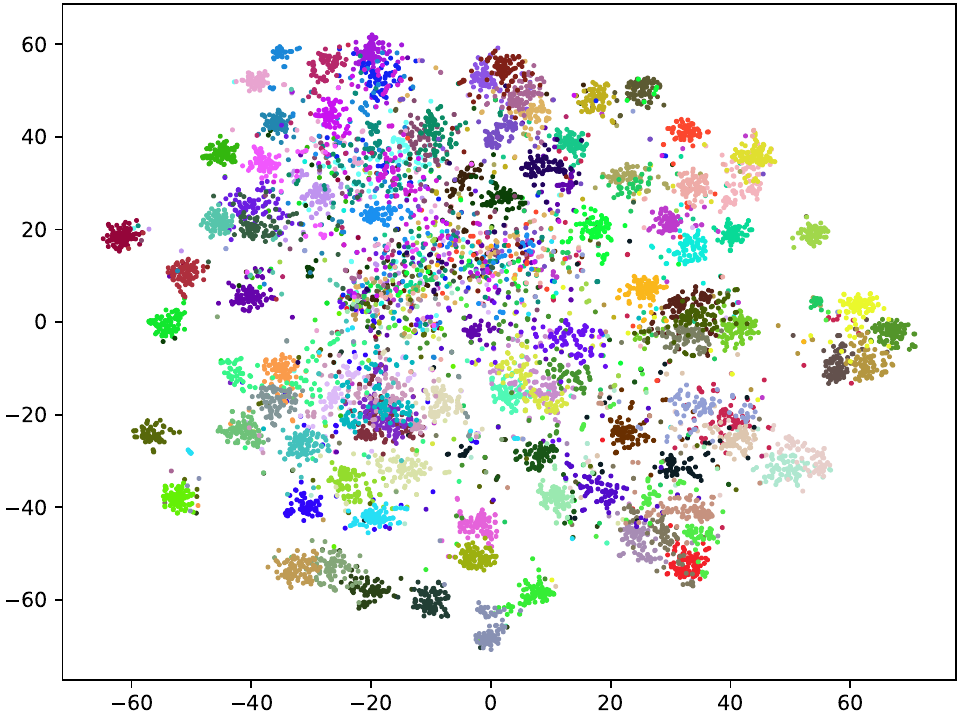}
            \caption{CLIP CIFAR-100}
            \label{}
    \end{subfigure}\hfill
         \begin{subfigure}[b]{0.4\linewidth}
           \includegraphics[width=\textwidth]{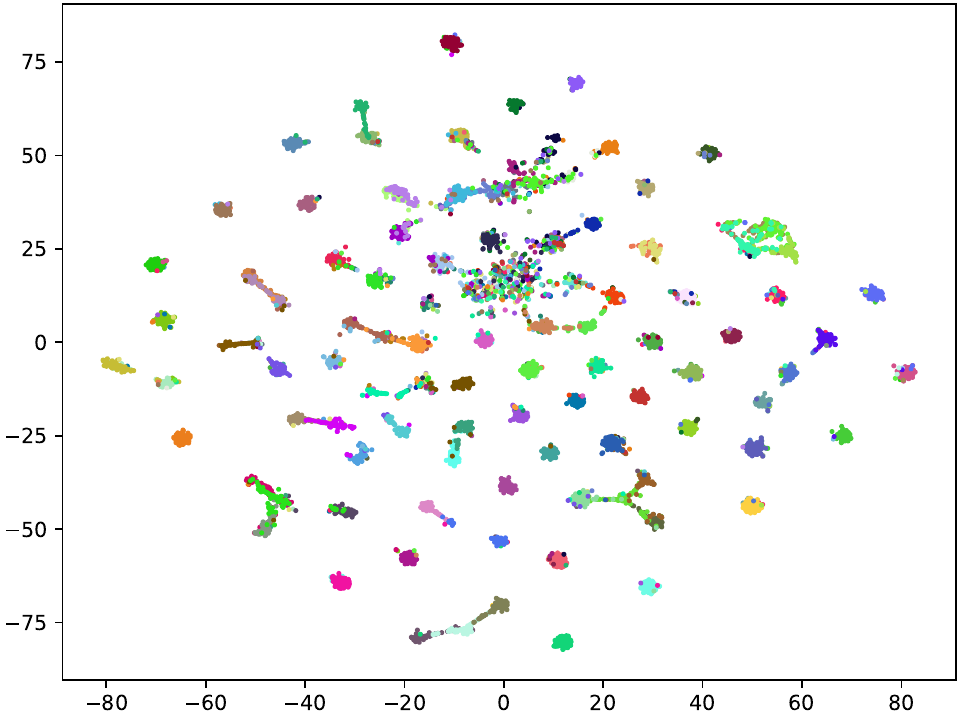}
            \caption{PRO-DSC CIFAR-100}
            \label{}
    \end{subfigure}
    
\caption{\textbf{$t$-SNE visualization of CLIP features and PRO-DSC's learned representations.} The experiments are conducted on CIFAR-10 and CIFAR-100 dataset.}
\label{fig tsne}
\end{figure}

\myparagraph{Subspace visualization}
\label{subspace visualization}
We visualize the principal components of subspaces learned by PRO-DSC in Figure~\ref{fig subspace PCA}.
For each cluster in the dataset, we apply Principal Component Analysis (PCA) to the learned representations. We select the top eight principal components to represent the learned subspaces. Then, for each principal component, we display eight images whose representations are most closely aligned with that principal component.

Interestingly, we can observe specific semantic meanings from the principal components learned by PRO-DSC.
For instance, the third row of Figure~\ref{cluster1} consists of stealth fighters, whereas the fifth row shows airliners.
The second row of Figure~\ref{cluster3} consists of birds standing and resting, while the sixth row shows flying eagles.
While Figure~\ref{cluster10} consists of all kinds of trucks, the first row shows fire trucks.
\begin{figure}[h]
\vskip -0.2in
    \centering
    \begin{subfigure}[b]{0.33\linewidth}
           \includegraphics[width=\textwidth]{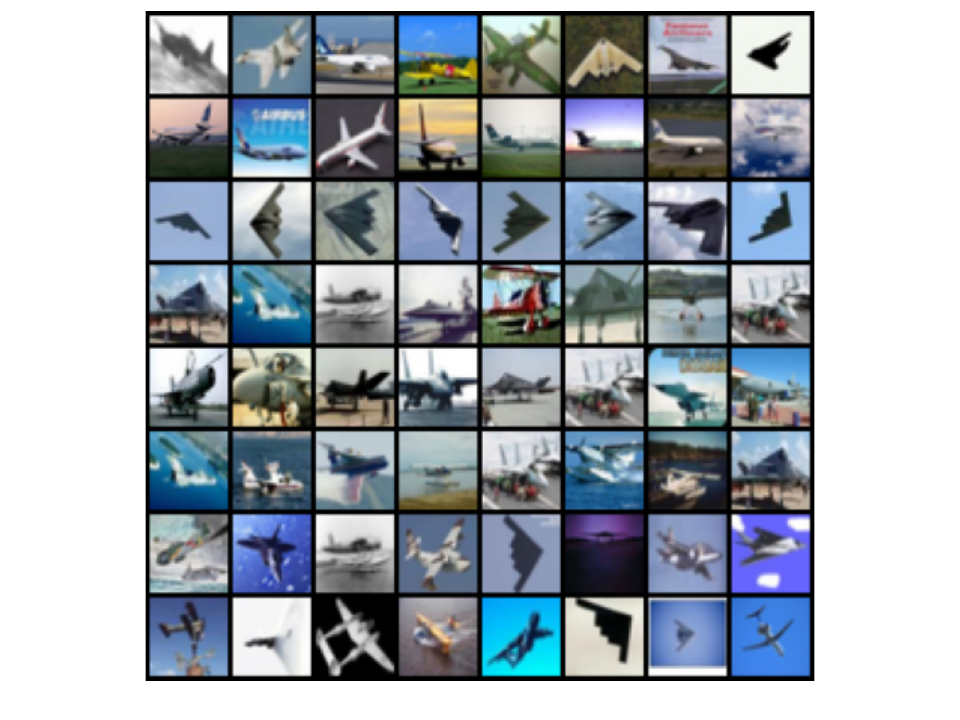}
            \caption{Cluster 1}
        \label{cluster1}
    \end{subfigure}\hfill
    \begin{subfigure}[b]{0.33\linewidth}
           \includegraphics[width=\textwidth]{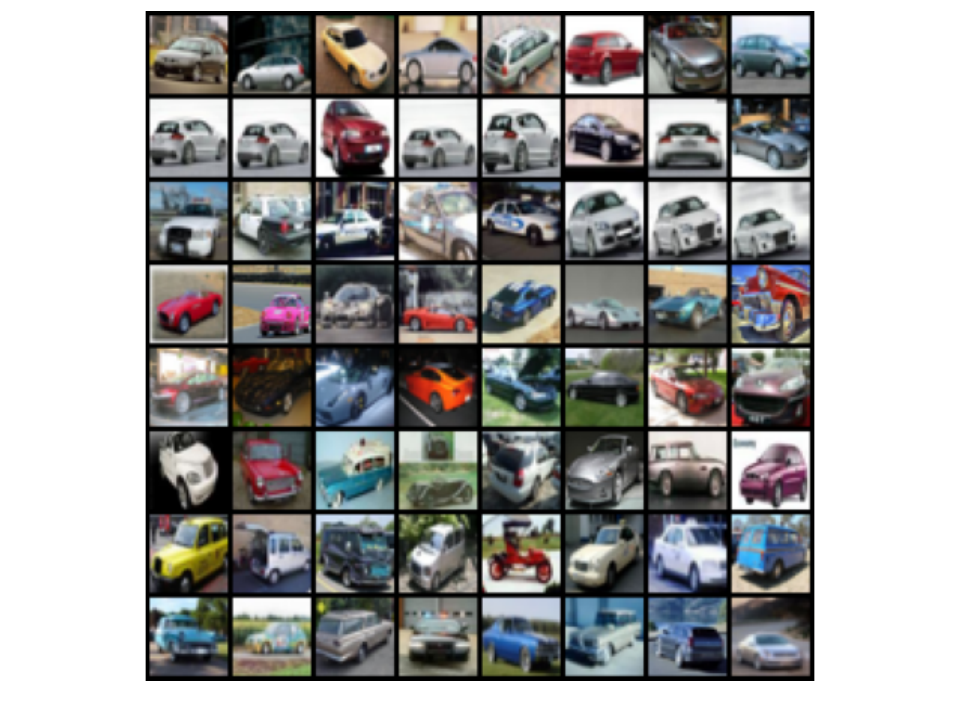}
            \caption{Cluster 2}
    \end{subfigure}\hfill
     \begin{subfigure}[b]{0.33\linewidth}
           \includegraphics[width=\textwidth]{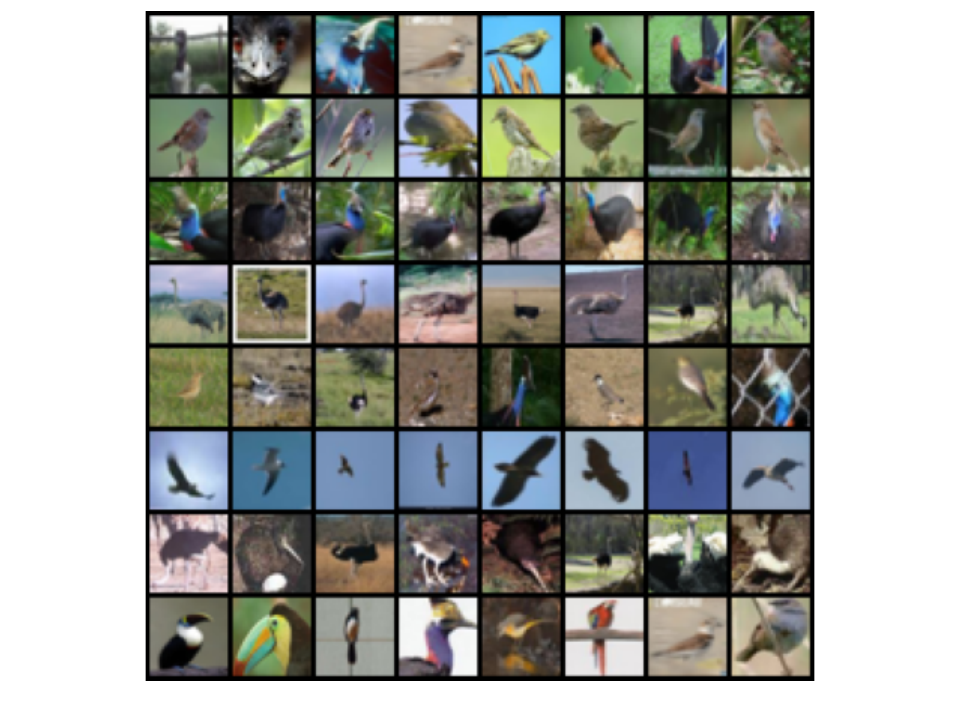}
            \caption{Cluster 3}
             \label{cluster3}
    \end{subfigure}\\
    \begin{subfigure}[b]{0.33\linewidth}
           \includegraphics[width=\textwidth]{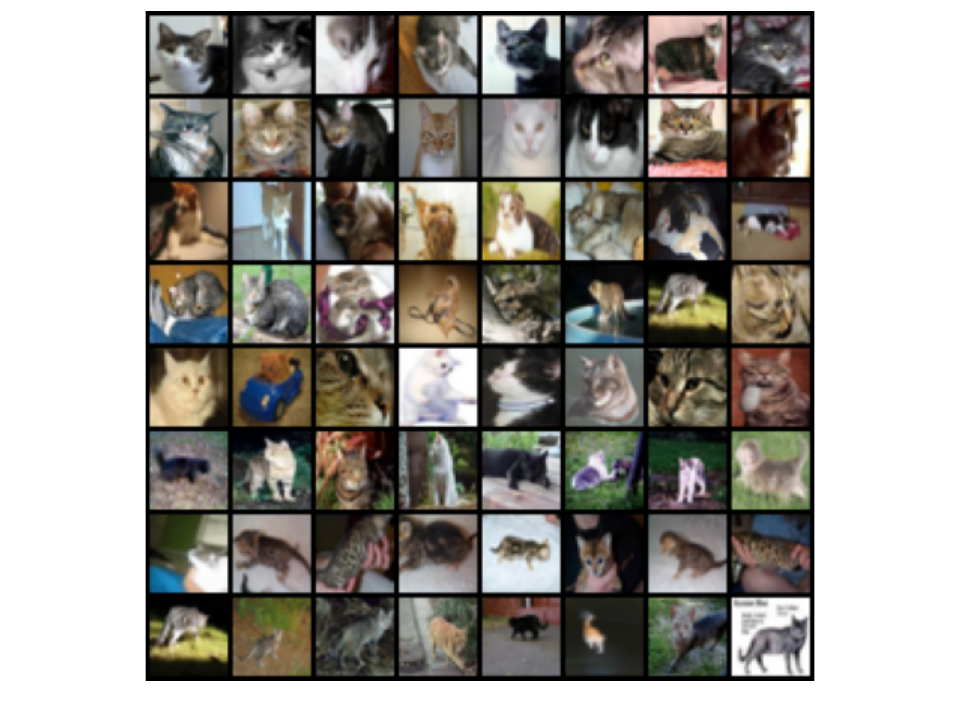}
            \caption{Cluster 4}
    \end{subfigure}\hfill
         \begin{subfigure}[b]{0.33\linewidth}
           \includegraphics[width=\textwidth]{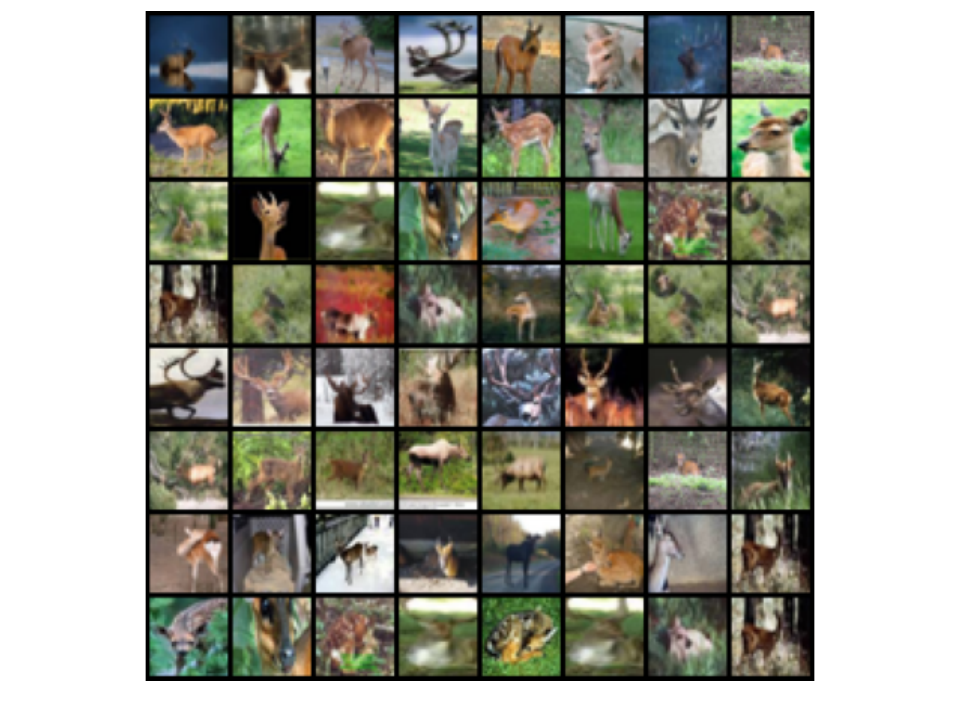}
            \caption{Cluster 5}
    \end{subfigure}\hfill
     \begin{subfigure}[b]{0.33\linewidth}
           \includegraphics[width=\textwidth]{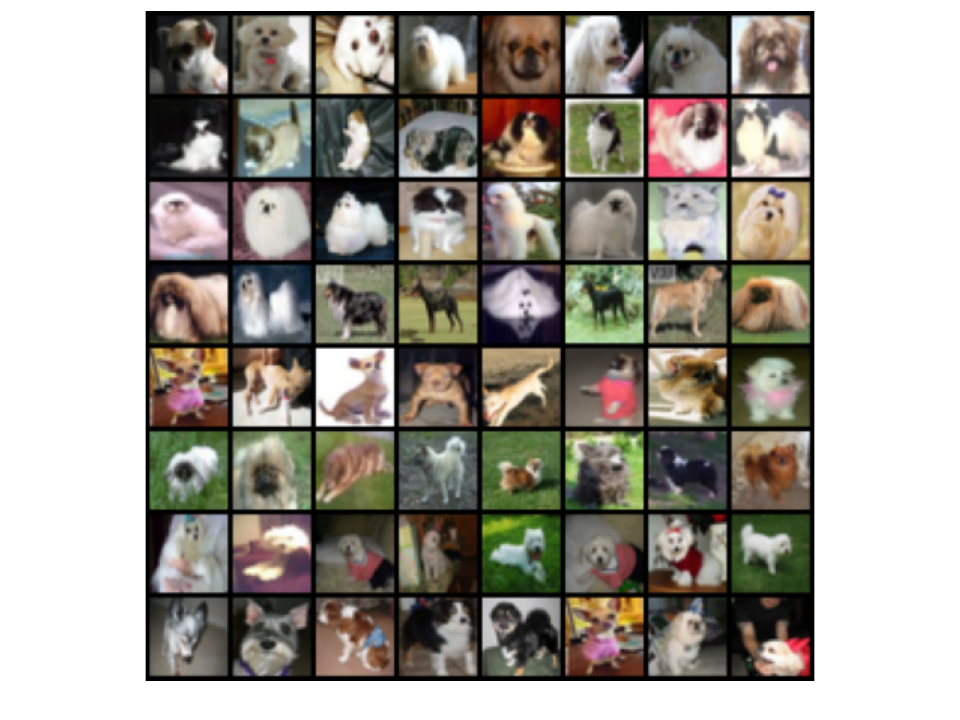}
            \caption{Cluster 6}
    \end{subfigure}\\
    \begin{subfigure}[b]{0.33\linewidth}
           \includegraphics[width=\textwidth]{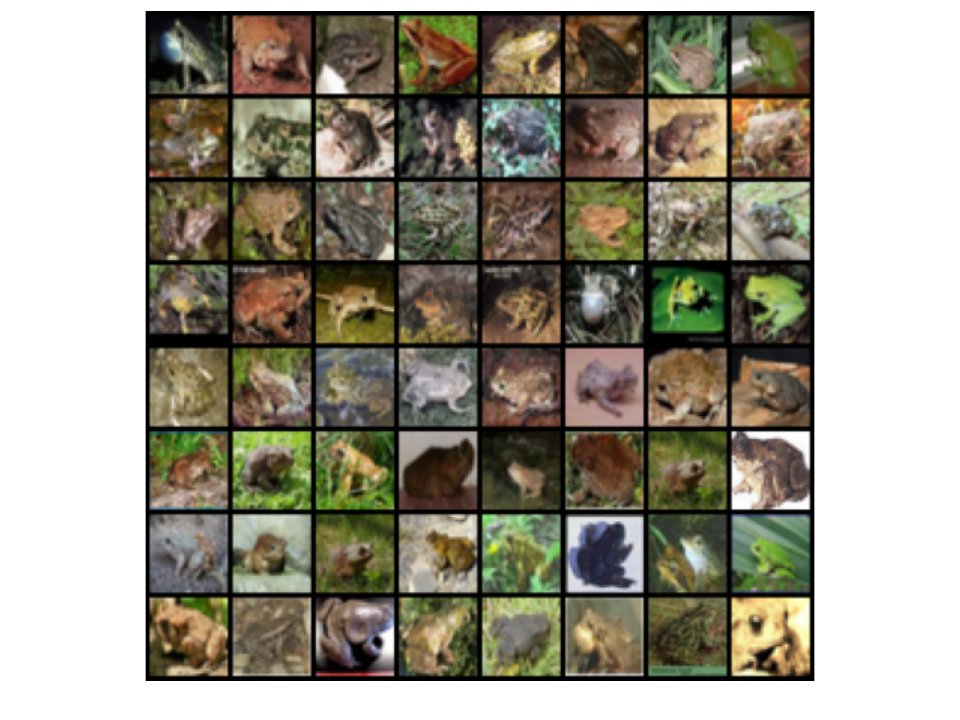}
            \caption{Cluster 7}
    \end{subfigure}\hfill
    \begin{subfigure}[b]{0.33\linewidth}
           \includegraphics[width=\textwidth]{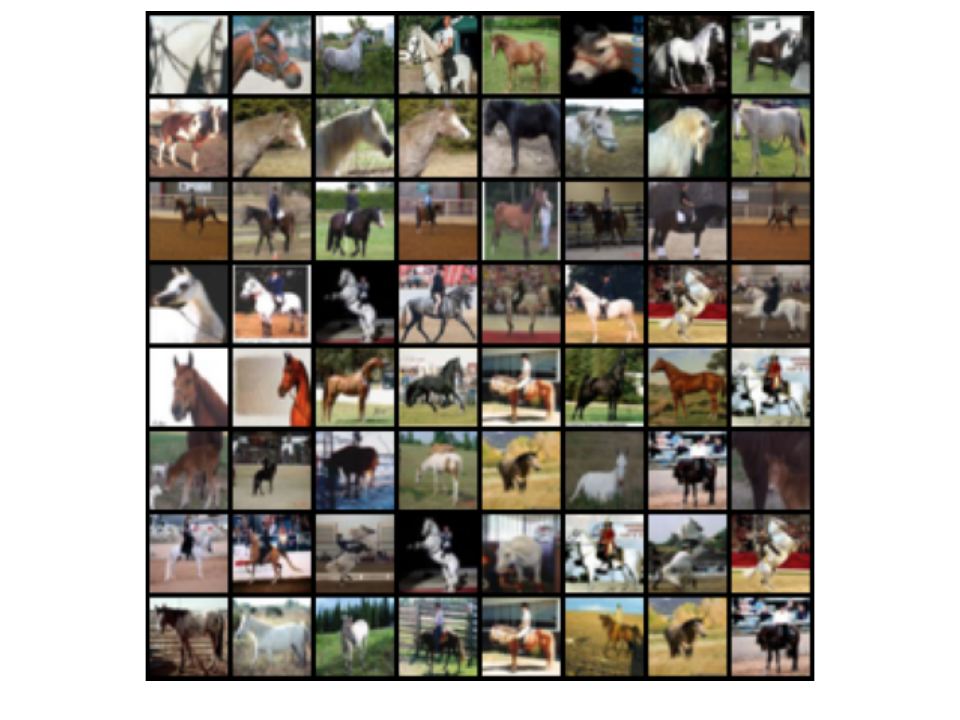}
            \caption{Cluster 8}
    \end{subfigure}\hfill
    \begin{subfigure}[b]{0.33\linewidth}
           \includegraphics[width=\textwidth]{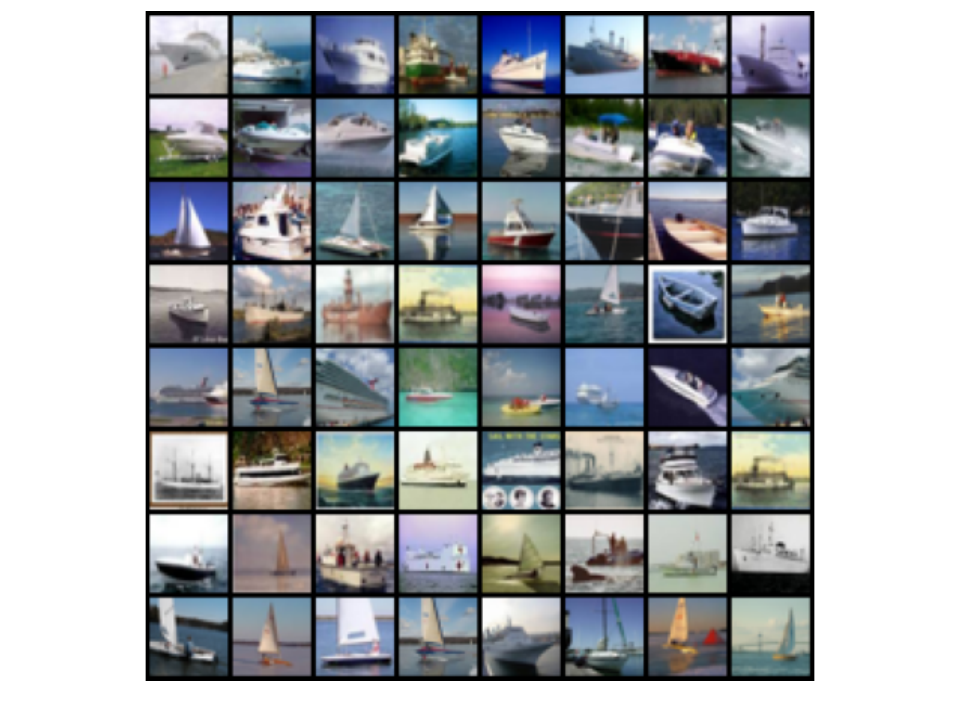}
            \caption{Cluster 9}
    \end{subfigure}\\
    \begin{subfigure}[b]{0.33\linewidth}
           \includegraphics[width=\textwidth]{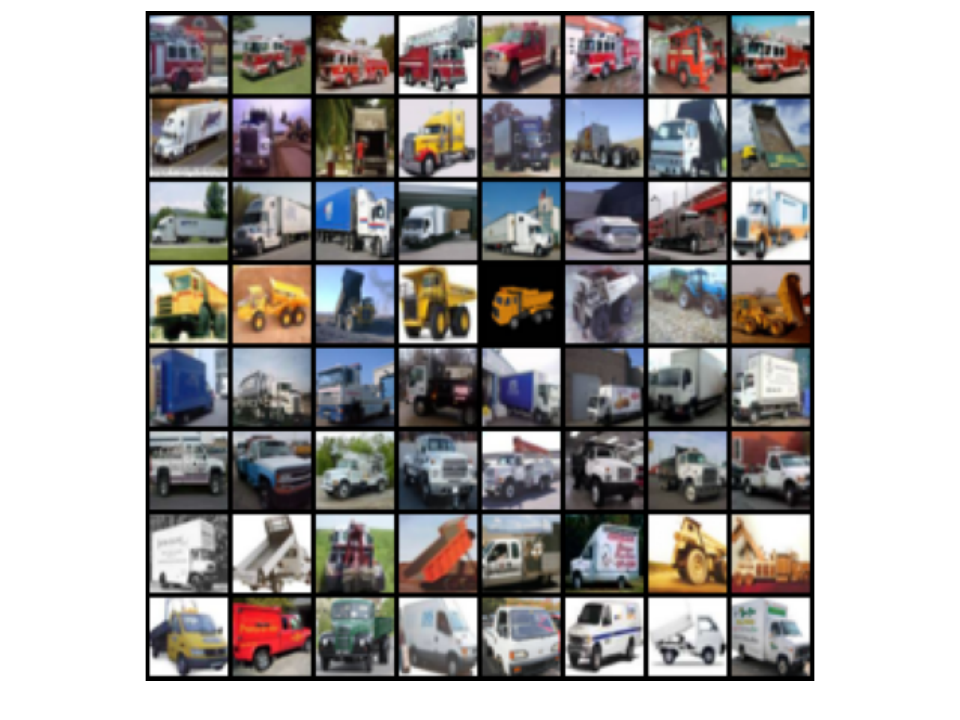}
            \caption{Cluster 10}
             \label{cluster10}
    \end{subfigure}
\caption{\textbf{Visualization of the principal components in CIFAR-10 dataset.} For each cluster, we display the most similar images to its principal components.}
\label{fig subspace PCA}
    \vskip -0.2in
\end{figure}

\section{Limitations and Failure Cases}
\label{sec:limitation}
\textbf{Limitations: } In this paper, we explore an effective framework for deep subspace clustering with theoretical justification. However, it is not clear how to develop the geometric guarantee for our PRO-DSC framework to yield correctly subspace-preserving solution. Moreover, it is an unsupervised learning framework, we left the extension to semi-supervised setting as future work. 

\textbf{Failure Cases: } In this paper, we evaluate our PRO-DSC framework on four scenarios of synthetic data (Fig. \ref{fig:Synthetic Exp} and \ref{Fig:syn_supp}), six benchmark datasets with CLIP features (Table \ref{tab:Benchmark-CLIP}), five benchmark datasets with BYOL pre-trained features (Table \ref{tab:Benchmark-scratch}), three out-of-domain datasets (Table \ref{out of domain table}), using four different regularization terms (Table \ref{ablation table}), using different feature extractor (Table \ref{dino table}) and varying hyper-parameters (Fig. \ref{fig:sensi} and Table \ref{tab:BDR-k}). We also conduct experiments on two face image datasets (Table ~\ref{tab:EYaleB}), text and temporal dataset (Table~\ref{tab:tab-reuters_UCI}). 
However, as demonstrated in Fig. \ref{fig:theory validation}, our PRO-DSC will fail if the sufficient condition to prevent catastrophic collapse is not satisfied by using improper hyper-parameters $\gamma$ and $\alpha$.

\textbf{Extensibility: } As a general framework for self-expressive model based deep subspace clustering, our PRO-DSC is reasonable, scalable and flexible to miscellaneous extensions. 
For example, 
rather than using $\log \det(\cdot)$, there are other methods to solve the feature collapse issue, \eg, the nuclear norm. In addition, it is also worthwhile to incorporate the supervision information from the pseudo-label, \eg, \citep{Huang:TPAMI23-Propos, Jia:ICLR25-CDC, Li:TIP17}, for further improving the performance of our PRO-DSC.

\end{document}